\definecolor{orange}{HTML}{ff6c0c}
\definecolor{blue}{HTML}{1f77b4}
\definecolor{Gray}{gray}{0.85}
\definecolor{LightCyan}{rgb}{0.88,1,1}
\def\@onedot{\ifx\@let@token.\else.\null\fi\xspace}
\DeclareRobustCommand\onedot{\futurelet\@let@token\@onedot}
\definecolor{blue1}{RGB}{0,128,255}
\definecolor{blue3}{RGB}{0,0,128}
\definecolor{darkpastelgreen}{rgb}{0.01, 0.75, 0.24}
\definecolor{cerulean}{rgb}{0.0, 0.48, 0.65}
\newcommand*{\tran}{^{\mkern-1.5mu\mathsf{T}}}
\newcommand{\mbb}[1]{\mathbb{#1}}
\newcommand{\mcal}[1]{\mathcal{#1}}
\def\eg{\emph{e.g}\onedot}
\def\ie{\emph{i.e}\onedot}
\def\cf{\emph{cf}\onedot}
\def\vs{\emph{vs}\onedot}
\definecolor{darkgreen}{rgb}{0,0.6,0}
\newtheorem{theorem}{Theorem}
\newtheorem{definition}{Definition}
\newtheorem{lemma}{Lemma}
\newtheorem{remark}{Remark}
\def\eqref#1{equation~\ref{#1}}
\def\1{\bm{1}}
\def\rvw{{\mathbf{w}}}
\def\rvx{{\mathbf{x}}}
\def\rvy{{\mathbf{y}}}
\def\rvz{{\mathbf{z}}}
\def\vtheta{{\bm{\theta}}}
\def\vphi{{\bm{\phi}}}
\def\ve{{\bm{e}}}
\def\vf{{\bm{f}}}
\def\vh{{\bm{h}}}
\def\vs{{\bm{s}}}
\def\mA{{\bm{A}}}
\def\mG{{\bm{G}}}
\def\mH{{\bm{H}}}
\def\mI{{\bm{I}}}
\def\mQ{{\bm{Q}}}
\DeclareMathAlphabet{\mathsfit}{\encodingdefault}{\sfdefault}{m}{sl}
\SetMathAlphabet{\mathsfit}{bold}{\encodingdefault}{\sfdefault}{bx}{n}
\newcommand{\ud}{\mathop{}\!\mathrm{d}}
\newcommand{\norm}[1]{\left\lVert#1\right\rVert}
\newenvironment{customthm}[1]
{\innercustomthm}
{\endinnercustomthm}
\icmltitlerunning{Consistency Models}
\begin{document}

\twocolumn[
\icmltitle{Consistency Models}

\icmlsetsymbol{equal}{*}

\begin{icmlauthorlist}
\icmlauthor{Yang Song}{a}
\icmlauthor{Prafulla Dhariwal}{a}
\icmlauthor{Mark Chen}{a}
\icmlauthor{Ilya Sutskever}{a}
\end{icmlauthorlist}

\icmlaffiliation{a}{OpenAI, San Francisco, CA 94110, USA}
\icmlcorrespondingauthor{Yang Song}{songyang@openai.com}

\icmlkeywords{generative models, diffusion, consistency models, score-based generative models, score matching}

\vskip 0.3in
]

\printAffiliationsAndNotice{}  %

\begin{abstract}
    Diffusion models have significantly advanced the fields of image, audio, and video generation, but they depend on an iterative sampling process that causes slow generation. To overcome this limitation, we propose \emph{consistency models}, a new family of models that generate high quality samples by directly mapping noise to data. They support fast one-step generation by design, while still allowing multistep sampling to trade compute for sample quality. They also support zero-shot data editing, such as image inpainting, colorization, and super-resolution, without requiring explicit training on these tasks. Consistency models can be trained either by distilling pre-trained diffusion models, or as standalone generative models altogether. Through extensive experiments, we demonstrate that they outperform existing distillation techniques for diffusion models in one- and few-step sampling, achieving the new state-of-the-art FID of 3.55 on CIFAR-10 and 6.20 on ImageNet $64\times 64$ for one-step generation. When trained in isolation, consistency models become a new family of generative models that can outperform existing one-step, non-adversarial generative models on standard benchmarks such as CIFAR-10, ImageNet $64\times 64$ and LSUN $256\times 256$. %
\end{abstract}
\section{Introduction}\label{sec:intro}
\definecolor{model}{rgb}{0.7098,0.2235,0.2549}
\definecolor{ode}{rgb}{0.7216,0.8078,0.5569}

\begin{figure}[t]
    \centering
    \includegraphics[width=\linewidth]{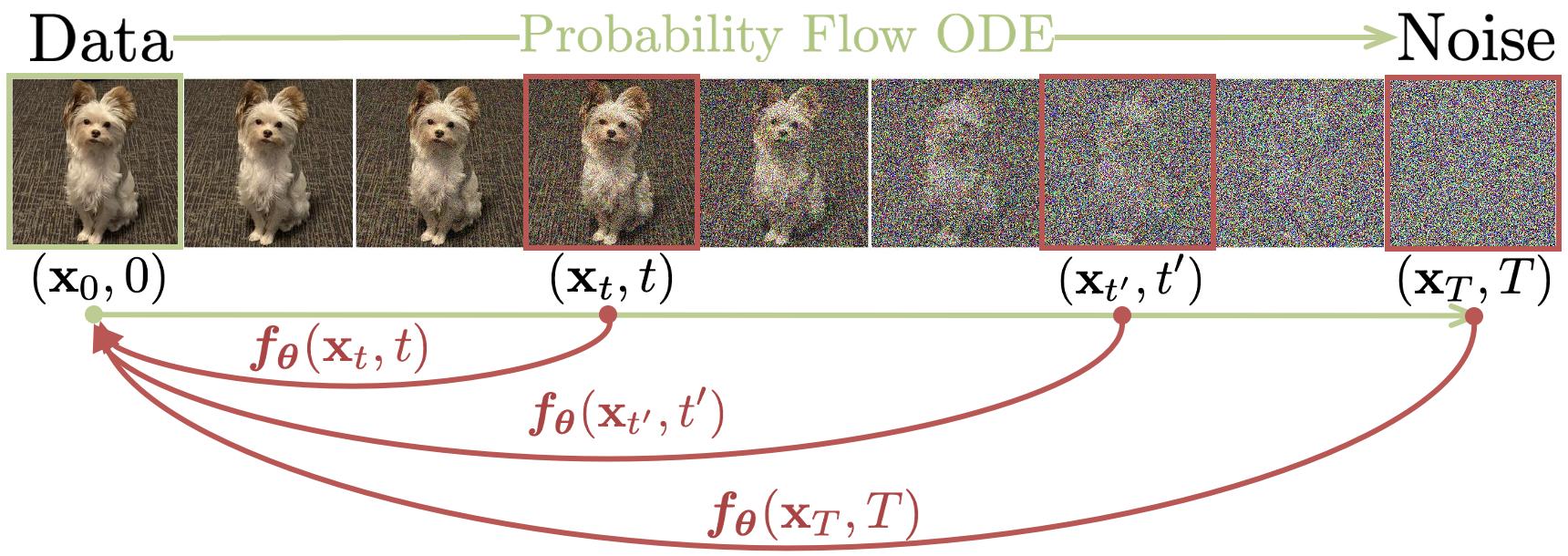}
    \caption{Given a {\color{ode} Probability Flow (PF) ODE} that smoothly converts data to noise, we learn to map any point (\eg, $\rvx_t$, $\rvx_{t'}$, and $\rvx_T$) on the ODE trajectory to its origin (\eg, $\rvx_0$) for generative modeling. Models of these mappings are called {\color{model}consistency models}, as their outputs are trained to be consistent for points on the same trajectory.}
    \label{fig:scheme}
\end{figure}

Diffusion models \cite{sohl2015deep,song2019generative,song2020improved,ho2020denoising,song2021scorebased}, also known as score-based generative models, have achieved unprecedented success across multiple fields, including image generation \cite{dhariwal2021diffusion,nichol2021glide,ramesh2022hierarchical,saharia2022photorealistic,rombach2022high}, audio synthesis \cite{kong2020diffwave,chen2021wavegrad,popov2021grad}, and video generation \cite{ho2022video,ho2022imagen}. %
A key feature of diffusion models is the iterative sampling process which progressively removes noise from random initial vectors. This iterative process provides a flexible trade-off of compute and sample quality, as using extra compute for more iterations usually yields samples of better quality. It is also the crux of many zero-shot data editing capabilities of diffusion models, enabling them to solve challenging inverse problems ranging from image inpainting, colorization, stroke-guided image editing, to Computed Tomography and Magnetic Resonance Imaging \cite{song2019generative,song2021scorebased,song2021medical,song2023pseudoinverseguided,kawar2021snips,kawar2022denoising,chung2023diffusion,meng2021sdedit}. However, compared to single-step generative models like GANs \cite{goodfellow2014generative}, VAEs \cite{kingma2013auto,rezende2014stochastic}, or normalizing flows \cite{dinh2014nice,dinh2016density,kingma2018glow}, the iterative generation procedure of diffusion models typically requires 10--2000 times more compute for sample generation \cite{song2020improved,ho2020denoising,song2021scorebased,zhang2022fast,lu2022dpm}, causing slow inference and limited real-time applications.

Our objective is to create generative models that facilitate efficient, single-step generation without sacrificing important advantages of iterative sampling, such as trading compute for sample quality when necessary, as well as performing zero-shot data editing tasks. As illustrated in \cref{fig:scheme}, we build on top of the probability flow (PF) ordinary differential equation (ODE) in continuous-time diffusion models \cite{song2021scorebased}, whose trajectories smoothly transition the data distribution into a tractable noise distribution. We propose to learn a model that maps any point at any time step to the trajectory's starting point. A notable property of our model is self-consistency: \emph{points on the same trajectory map to the same initial point}. We therefore refer to such models as \textbf{consistency models}. Consistency models allow us to generate data samples (initial points of ODE trajectories, \eg, $\rvx_0$ in \cref{fig:scheme}) by converting random noise vectors (endpoints of ODE trajectories, \eg, $\rvx_T$ in \cref{fig:scheme}) with only one network evaluation. Importantly, by chaining the outputs of consistency models at multiple time steps, we can improve sample quality and perform zero-shot data editing at the cost of more compute, similar to what iterative sampling enables for diffusion models.

To train a consistency model, we offer two methods based on enforcing the self-consistency property. The first method relies on using numerical ODE solvers and a pre-trained diffusion model to generate pairs of adjacent points on a PF ODE trajectory. By minimizing the difference between model outputs for these pairs, we can effectively distill a diffusion model into a consistency model, which allows generating high-quality samples with one network evaluation. By contrast, our second method eliminates the need for a pre-trained diffusion model altogether, allowing us to train a consistency model in isolation. This approach situates consistency models as an independent family of generative models. Importantly, neither approach necessitates adversarial training, and they both place minor constraints on the architecture, allowing the use of flexible neural networks for parameterizing consistency models.

We demonstrate the efficacy of consistency models on several image datasets, including CIFAR-10 \cite{krizhevsky2009learning}, ImageNet $64\times 64$ \cite{deng2009imagenet}, and LSUN $256\times 256$ \cite{yu2015lsun}. Empirically, we observe that as a distillation approach, consistency models outperform existing diffusion distillation methods like progressive distillation \cite{salimans2022progressive} across a variety of datasets in few-step generation: On CIFAR-10, consistency models reach new state-of-the-art FIDs of 3.55 and 2.93 for one-step and two-step generation; on ImageNet $64\times 64$, it achieves record-breaking FIDs of 6.20 and 4.70 with one and two network evaluations respectively. When trained as standalone generative models, consistency models can match or surpass the quality of one-step samples from progressive distillation, despite having no access to pre-trained diffusion models. They are also able to outperform many GANs, and existing non-adversarial, single-step generative models across multiple datasets. Furthermore, we show that consistency models can be used to perform a wide range of zero-shot data editing tasks, including image denoising, interpolation, inpainting, colorization, super-resolution, and stroke-guided image editing (SDEdit, \citet{meng2021sdedit}).

\section{Diffusion Models}\label{sec:diffusion}
Consistency models are heavily inspired by the theory of continuous-time diffusion models \citep{song2021scorebased,karras2022edm}. Diffusion models generate data by progressively perturbing data to noise via Gaussian perturbations, then creating samples from noise via sequential denoising steps. Let $p_\text{data}(\rvx)$ denote the data distribution. Diffusion models start by diffusing $p_\text{data}(\rvx)$ with a stochastic differential equation (SDE) \citep{song2021scorebased}
\begin{align}
    \ud \rvx_t = \bm{\mu}(\rvx_t, t) \ud t + \sigma(t)\ud \rvw_t,\label{eq:sde}
\end{align}
where $t\in[0, T]$, $T>0$ is a fixed constant, $\bm{\mu}(\cdot, \cdot)$ and $\sigma(\cdot)$ are the drift and diffusion coefficients respectively, and $\{\rvw_t\}_{t\in[0,T]}$ denotes the standard Brownian motion. We denote the distribution of $\rvx_t$ as $p_t(\rvx)$ and as a result $p_0(\rvx) \equiv p_\text{data}(\rvx)$. A remarkable property of this SDE is the existence of
an ordinary differential equation (ODE), dubbed the \emph{Probability Flow (PF) ODE} by \citet{song2021scorebased}, whose solution trajectories sampled at $t$ are distributed according to $p_t(\rvx)$:
\begin{align}
    \ud \rvx_t = \left[\bm{\mu}(\rvx_t, t) - \frac{1}{2} \sigma(t)^2 \nabla \log p_t(\rvx_t)\right] \ud t. \label{eq:pfode}
\end{align}
Here $\nabla \log p_t(\rvx)$ is the \emph{score function} of $p_t(\rvx)$; hence diffusion models are also known as \emph{score-based generative models} \citep{song2019generative,song2020improved,song2021scorebased}.

Typically, the SDE in \cref{eq:sde} is designed such that $p_T(\rvx)$ is close to a tractable Gaussian distribution $\pi(\rvx)$. We hereafter adopt the settings in \citet{karras2022edm}, where $\bm{\mu}(\rvx, t) = \bm{0}$ and $\sigma(t) = \sqrt{2t}$. In this case, we have $p_t(\rvx) = p_\text{data}(\rvx) \otimes \mcal{N}(\bm{0}, t^2 \mI)$, where $\otimes$ denotes the convolution operation, and $\pi(\rvx) = \mcal{N}(\bm{0}, T^2\mI)$. For sampling, we first train a \emph{score model} $\vs_\vphi(\rvx, t) \approx \nabla \log p_t(\rvx)$ via \emph{score matching} \citep{hyvarinen2005estimation,vincent2011connection,song2019sliced,song2019generative,ho2020denoising}, then plug it into \cref{eq:pfode} to obtain an empirical estimate of the PF ODE, which takes the form of
\begin{align}
    \frac{\ud \rvx_t}{\ud t} = -t \vs_\vphi(\rvx_t, t). \label{eq:e_pfode}
\end{align}
We call \cref{eq:e_pfode} the \emph{empirical PF ODE}. Next, we sample $\hat{\rvx}_T \sim \pi = \mcal{N}(\bm{0}, T^2 \mI)$ to initialize the empirical PF ODE and solve it backwards in time with any numerical ODE solver, such as Euler \citep{song2020denoising,song2021scorebased} and Heun solvers \citep{karras2022edm}, to obtain the solution trajectory $\{\hat{\rvx}_t\}_{t\in[0,T]}$. The resulting $\hat{\rvx}_0$ can then be viewed as an approximate sample from the data distribution $p_\text{data}(\rvx)$. To avoid numerical instability, one typically stops the solver at $t=\epsilon$, where $\epsilon$ is a fixed small positive number, and accepts $\hat{\rvx}_{\epsilon}$ as the approximate sample. Following \citet{karras2022edm}, we rescale image pixel values to $[-1,1]$, and set $T=80, \epsilon=0.002$.

Diffusion models are bottlenecked by their slow sampling speed. Clearly, using ODE solvers for sampling requires iterative evaluations of the score model $\vs_\vphi(\rvx, t)$, which is computationally costly. Existing methods for fast sampling include faster numerical ODE solvers \cite{song2020denoising,zhang2022fast,lu2022dpm,dockhorn2022genie}, and distillation techniques \cite{luhman2021knowledge,salimans2022progressive,meng2022distillation,zheng2022fast}. However, ODE solvers still need more than 10 evaluation steps to generate competitive samples. Most distillation methods like \citet{luhman2021knowledge} and \citet{zheng2022fast} rely on collecting a large dataset of samples from the diffusion model prior to distillation, which itself is computationally expensive. To our best knowledge, the only distillation approach that does not suffer from this drawback is progressive distillation (PD, \citet{salimans2022progressive}), with which we compare consistency models extensively in our experiments.

\section{Consistency Models}\label{sec:consistency}

\begin{figure}[t]
    \centering
    \includegraphics[width=\linewidth]{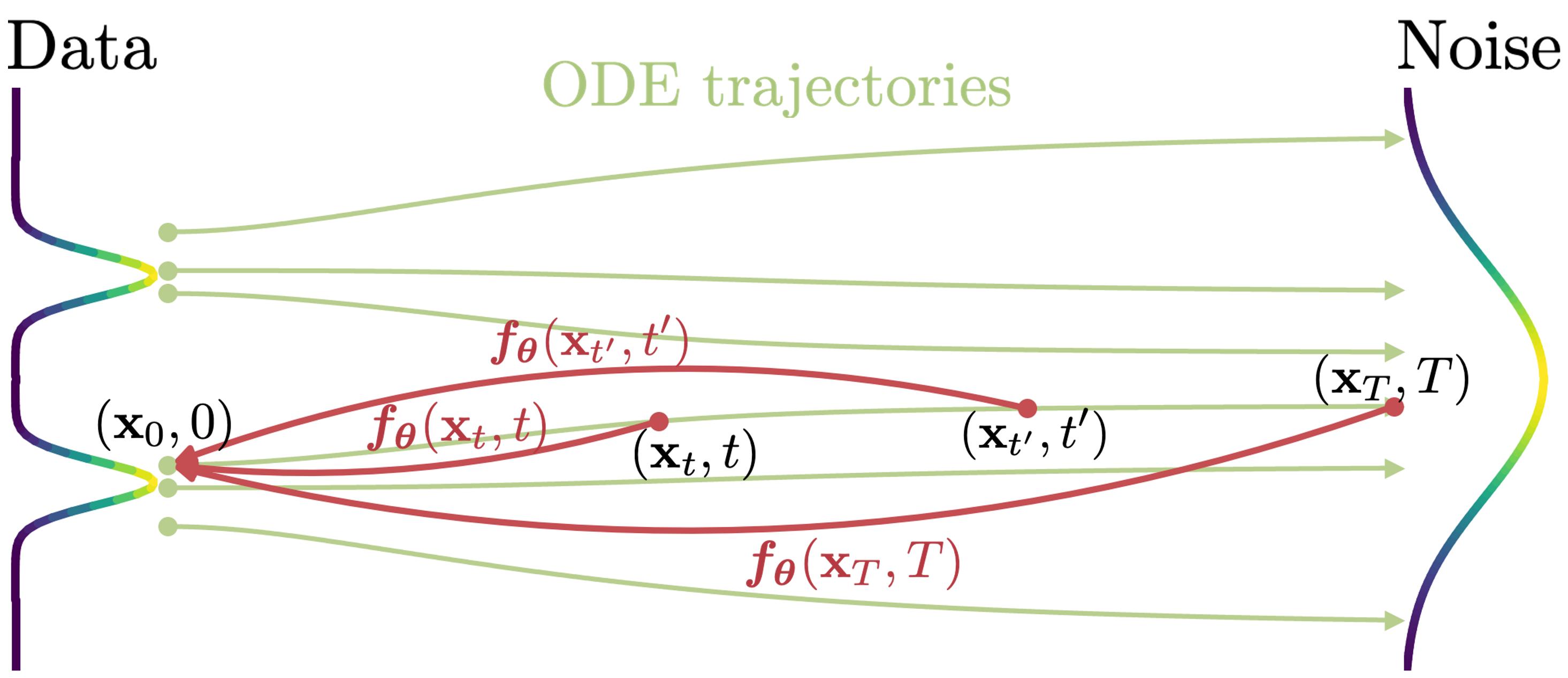}
    \caption{{\color{model}Consistency models} are trained to map points on any trajectory of the {\color{ode}PF ODE} to the trajectory's origin.}
    \label{fig:teaser}
\end{figure}
We propose consistency models, a new type of models that support single-step generation at the core of its design, while still allowing iterative generation for trade-offs between sample quality and compute, and zero-shot data editing. Consistency models can be trained in either the distillation mode or the isolation mode. In the former case, consistency models distill the knowledge of pre-trained diffusion models into a single-step sampler, significantly improving other distillation approaches in sample quality, while allowing zero-shot image editing applications. In the latter case, consistency models are trained in isolation, with no dependence on pre-trained diffusion models. This makes them an independent new class of generative models.

Below we introduce the definition, parameterization, and sampling of consistency models, plus a brief discussion on their applications to zero-shot data editing.

\textbf{Definition}~ Given a solution trajectory $\{ \rvx_t \}_{t\in[\epsilon,T]}$ of the PF ODE in \cref{eq:pfode}, we define the \emph{consistency function} as $\vf: (\rvx_t, t) \mapsto \rvx_{\epsilon}$. A consistency function has the property of \emph{self-consistency}: its outputs are consistent for arbitrary pairs of $(\rvx_t, t)$ that belong to the same PF ODE trajectory, \ie, $\vf(\rvx_t, t) = \vf(\rvx_{t'}, t')$ for all $t, t' \in [\epsilon,T]$. %
As illustrated in \cref{fig:teaser}, the goal of a \emph{consistency model}, symbolized as $\vf_\vtheta$, is to estimate this consistency function $\vf$ from data by learning to enforce the self-consistency property (details in \cref{sec:distillation,sec:generation}). Note that a similar definition is used for neural flows \cite{bilovs2021neural} in the context of neural ODEs \cite{chen2018neural}. Compared to neural flows, however, we do not enforce consistency models to be invertible.

\textbf{Parameterization}~ For any consistency function $\vf(\cdot, \cdot)$, we have $\vf(\rvx_\epsilon, \epsilon) = \rvx_\epsilon$, \ie, $\vf(\cdot, \epsilon)$ is an identity function. We call this constraint the \emph{boundary condition}. All consistency models have to meet this boundary condition, as it plays a crucial role in the successful training of consistency models. This boundary condition is also the most confining architectural constraint on consistency models. For consistency models based on deep neural networks, we discuss two ways to implement this boundary condition \emph{almost for free}. Suppose we have a free-form deep neural network $F_\vtheta(\rvx, t)$ whose output has the same dimensionality as $\rvx$. The first way is to simply parameterize the consistency model as
\begin{align}
    \vf_\vtheta(\rvx, t) = \begin{cases}
        \rvx &\quad t = \epsilon\\
        F_\vtheta(\rvx, t) &\quad t \in (\epsilon, T]
    \end{cases}\label{eq:param1}.
\end{align}
The second method is to parameterize the consistency model using skip connections, that is,
\begin{align}
    \vf_\vtheta(\rvx, t) = c_\text{skip}(t) \rvx + c_\text{out}(t) F_\vtheta(\rvx, t)\label{eq:param2},
\end{align}
where $c_\text{skip}(t)$ and $c_\text{out}(t)$ are differentiable functions such that $c_\text{skip}(\epsilon) = 1$, and $c_\text{out}(\epsilon) = 0$. This way, the consistency model is differentiable at $t = \epsilon$ if $F_\vtheta(\rvx, t), c_\text{skip}(t), c_\text{out}(t)$ are all differentiable, which is critical for training continuous-time consistency models (\cref{sec:ctcd,sec:ctct}). %
The parameterization in \cref{eq:param2} bears strong resemblance to many successful diffusion models \cite{karras2022edm,balaji2022eDiff-I}, making it easier to borrow powerful diffusion model architectures for constructing consistency models. We therefore follow the second parameterization in all experiments.

\textbf{Sampling}~ With a well-trained consistency model $\vf_\vtheta(\cdot, \cdot)$, we can generate samples by sampling from the initial distribution $\hat{\rvx}_T \sim \mcal{N}(\bm{0}, T^2 \mI)$ and then evaluating the consistency model for $\hat{\rvx}_\epsilon = \vf_\vtheta(\hat{\rvx}_T, T)$. This involves only one forward pass through the consistency model and therefore \emph{generates samples in a single step}. Importantly, one can also evaluate the consistency model multiple times by alternating denoising and noise injection steps for improved sample quality. Summarized in \cref{alg:sampling}, this \emph{multistep} sampling procedure provides the flexibility to trade compute for sample quality. It also has important applications in zero-shot data editing. In practice, we find time points $\{\tau_1, \tau_2, \cdots, \tau_{N-1}\}$ in \cref{alg:sampling} with a greedy algorithm, where the time points are pinpointed one at a time using ternary search to optimize the FID of samples obtained from \cref{alg:sampling}. This assumes that given prior time points, the FID is a unimodal function of the next time point. We find this assumption to hold empirically in our experiments, and leave the exploration of better strategies as future work.

\begin{algorithm}[tb]
    \caption{Multistep Consistency Sampling}
    \label{alg:sampling}
 \begin{algorithmic}
    \STATE {\bfseries Input:} Consistency model $\vf_\vtheta(\cdot, \cdot)$, sequence of time points $\tau_1 > \tau_2 > \cdots > \tau_{N-1}$, initial noise $\hat{\rvx}_T$
    \STATE $\rvx \gets \vf_\vtheta(\hat{\rvx}_T, T)$
    \FOR{$n=1$ {\bfseries to} $N-1$}
        \STATE Sample $\rvz \sim \mcal{N}(\bm{0}, \mI)$
        \STATE $\hat{\rvx}_{\tau_n} \gets \rvx + \sqrt{\tau_n^2 - \epsilon^2} \rvz$
        \STATE $\rvx \gets \vf_\vtheta(\hat{\rvx}_{\tau_n}, \tau_n)$
    \ENDFOR
    \STATE {\bfseries Output:} $\rvx$
 \end{algorithmic}
\end{algorithm}

\textbf{Zero-Shot Data Editing}~ Similar to diffusion models, consistency models enable various data editing and manipulation applications in zero shot; they do not require explicit training to perform these tasks. For example, consistency models define a one-to-one mapping from a Gaussian noise vector to a data sample. Similar to latent variable models like GANs, VAEs, and normalizing flows, consistency models can easily interpolate between samples by traversing the latent space (\cref{fig:bedroom_interp}). As consistency models are trained to recover $\rvx_\epsilon$ from any noisy input $\rvx_t$ where $t \in [\epsilon, T]$, they can perform denoising for various noise levels (\cref{fig:bedroom_denoising}). Moreover, the multistep generation procedure in \cref{alg:sampling} is useful for solving certain inverse problems in zero shot by using an iterative replacement procedure similar to that of diffusion models \cite{song2019generative,song2021scorebased,ho2022video}. This enables many applications in the context of image editing, including inpainting (\cref{fig:bedroom_inpainting}), colorization (\cref{fig:bedroom_colorization}), super-resolution (\cref{fig:bedroom_superres_lite}) and stroke-guided image editing (\cref{fig:bedroom_sdedit}) as in SDEdit \cite{meng2021sdedit}. In \cref{sec:zeroshot}, we empirically demonstrate the power of consistency models on many zero-shot image editing tasks.

\section{Training Consistency Models via Distillation}\label{sec:distillation}

We present our first method for training consistency models based on distilling a pre-trained score model $\vs_\vphi(\rvx, t)$. Our discussion revolves around the empirical PF ODE in \cref{eq:e_pfode}, obtained by plugging the score model $\vs_\vphi(\rvx, t)$ into the PF ODE. Consider discretizing the time horizon $[\epsilon, T]$ into $N-1$ sub-intervals, with boundaries $t_1=\epsilon < t_2 < \cdots < t_{N}=T$. In practice, we follow \citet{karras2022edm} to determine the boundaries with the formula $t_i = (\epsilon^{1/\rho} + \nicefrac{i-1}{N-1} (T^{1/\rho} - \epsilon^{1/\rho}))^\rho$, where $\rho=7$. When $N$ is sufficiently large, we can obtain an accurate estimate of $\rvx_{t_n}$ from $\rvx_{t_{n+1}}$ by running one discretization step of a numerical ODE solver. This estimate, which we denote as $\hat{\rvx}_{t_n}^\vphi$, is defined by
\begin{align}
   \hat{\rvx}_{t_n}^\vphi \coloneqq \rvx_{t_{n+1}} + (t_n - t_{n+1})\Phi(\rvx_{t_{n+1}}, t_{n+1}; \vphi),\label{eq:odesolve}
\end{align}
where $\Phi(\cdots; \vphi)$ represents the update function of a one-step ODE solver applied to the empirical PF ODE. For example, when using the Euler solver, we have $\Phi(\rvx, t; \vphi) = -t \vs_\vphi(\rvx, t)$ which corresponds to the following update rule
\begin{align*}
    \hat{\rvx}_{t_n}^\vphi = \rvx_{t_{n+1}} - (t_n - t_{n+1}) t_{n+1} \vs_\vphi(\rvx_{t_{n+1}}, t_{n+1}).
\end{align*}
For simplicity, we only consider one-step ODE solvers in this work. It is straightforward to generalize our framework to multistep ODE solvers and we leave it as future work.

Due to the connection between the PF ODE in \cref{eq:pfode} and the SDE in \cref{eq:sde} (see \cref{sec:diffusion}), one can sample along the distribution of ODE trajectories by first sampling $\rvx \sim p_\text{data}$, then adding Gaussian noise to $\rvx$. Specifically, given a data point $\rvx$, we can generate a pair of adjacent data points $(\hat{\rvx}_{t_n}^\vphi, \rvx_{t_{n+1}})$ on the PF ODE trajectory efficiently by sampling $\rvx$ from the dataset, followed by sampling $\rvx_{t_{n+1}}$ from the transition density of the SDE $\mcal{N}(\rvx, t_{n+1}^2 \mI)$, and then computing $\hat{\rvx}_{t_n}^\vphi$ using one discretization step of the numerical ODE solver according to \cref{eq:odesolve}. Afterwards, we train the consistency model by minimizing its output differences on the pair $(\hat{\rvx}_{t_n}^\vphi, \rvx_{t_{n+1}})$. This motivates our following \emph{consistency distillation} loss for training consistency models.

\begin{definition}\label{def:loss}
The consistency distillation loss is defined as
\begin{multline}
    \mcal{L}_\text{CD}^N(\vtheta, \vtheta^{-}; \vphi) \coloneqq \\ \mbb{E}[\lambda(t_n) d(\vf_\vtheta({\rvx}_{t_{n+1}}, t_{n+1}), \vf_{\vtheta^-}(\hat{\rvx}_{t_n}^\vphi, t_n))], \label{eq:distill_obj}
\end{multline}
where the expectation is taken with respect to $\rvx \sim p_\text{data}$, $n \sim \mcal{U}\llbracket 1,N-1 \rrbracket$, and $\rvx_{t_{n+1}} \sim \mcal{N}(\rvx; t_{n+1}^2 \mI)$. Here $\mcal{U}\llbracket 1, N-1 \rrbracket$ denotes the uniform distribution over $\{1,2,\cdots, N-1\}$, $\lambda(\cdot) \in \mbb{R}^+$ is a positive weighting function, $\hat{\rvx}_{t_n}^\vphi$ is given by \cref{eq:odesolve}, $\vtheta^-$ denotes a running average of the past values of $\vtheta$ during the course of optimization, and $d(\cdot, \cdot)$ is a metric function that satisfies $\forall \rvx, \rvy: d(\rvx, \rvy) \geq 0$ and $d(\rvx, \rvy) = 0$ if and only if $\rvx = \rvy$.
\end{definition}

Unless otherwise stated, we adopt the notations in \cref{def:loss} throughout this paper, and use $\mbb{E}[\cdot]$ to denote the expectation over all random variables. In our experiments, we consider the squared $\ell_2$ distance $d(\rvx, \rvy) = \|\rvx - \rvy\|^2_2$, $\ell_1$ distance $d(\rvx, \rvy) = \|\rvx-\rvy\|_1$, and the Learned Perceptual Image Patch Similarity (LPIPS, \citet{zhang2018perceptual}). We find $\lambda(t_n) \equiv 1$ performs well across all tasks and datasets. In practice, we minimize the objective by stochastic gradient descent on the model parameters $\vtheta$, while updating $\vtheta^-$ with exponential moving average (EMA). That is, given a decay rate $0 \leq \mu < 1$, we perform the following update after each optimization step:
\begin{align}
    \vtheta^- \leftarrow \operatorname{stopgrad}(\mu \vtheta^- + (1-\mu) \vtheta).\label{eq:ema}
\end{align}
The overall training procedure is summarized in \cref{alg:distillation}. In alignment with the convention in deep reinforcement learning \cite{mnih2013playing,mnih2015human,lillicrap2015continuous} and momentum based contrastive learning \cite{grill2020bootstrap,he2020momentum}, we refer to $\vf_{\vtheta^-}$ as the ``target network'', and $\vf_\vtheta$ as the ``online network''. We find that compared to simply setting $\vtheta^- = \vtheta$, the EMA update and ``stopgrad'' operator in \cref{eq:ema} can greatly stabilize the training process and improve the final performance of the consistency model.

\begin{algorithm}[tb]
    \caption{Consistency Distillation (CD)}\label{alg:distillation}
 \begin{algorithmic}
    \STATE {\bfseries Input:} dataset $\mcal{D}$, initial model parameter $\vtheta$, learning rate $\eta$, ODE solver $\Phi(\cdot, \cdot; \vphi)$, $d(\cdot, \cdot)$, $\lambda(\cdot)$, and $\mu$
    \STATE $\vtheta^- \gets \vtheta$
    \REPEAT
    \STATE Sample $\rvx \sim \mcal{D}$ and $n \sim \mcal{U}\llbracket 1,N-1 \rrbracket$
    \STATE Sample $\rvx_{t_{n+1}} \sim \mcal{N}(\rvx; t_{n+1}^2 \mI)$
    \STATE $\hat{\rvx}_{t_n}^\vphi \gets \rvx_{t_{n+1}} + (t_n - t_{n+1})\Phi(\rvx_{t_{n+1}}, t_{n+1}; \vphi)$
    \vspace{0.33em}
    \STATE $\begin{multlined}
        \mcal{L}(\vtheta, \vtheta^{-}; \vphi) \gets \\ \lambda(t_n) d(\vf_\vtheta(\rvx_{t_{n+1}}, t_{n+1}), \vf_{\vtheta^-}(\hat{\rvx}_{t_n}^\vphi, t_n))
    \end{multlined}
    $
    \vspace{0.33em}
    \STATE $\vtheta \gets \vtheta - \eta \nabla_\vtheta \mcal{L}(\vtheta, \vtheta^{-}; \vphi)$
    \STATE $\vtheta^- \gets \operatorname{stopgrad}(\mu \vtheta^- + (1-\mu) \vtheta$)
    \UNTIL{convergence}
 \end{algorithmic}
 \end{algorithm}

Below we provide a theoretical justification for consistency distillation based on asymptotic analysis.

\begin{theorem}\label{thm:convergence}
    Let $\Delta t \coloneqq \max_{n \in \llbracket 1, N-1\rrbracket}\{|t_{n+1} - t_{n}|\}$, and $\vf(\cdot,\cdot;\vphi)$ be the consistency function of the empirical PF ODE in \cref{eq:e_pfode}. Assume $\vf_\vtheta$ satisfies the Lipschitz condition: there exists $L > 0$ such that for all $t \in [\epsilon, T]$, $\rvx$, and $\rvy$, we have $\norm{\vf_\vtheta(\rvx, t) - \vf_\vtheta(\rvy, t)}_2 \leq L \norm{\rvx - \rvy}_2$. Assume further that for all $n \in \llbracket 1, N-1 \rrbracket$, the ODE solver called at $t_{n+1}$ has local error uniformly bounded by $O((t_{n+1} - t_n)^{p+1})$ with $p\geq 1$. Then, if $\mcal{L}_\text{CD}^N(\vtheta, \vtheta; \vphi) = 0$, we have
    \begin{align*}
        \sup_{n, \rvx}\|\vf_{\vtheta}(\rvx, t_n) - \vf(\rvx, t_n; \vphi)\|_2 = O((\Delta t)^p).
    \end{align*}
\end{theorem}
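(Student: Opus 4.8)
The plan is a discrete induction on $n$ that controls the uniform error
$$e_n \coloneqq \sup_{\rvx}\norm{\vf_\vtheta(\rvx, t_n) - \vf(\rvx, t_n;\vphi)}_2,$$
following the classical template by which the global error of a one-step integrator is the accumulation of its local errors. The three ingredients are: (i) turning the hypothesis $\mcal{L}_\text{CD}^N(\vtheta,\vtheta;\vphi) = 0$ into a pointwise functional identity relating $\vf_\vtheta$ at consecutive grid times; (ii) a one-step estimate $e_{n+1} \leq e_n + O((t_{n+1}-t_n)^{p+1})$ obtained by comparing the solver step to the exact PF-ODE flow and invoking the Lipschitz bound; (iii) summing the recursion.

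\textbf{Step 1: from zero loss to a pointwise identity.} Since $\lambda(\cdot) > 0$, $d(\cdot,\cdot) \geq 0$, and $d$ vanishes only on the diagonal, $\mcal{L}_\text{CD}^N(\vtheta,\vtheta;\vphi) = 0$ forces $\vf_\vtheta(\rvx_{t_{n+1}}, t_{n+1}) = \vf_\vtheta(\hat{\rvx}_{t_n}^\vphi, t_n)$ almost surely over the draw of $n \sim \mcal{U}\llbracket 1,N-1\rrbracket$, $\rvx\sim p_\text{data}$, and $\rvx_{t_{n+1}}\sim\mcal{N}(\rvx;t_{n+1}^2\mI)$, where $\hat{\rvx}_{t_n}^\vphi$ is the deterministic update of \cref{eq:odesolve}. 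Because the marginal law of $\rvx_{t_{n+1}}$ is a Gaussian-smoothed distribution with full support on $\mathbb{R}^d$, and $\vf_\vtheta(\cdot, t_{n+1})$ together with $\rvx_{t_{n+1}}\mapsto\hat{\rvx}_{t_n}^\vphi$ are continuous (the former from the assumed Lipschitz bound, the latter from mild regularity of $\vs_\vphi(\cdot, t_{n+1})$), this identity in fact holds for \emph{every} $n\in\llbracket 1,N-1\rrbracket$ and every $\rvx\in\mathbb{R}^d$: $\vf_\vtheta(\rvx, t_{n+1}) = \vf_\vtheta(\hat{\rvx}_{t_n}^\vphi, t_n)$.

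\textbf{Step 2: the inductive estimate.} For the base case $n=1$, $t_1=\epsilon$, so the boundary condition gives $\vf_\vtheta(\cdot,\epsilon) = \mathrm{id} = \vf(\cdot,\epsilon;\vphi)$ and $e_1 = 0$. For the inductive step, fix $\rvx$ and let $\tilde{\rvx}_{t_n}^\vphi$ denote the \emph{exact} solution of the empirical PF ODE \cref{eq:e_pfode} at time $t_n$ with value $\rvx$ at time $t_{n+1}$. Since $\vf(\cdot,\cdot;\vphi)$ is constant along exact trajectories, $\vf(\rvx, t_{n+1};\vphi) = \vf(\tilde{\rvx}_{t_n}^\vphi, t_n;\vphi)$; combining this with the identity of Step 1 and inserting the intermediate term $\vf_\vtheta(\tilde{\rvx}_{t_n}^\vphi, t_n)$,
\begin{align*}
\norm{\vf_\vtheta(\rvx, t_{n+1}) - \vf(\rvx, t_{n+1};\vphi)}_2
&\leq \norm{\vf_\vtheta(\hat{\rvx}_{t_n}^\vphi, t_n) - \vf_\vtheta(\tilde{\rvx}_{t_n}^\vphi, t_n)}_2 \\
&\quad + \norm{\vf_\vtheta(\tilde{\rvx}_{t_n}^\vphi, t_n) - \vf(\tilde{\rvx}_{t_n}^\vphi, t_n;\vphi)}_2 \\
&\leq L\norm{\hat{\rvx}_{t_n}^\vphi - \tilde{\rvx}_{t_n}^\vphi}_2 + e_n,
\end{align*}
using the Lipschitz assumption on the first term and the definition of $e_n$ on the second. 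The local-error hypothesis gives $\norm{\hat{\rvx}_{t_n}^\vphi - \tilde{\rvx}_{t_n}^\vphi}_2 = O((t_{n+1}-t_n)^{p+1})$. Taking the supremum over $\rvx$ yields $e_{n+1} \leq e_n + O((t_{n+1}-t_n)^{p+1}) \leq e_n + O((t_{n+1}-t_n)(\Delta t)^p)$.

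\textbf{Step 3: telescoping, and the main obstacle.} Unrolling from $e_1 = 0$,
$$e_n \leq \sum_{k=1}^{n-1} O\big((t_{k+1}-t_k)(\Delta t)^p\big) = O\big((\Delta t)^p\big)\sum_{k=1}^{n-1}(t_{k+1}-t_k) = O\big((t_n-\epsilon)(\Delta t)^p\big) \leq O\big((T-\epsilon)(\Delta t)^p\big),$$
and since $T-\epsilon$ is a fixed constant this is $O((\Delta t)^p)$ uniformly in $n$, which is the claim. I expect the delicate point to be Step 1 — rigorously upgrading ``the expected loss is zero'' to a pointwise identity valid for all $\rvx\in\mathbb{R}^d$, which relies on the full support of the noised data distribution and on continuity of $\vf_\vtheta$ and of the solver update. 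The remaining steps are the routine one-step-solver global-error accumulation; note in particular that no Gr\"onwall-type amplification appears, because $e_n$ re-enters the recursion with coefficient $1$ rather than $L$ (the Lipschitz constant multiplies only the local truncation error, not the previously accumulated error).
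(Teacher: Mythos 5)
Your proof is correct and follows essentially the same route as the paper: derive the pointwise identity $\vf_\vtheta(\rvx, t_{n+1}) = \vf_\vtheta(\hat{\rvx}_{t_n}^\vphi, t_n)$ from zero loss, decompose the error at $t_{n+1}$ via the triangle inequality with the Lipschitz bound absorbing only the solver's local truncation error, anchor the induction with $e_1 = 0$ from the boundary condition, and telescope. The only superficial differences are that you track a uniform supremum $e_n$ rather than the paper's pointwise error vector $\ve_n$, and you make the almost-sure-to-everywhere upgrade in Step 1 slightly more explicit via continuity than the paper does.
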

\begin{proof}
    The proof is based on induction and parallels the classic proof of global error bounds for numerical ODE solvers \cite{suli2003introduction}. We provide the full proof in \cref{app:proof_cd}.
\end{proof}

Since $\vtheta^{-}$ is a running average of the history of $\vtheta$, we have $\vtheta^{-} = \vtheta$ when the optimization of \cref{alg:distillation} converges. That is, the target and online consistency models will eventually match each other. If the consistency model additionally achieves zero consistency distillation loss, then \cref{thm:convergence} implies that, under some regularity conditions, the estimated consistency model can become arbitrarily accurate, as long as the step size of the ODE solver is sufficiently small. Importantly, our boundary condition $\vf_\vtheta(\rvx, \epsilon) \equiv \rvx$ precludes the trivial solution $\vf_\vtheta(\rvx, t) \equiv \bm{0}$ from arising in consistency model training.

The consistency distillation loss $\mcal{L}_\text{CD}^N(\vtheta, \vtheta^{-}; \vphi)$ can be extended to hold for infinitely many time steps ($N \to \infty$) if $\vtheta^{-} = \vtheta$ or $\vtheta^{-} = \operatorname{stopgrad}(\vtheta)$. The resulting continuous-time loss functions do not require specifying $N$ nor the time steps $\{t_1, t_2, \cdots, t_N\}$. Nonetheless, they involve Jacobian-vector products and require forward-mode automatic differentiation for efficient implementation, which may not be well-supported in some deep learning frameworks. We provide these continuous-time distillation loss functions in \cref{thm:ctcd1,thm:ctcd_l1,thm:ctcd2}, and relegate details to \cref{sec:ctcd}.

\section{Training Consistency Models in Isolation}\label{sec:generation}
Consistency models can be trained without relying on any pre-trained diffusion models. This differs from existing diffusion distillation techniques, making consistency models a new independent family of generative models.
\begin{algorithm}[tb]
    \caption{Consistency Training (CT)}
        \label{alg:dtct}
        \begin{algorithmic}
        \STATE {\bfseries Input:} dataset $\mcal{D}$, initial model parameter $\vtheta$, learning rate $\eta$, step schedule $N(\cdot)$, EMA decay rate schedule $\mu(\cdot)$, $d(\cdot, \cdot)$, and $\lambda(\cdot)$
        \STATE $\vtheta^- \gets \vtheta$ and $k \gets 0$
        \REPEAT
        \STATE Sample $\rvx \sim \mcal{D}$, and $n \sim \mcal{U}\llbracket 1,N(k)-1 \rrbracket$
        \STATE Sample $\rvz \sim \mcal{N}(\bm{0}, \mI)$
        \STATE $\begin{multlined}
            \mcal{L}(\vtheta, \vtheta^{-}) \gets \\
            \lambda(t_n) d(\vf_\vtheta(\rvx + t_{n+1} \rvz, t_{n+1}),\vf_{\vtheta^-}(\rvx + t_n \rvz, t_n))
        \end{multlined}$
        \STATE $\vtheta \gets \vtheta - \eta \nabla_\vtheta \mcal{L}(\vtheta, \vtheta^{-})$
        \STATE $\vtheta^- \gets \operatorname{stopgrad}(\mu(k) \vtheta^- + (1-\mu(k)) \vtheta)$
        \STATE $k \gets k + 1$
        \UNTIL{convergence}
    \end{algorithmic}
\end{algorithm}

Recall that in consistency distillation, we rely on a pre-trained score model $\vs_\vphi(\rvx, t)$ to approximate the ground truth score function $\nabla \log p_t(\rvx)$. It turns out that we can avoid this pre-trained score model altogether by leveraging the following unbiased estimator (\cref{lem:grad_log_p_t} in \cref{app:proof}):
\begin{align*}
    \nabla \log p_t(\rvx_t) = -\mbb{E}\left[ \frac{\rvx_t - \rvx}{t^2} \mathrel\bigg| \rvx_t \right],
\end{align*}
where $\rvx\sim p_\text{data}$ and $\rvx_t \sim \mcal{N}(\rvx; t^2 \mI)$. That is, given $\rvx$ and $\rvx_t$, we can estimate $\nabla \log p_t(\rvx_t)$ with $-(\rvx_t-\rvx)/t^2$.

This unbiased estimate suffices to replace the pre-trained diffusion model in consistency distillation when using the Euler method as the ODE solver in the limit of $N\to\infty$, as justified by the following result.
\begin{theorem}\label{thm:ct}
    Let $\Delta t \coloneqq \max_{n \in \llbracket 1, N-1\rrbracket}\{|t_{n+1} - t_{n}|\}$. Assume $d$ and $\vf_{\vtheta^{-}}$ are both twice continuously differentiable with bounded second derivatives, the weighting function $\lambda(\cdot)$ is bounded, and $\mbb{E}[\norm{\nabla \log p_{t_n}(\rvx_{t_{n}})}_2^2] < \infty$. Assume further that we use the Euler ODE solver, and the pre-trained score model matches the ground truth, \ie, $\forall t\in[\epsilon, T]: \vs_{\vphi}(\rvx, t) \equiv \nabla \log p_t(\rvx)$. Then,
    \begin{align}
       \mcal{L}_\text{CD}^N(\vtheta, \vtheta^{-}; \vphi) = \mcal{L}_\text{CT}^N(\vtheta, \vtheta^{-}) + o(\Delta t),\label{eq:cd2}
    \end{align}
    where the expectation is taken with respect to $\rvx \sim p_\text{data}$, $n \sim \mcal{U}\llbracket 1,N-1 \rrbracket$, and $\rvx_{t_{n+1}} \sim \mcal{N}(\rvx; t_{n+1}^2 \mI)$. The consistency training objective, denoted by $\mcal{L}_\text{CT}^N(\vtheta, \vtheta^{-})$, is defined as
    \begin{align}
        \resizebox{0.87\linewidth}{!}{$\displaystyle
        \mbb{E}[\lambda(t_n) d(\vf_\vtheta(\rvx + t_{n+1}\rvz, t_{n+1}), \vf_{\vtheta^{-}}(\rvx + t_n\rvz, t_n))]$},\label{eq:dtct}
    \end{align}
    where $\rvz \sim \mcal{N}(\bf{0}, \mI)$. Moreover, $\mcal{L}_\text{CT}^N(\vtheta, \vtheta^{-}) \geq O(\Delta t)$ if $\inf_N \mcal{L}_\text{CD}^N(\vtheta, \vtheta^{-}; \vphi) > 0$.
\end{theorem}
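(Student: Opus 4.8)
The plan is to write $\mcal{L}_\text{CD}^N-\mcal{L}_\text{CT}^N$ as a single expectation, Taylor-expand it to first order in the step size, and show the first-order term vanishes exactly in expectation. First I would put both objectives over a common source of randomness: writing $\rvx_{t_{n+1}}=\rvx+t_{n+1}\rvz$ with $\rvz\sim\mcal{N}(\bm 0,\mI)$, the first arguments of $d$ in $\mcal{L}_\text{CD}^N$ and $\mcal{L}_\text{CT}^N$ become literally identical, so only the second arguments differ. Because we use the Euler solver and $\vs_\vphi\equiv\nabla\log p_t$, the solver step satisfies $\hat{\rvx}_{t_n}^\vphi=\rvx_{t_{n+1}}+(t_n-t_{n+1})\bigl(-t_{n+1}\nabla\log p_{t_{n+1}}(\rvx_{t_{n+1}})\bigr)$, whereas $\rvx+t_n\rvz=\rvx_{t_{n+1}}+(t_n-t_{n+1})\rvz$, so
\begin{align*}
\hat{\rvx}_{t_n}^\vphi-(\rvx+t_n\rvz)=(t_n-t_{n+1})\bigl(-t_{n+1}\nabla\log p_{t_{n+1}}(\rvx_{t_{n+1}})-\rvz\bigr),
\end{align*}
which has norm $O(\Delta t)$.

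Next I would Taylor-expand. Writing $\vu:=\vf_\vtheta(\rvx_{t_{n+1}},t_{n+1})$ and $\vw:=\vf_{\vtheta^-}(\rvx_{t_{n+1}},t_n)$, expanding $\vf_{\vtheta^-}(\cdot,t_n)$ about $\rvx_{t_{n+1}}$ and then $d(\vu,\cdot)$ about $\vw$, and absorbing all quadratic pieces into a remainder controlled uniformly by the assumed bounded second derivatives of $d$ and $\vf_{\vtheta^-}$, one gets
\begin{align*}
\mcal{L}_\text{CD}^N-\mcal{L}_\text{CT}^N=\mbb{E}\!\left[\lambda(t_n)(t_n-t_{n+1})\,\vh^{\!\top}\bigl(-t_{n+1}\nabla\log p_{t_{n+1}}(\rvx_{t_{n+1}})-\rvz\bigr)\right]+\mbb{E}\!\left[\lambda(t_n)\rho_n\right],
\end{align*}
where $\vh:=\bigl(\partial_{\rvx}\vf_{\vtheta^-}(\rvx_{t_{n+1}},t_n)\bigr)^{\!\top}\partial_2 d(\vu,\vw)$ is a measurable function of $(\rvx_{t_{n+1}},n)$ only, and $|\rho_n|$ is bounded pointwise by $O((\Delta t)^2)$ times a polynomial in $\norm{\rvz}$ and $\norm{\nabla\log p_{t_{n+1}}(\rvx_{t_{n+1}})}$. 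The crucial point is that the first expectation is identically zero: since $n$ is drawn independently of $(\rvx,\rvz)$, we may condition on $(\rvx_{t_{n+1}},n)$, and the unbiased-estimator identity $\nabla\log p_t(\rvx_t)=-\mbb{E}[(\rvx_t-\rvx)/t^2\mid\rvx_t]$ (\cref{lem:grad_log_p_t}) rearranges --- using $\rvz=(\rvx_{t_{n+1}}-\rvx)/t_{n+1}$ --- into $\mbb{E}[\rvz\mid\rvx_{t_{n+1}},n]=-t_{n+1}\nabla\log p_{t_{n+1}}(\rvx_{t_{n+1}})$; hence the conditional expectation of $-t_{n+1}\nabla\log p_{t_{n+1}}(\rvx_{t_{n+1}})-\rvz$ given $(\rvx_{t_{n+1}},n)$ is $\bm 0$, and since $\lambda(t_n)$ and $\vh$ are $(\rvx_{t_{n+1}},n)$-measurable, the tower property kills the whole term.

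What remains is to bound $\mbb{E}[|\lambda(t_n)\rho_n|]$. Here I would invoke boundedness of $\lambda$, finiteness of all Gaussian moments of $\rvz$, the hypothesis $\mbb{E}[\norm{\nabla\log p_{t_n}(\rvx_{t_n})}_2^2]<\infty$, and the bounded (second) derivatives of $d$ and $\vf_{\vtheta^-}$, together with $t_{n+1}\ge\epsilon$, to conclude $\mbb{E}[|\lambda(t_n)\rho_n|]=O((\Delta t)^2)=o(\Delta t)$; this establishes \cref{eq:cd2}. The final claim follows at once: if $\inf_N\mcal{L}_\text{CD}^N(\vtheta,\vtheta^-;\vphi)>0$, then \cref{eq:cd2} gives $\mcal{L}_\text{CT}^N(\vtheta,\vtheta^-)=\mcal{L}_\text{CD}^N(\vtheta,\vtheta^-;\vphi)-o(\Delta t)$, which stays bounded away from $0$ once $\Delta t$ (equivalently $1/N$) is small enough, so $\mcal{L}_\text{CT}^N(\vtheta,\vtheta^-)\ge O(\Delta t)$.

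I expect the conceptual heart to be the cancellation of the first-order term --- recognizing that the exact-score Euler step displaces $\rvx_{t_{n+1}}$ by precisely the posterior mean $\mbb{E}[\rvz\mid\rvx_{t_{n+1}},n]$, and that one must condition on $(\rvx_{t_{n+1}},n)$ rather than on $\rvx$ or on $\rvz$ for this to work. The main technical nuisance is the uniform control of the Taylor remainder $\rho_n$: one must carry the quadratic terms explicitly through the composition $d\circ\vf_{\vtheta^-}$ and verify, using the moment and smoothness hypotheses and $t_{n+1}\ge\epsilon$, that their expectations are genuinely $O((\Delta t)^2)$ uniformly in $N$.
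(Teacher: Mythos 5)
Your proof is correct, and it follows a genuinely cleaner route than the paper's. The paper Taylor-expands $\vf_{\vtheta^{-}}(\hat{\rvx}_{t_n}^\vphi, t_n)$ about the \emph{different} base point $(\rvx_{t_{n+1}}, t_{n+1})$, which introduces both a spatial-derivative term (containing the score) \emph{and} a time-derivative term $\partial_2 \vf_{\vtheta^{-}}(\rvx_{t_{n+1}}, t_{n+1})(t_n - t_{n+1})$; it then invokes \cref{lem:grad_log_p_t} plus the law of total expectation to ``de-condition'' the score, and finally runs the Taylor expansion \emph{in reverse} to reassemble both first-order pieces into $\vf_{\vtheta^{-}}(\rvx + t_n\rvz, t_n)$. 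You instead observe that, after the reparametrization $\rvx_{t_{n+1}} = \rvx + t_{n+1}\rvz$, the first arguments of $d$ in $\mcal{L}_\text{CD}^N$ and $\mcal{L}_\text{CT}^N$ coincide, and --- crucially --- the second arguments already share the \emph{same} time index $t_n$, so only a spatial Taylor expansion is needed. That eliminates the $\partial_2\vf_{\vtheta^{-}}$ bookkeeping entirely, and the first-order mismatch $(t_n - t_{n+1})(-t_{n+1}\nabla\log p_{t_{n+1}}(\rvx_{t_{n+1}}) - \rvz)$ is killed outright by the tower property, since $\lambda(t_n)$, $\vh$, and the score are all $(\rvx_{t_{n+1}}, n)$-measurable and $\mbb{E}[\rvz \mid \rvx_{t_{n+1}}, n] = -t_{n+1}\nabla\log p_{t_{n+1}}(\rvx_{t_{n+1}})$ by \cref{lem:grad_log_p_t}. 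What the paper's route buys is that it produces the CT loss as a recognizable object at the end of the reverse expansion rather than by cancellation, which makes the identity $\mcal{L}_\text{CD}^N = \mcal{L}_\text{CT}^N + o(\Delta t)$ fall out structurally; what your route buys is that the reverse expansion is unnecessary and the $o(\Delta t)$ remainder is isolated from the start as a single quadratic error. Both proofs lean on the same lemma and the same conditioning idea; your framing localizes the essential cancellation more transparently. One small caveat, which the paper shares: controlling $\mbb{E}[\lvert\lambda(t_n)\rho_n\rvert]$ uniformly in $N$ strictly speaking also needs a bound on the first derivatives of $\vf_{\vtheta^{-}}$ (not just the second), since the chain-rule term feeding into the second-order remainder of $d$ involves $\partial_\rvx\vf_{\vtheta^{-}}$; the stated hypotheses only guarantee bounded second derivatives, so this is being used implicitly in both arguments.
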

\begin{proof}
    The proof is based on Taylor series expansion and properties of score functions (\cref{lem:grad_log_p_t}). A complete proof is provided in \cref{app:proof_ct}.
\end{proof}

We refer to \cref{eq:dtct} as the \emph{consistency training} (CT) loss. Crucially, $\mcal{L}(\vtheta, \vtheta^{-})$ only depends on the online network $\vf_\vtheta$, and the target network $\vf_{\vtheta^{-}}$, while being completely agnostic to diffusion model parameters $\vphi$. The loss function $\mcal{L}(\vtheta, \vtheta^{-}) \geq O(\Delta t)$ decreases at a slower rate than the remainder $o(\Delta t)$ and thus will dominate the loss in \cref{eq:cd2} as $N\to\infty$ and $\Delta t \to 0$.

For improved practical performance, we propose to progressively increase $N$ during training according to a schedule function $N(\cdot)$. The intuition (\cf, \cref{fig:ct_adaptive}) is that the consistency training loss has less ``variance'' but more ``bias'' with respect to the underlying consistency distillation loss (\ie, the left-hand side of \cref{eq:cd2}) when $N$ is small (\ie, $\Delta t$ is large), which facilitates faster convergence at the beginning of training. On the contrary, it has more ``variance'' but less ``bias'' when $N$ is large (\ie, $\Delta t$ is small), which is desirable when closer to the end of training. For best performance, we also find that $\mu$ should change along with $N$, according to a schedule function $\mu(\cdot)$. The full algorithm of consistency training is provided in \cref{alg:dtct}, and the schedule functions used in our experiments are given in \cref{app:exp}.

Similar to consistency distillation, the consistency training loss $\mcal{L}_\text{CT}^N (\vtheta, \vtheta^{-})$ can be extended to hold in continuous time (\ie, $N \to \infty$) if $\vtheta^{-} = \operatorname{stopgrad}(\vtheta)$, as shown in \cref{thm:ctct}. This continuous-time loss function does not require schedule functions for $N$ or $\mu$, but requires forward-mode automatic differentiation for efficient implementation. %
Unlike the discrete-time CT loss, there is no undesirable ``bias'' associated with the continuous-time objective, as we effectively take $\Delta t \to 0$ in \cref{thm:ct}. We relegate more details to \cref{sec:ctct}.

\section{Experiments}\label{sec:experiments}
We employ consistency distillation and consistency training to learn consistency models on real image datasets, including CIFAR-10 \cite{krizhevsky2009learning}, ImageNet $64\times 64$ \cite{deng2009imagenet}, LSUN Bedroom $256\times 256$, and LSUN Cat $256\times 256$ \cite{yu2015lsun}. Results are compared according to Fr\'echet Inception Distance (FID, \citet{heusel2017gans}, lower is better), Inception Score (IS, \citet{salimans2016improved}, higher is better), Precision (Prec., \citet{kynkaanniemi2019improved}, higher is better), and Recall (Rec., \citet{kynkaanniemi2019improved}, higher is better). Additional experimental details are provided in \cref{app:exp}.

\begin{figure*}
    \centering
    \begin{subfigure}[b]{0.25\textwidth}
        \includegraphics[width=\textwidth]{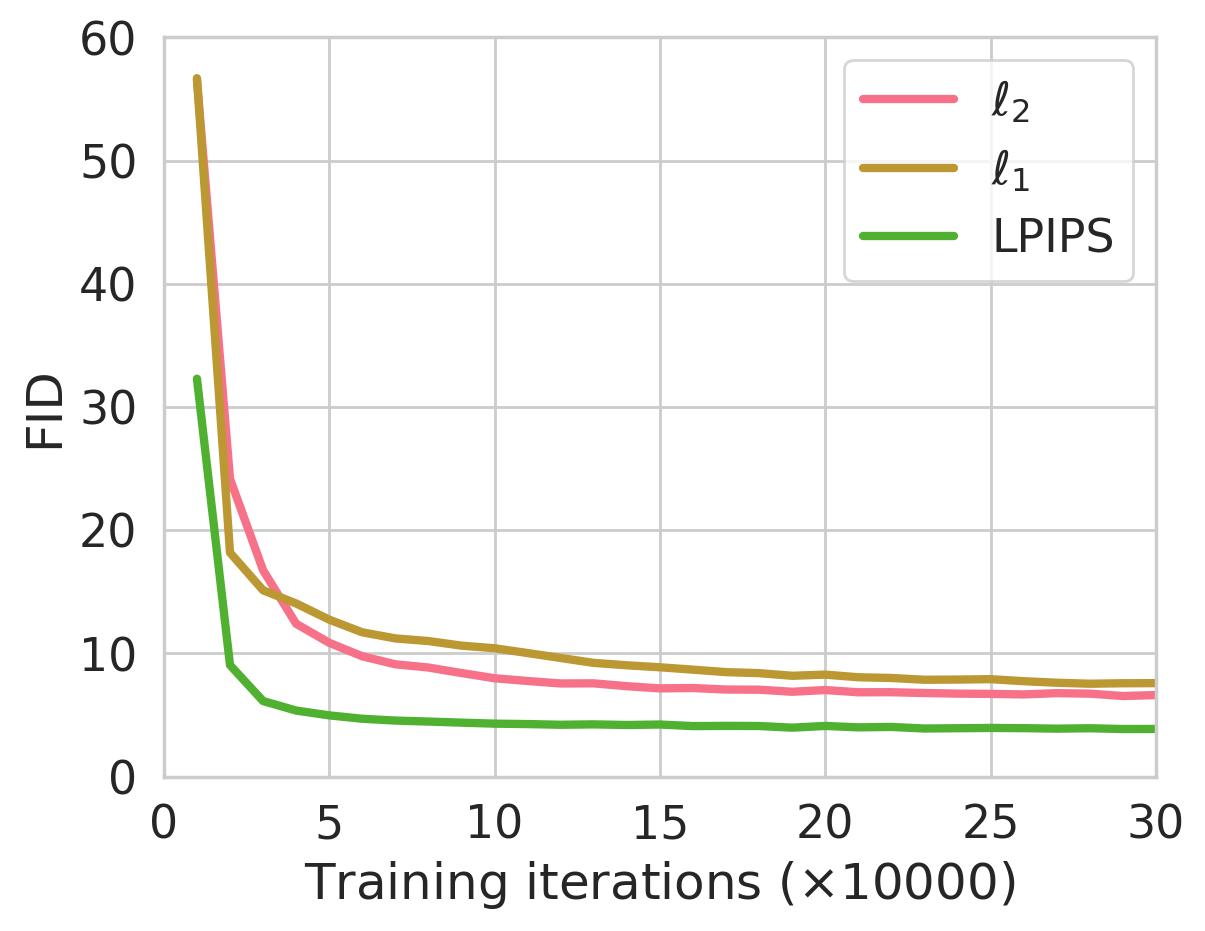}
        \caption{Metric functions in CD.}\label{fig:cd_compare}
    \end{subfigure}%
    \begin{subfigure}[b]{0.25\textwidth}
        \includegraphics[width=\textwidth]{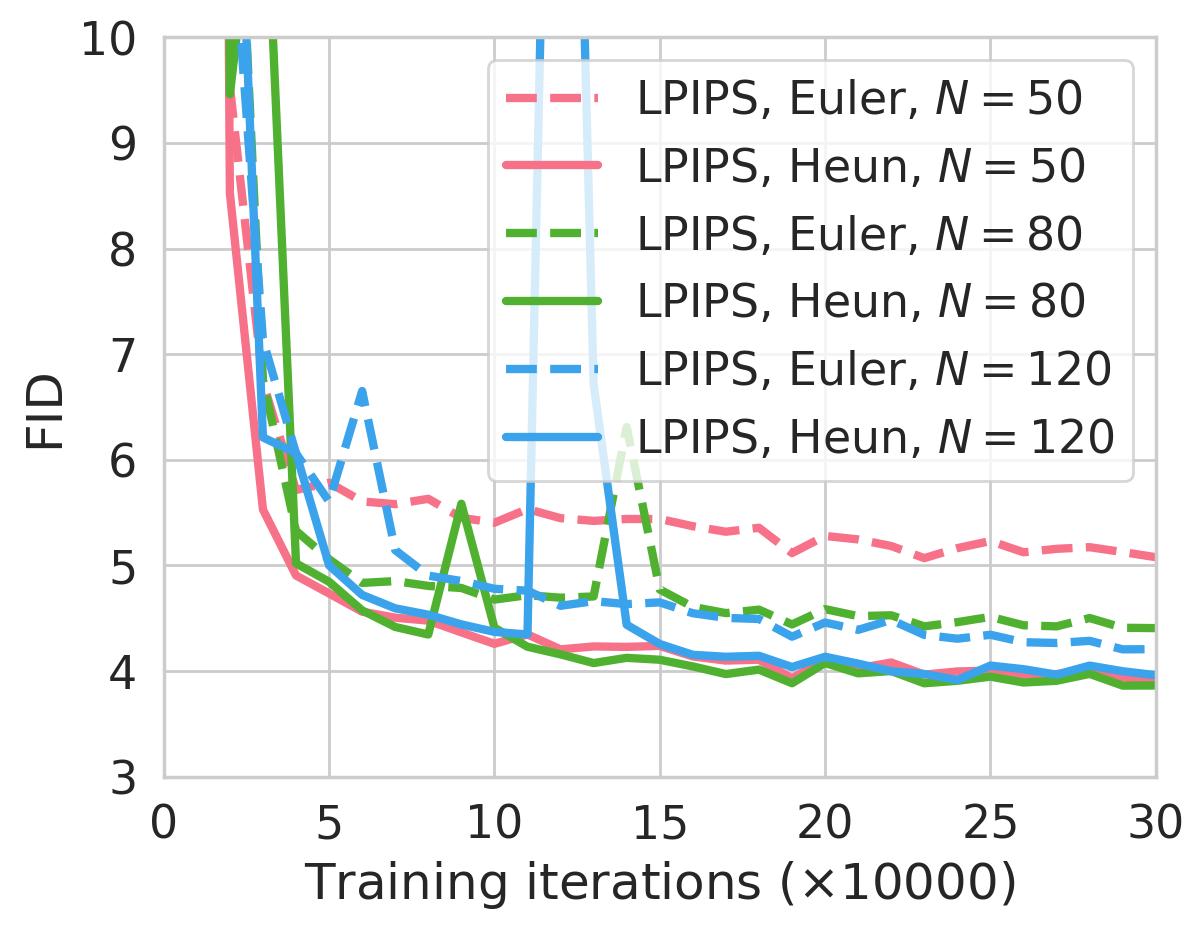}
        \caption{Solvers and $N$ in CD.}\label{fig:cd_solver}
    \end{subfigure}%
    \begin{subfigure}[b]{0.25\textwidth}
        \includegraphics[width=\textwidth]{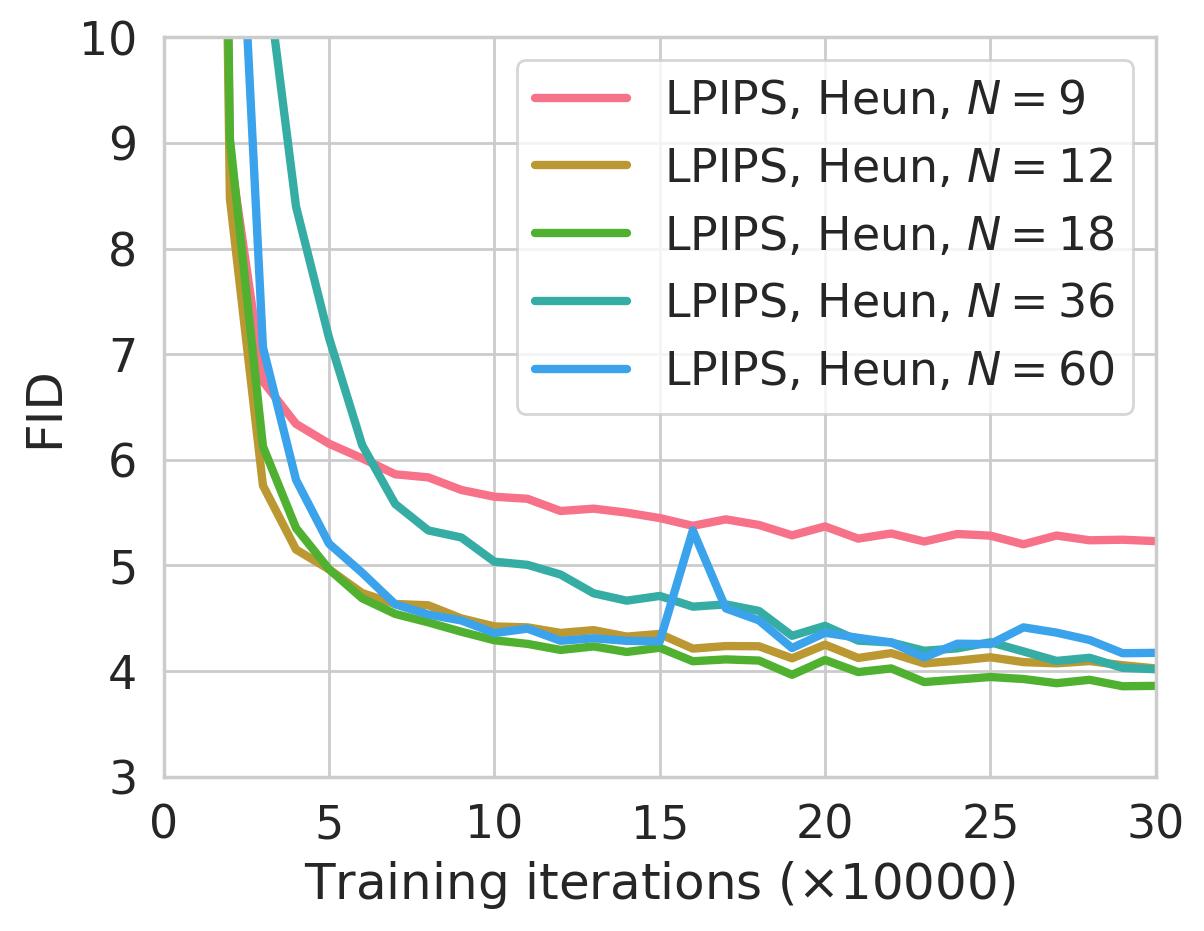}
        \caption{$N$ with Heun solver in CD.}\label{fig:cd_n}
    \end{subfigure}%
    \begin{subfigure}[b]{0.25\textwidth}
        \includegraphics[width=\textwidth]{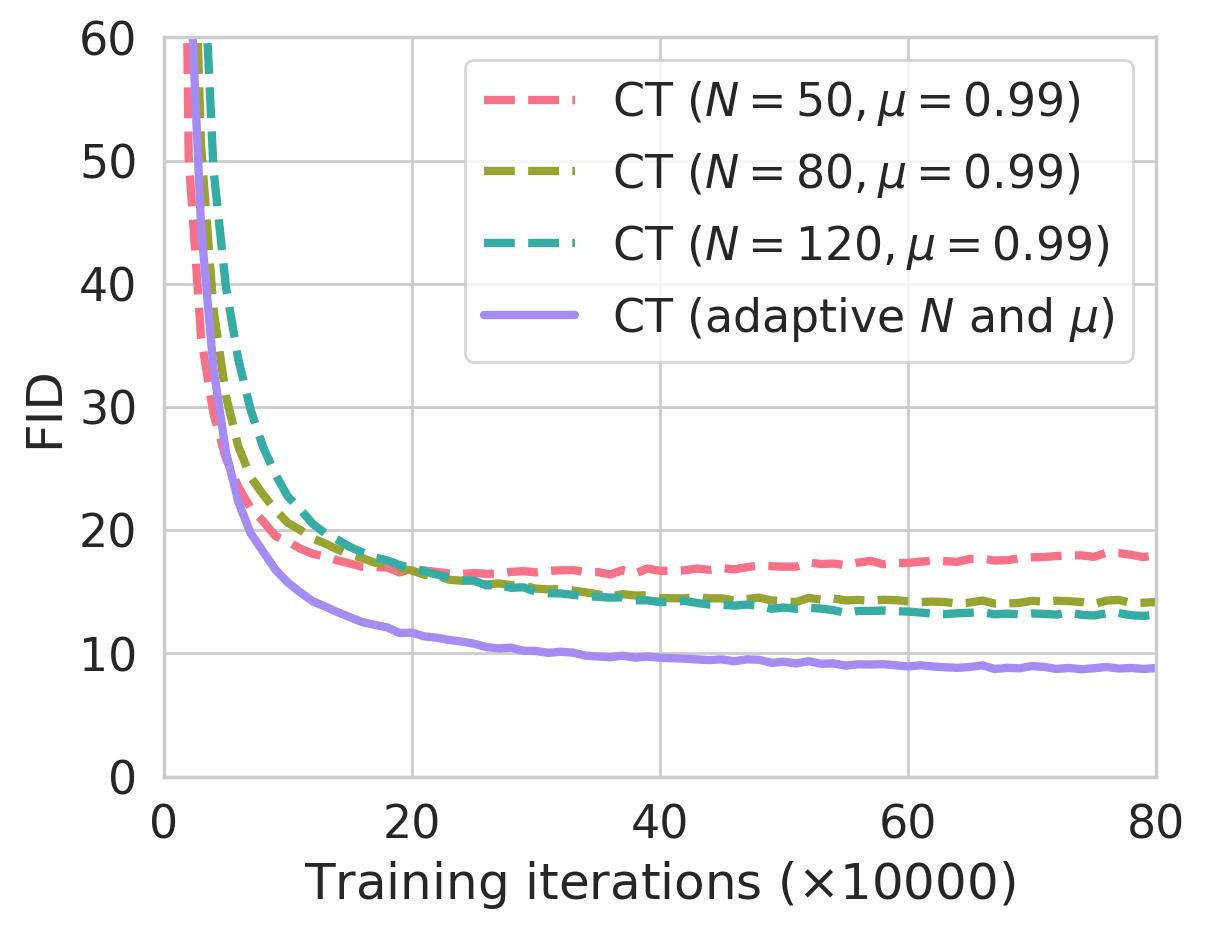}
        \caption{Adaptive $N$ and $\mu$ in CT.}\label{fig:ct_adaptive}
    \end{subfigure}
    \caption{Various factors that affect consistency distillation (CD) and consistency training (CT) on CIFAR-10. The best configuration for CD is LPIPS, Heun ODE solver, and $N=18$. Our adaptive schedule functions for $N$ and $\mu$ make CT converge significantly faster than fixing them to be constants during the course of optimization.}
    \label{fig:cd_ablation}
\end{figure*}

\begin{figure*}
    \centering
    \begin{subfigure}[b]{0.25\textwidth}
        \includegraphics[width=\textwidth]{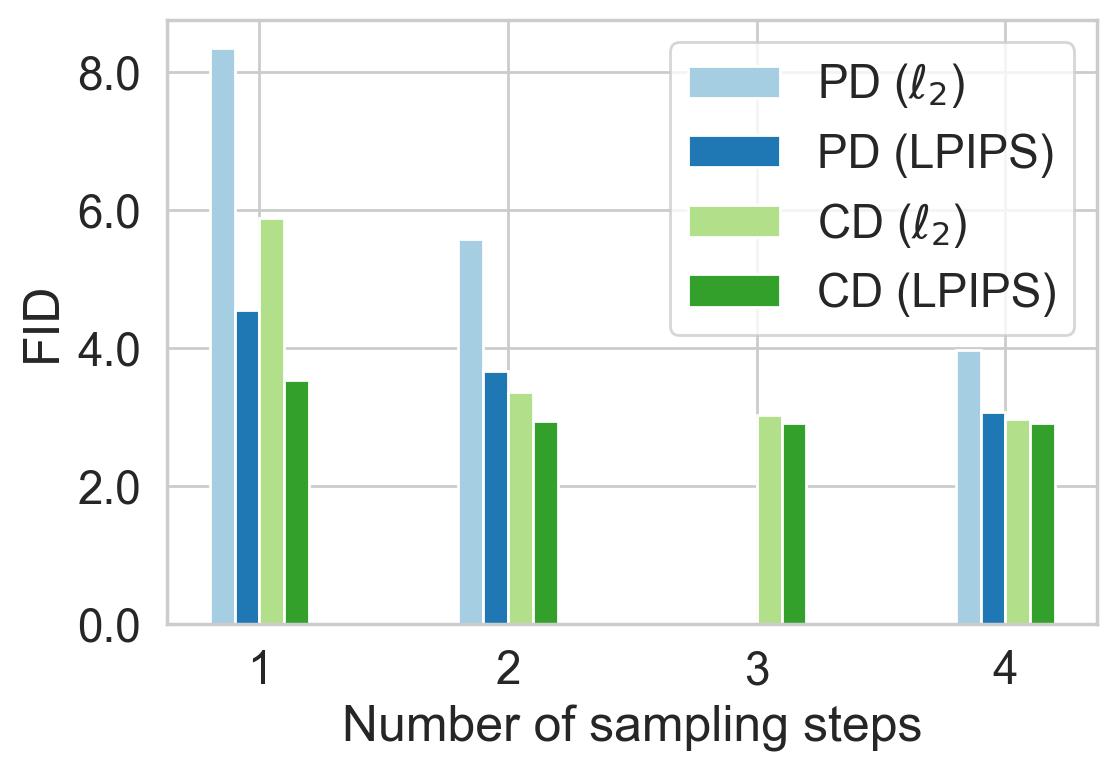}
        \caption{CIFAR-10}
    \end{subfigure}%
    \begin{subfigure}[b]{0.25\textwidth}
        \includegraphics[width=\textwidth]{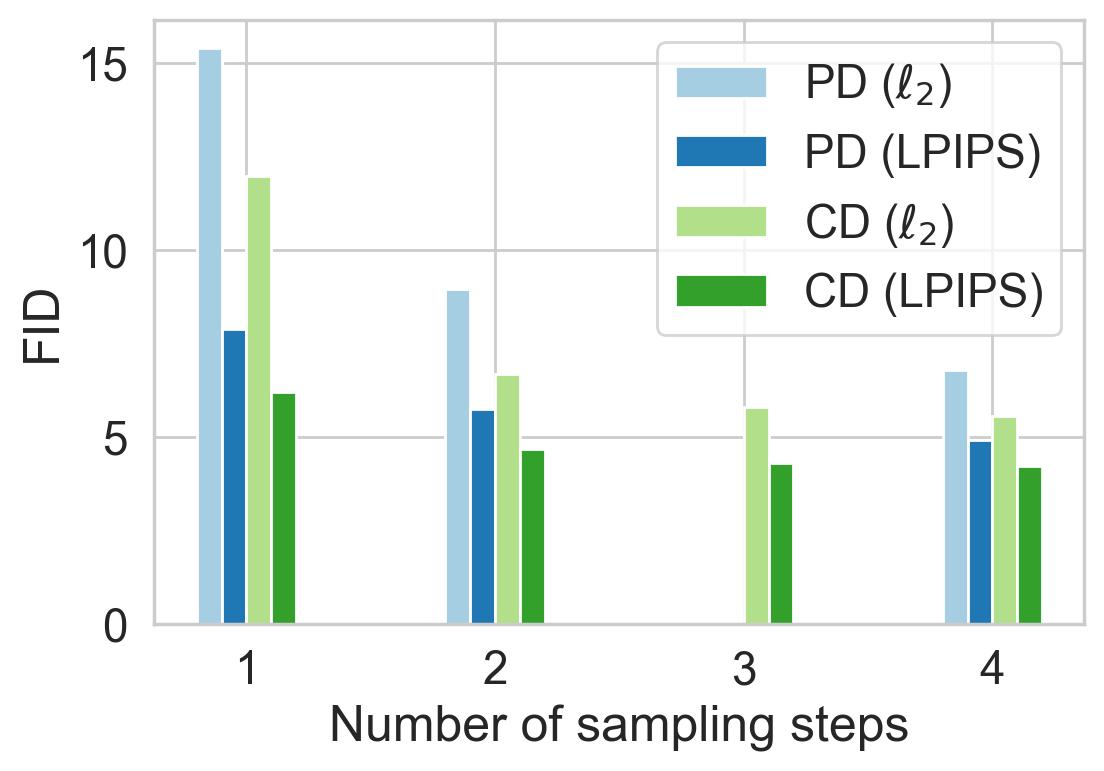}
        \caption{ImageNet $64\times 64$}
    \end{subfigure}%
    \begin{subfigure}[b]{0.25\textwidth}
        \includegraphics[width=\textwidth]{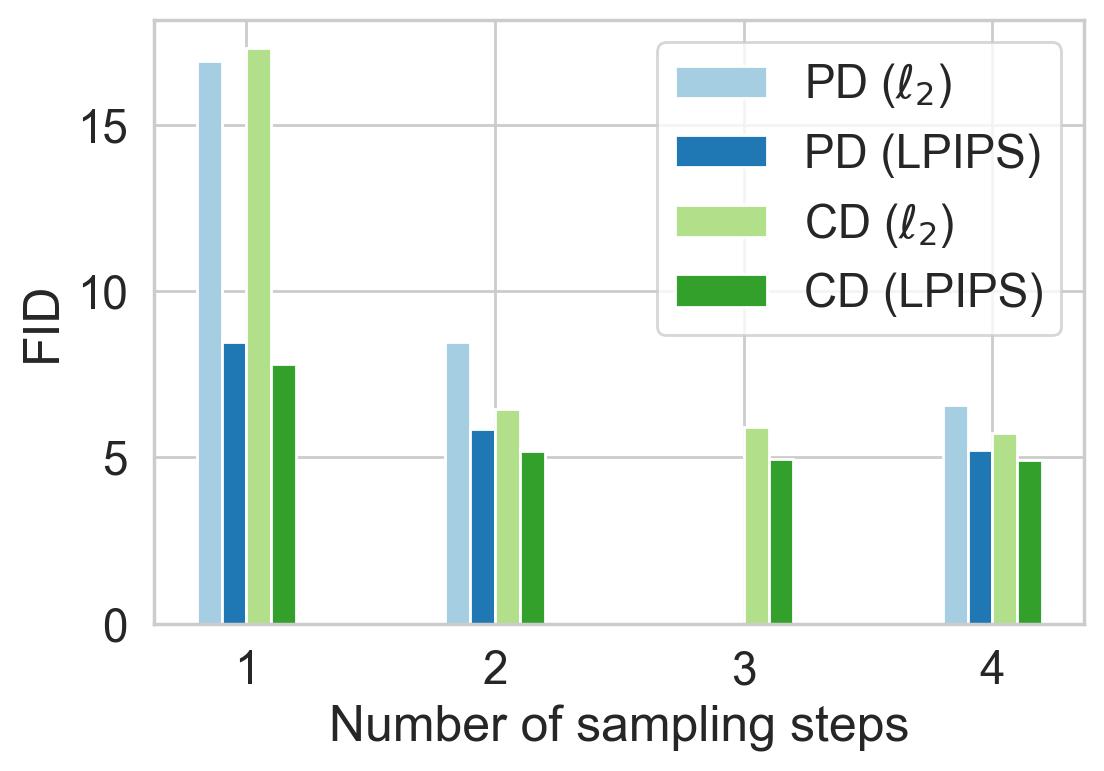}
        \caption{Bedroom $256\times 256$}
    \end{subfigure}%
    \begin{subfigure}[b]{0.25\textwidth}
        \includegraphics[width=\textwidth]{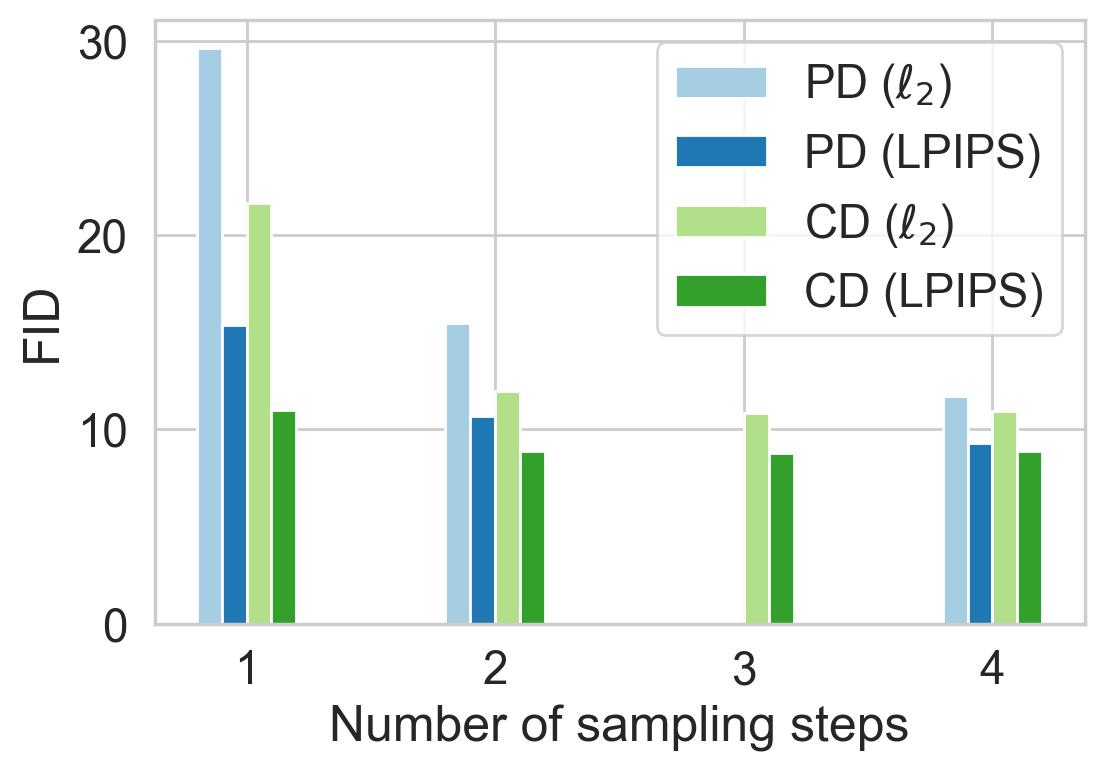}
        \caption{Cat $256\times 256$}
    \end{subfigure}
    \caption{Multistep image generation with consistency distillation (CD). CD outperforms progressive distillation (PD) across all datasets and sampling steps. The only exception is single-step generation on Bedroom $256\times 256$.}
    \label{fig:iterative_sampling}
\end{figure*}

\subsection{Training Consistency Models}
We perform a series of experiments on CIFAR-10 to understand the effect of various hyperparameters on the performance of consistency models trained by consistency distillation (CD) and consistency training (CT). We first focus on the effect of the metric function $d(\cdot, \cdot)$, the ODE solver, and the number of discretization steps $N$ in CD, then investigate the effect of the schedule functions $N(\cdot)$ and $\mu(\cdot)$ in CT.

To set up our experiments for CD, we consider the squared $\ell_2$ distance $d(\rvx, \rvy) = \|\rvx - \rvy\|^2_2$, $\ell_1$ distance $d(\rvx, \rvy) = \|\rvx-\rvy\|_1$, and the Learned Perceptual Image Patch Similarity (LPIPS, \citet{zhang2018perceptual}) as the metric function. For the ODE solver, we compare Euler's forward method and Heun's second order method as detailed in \citet{karras2022edm}. For the number of discretization steps $N$, we compare $N \in \{9, 12, 18, 36, 50, 60, 80, 120\}$. All consistency models trained by CD in our experiments are initialized with the corresponding pre-trained diffusion models, whereas models trained by CT are randomly initialized.

As visualized in \cref{fig:cd_compare}, the optimal metric for CD is LPIPS, which outperforms both $\ell_1$ and $\ell_2$ by a large margin over all training iterations. This is expected as the outputs of consistency models are images on CIFAR-10, and LPIPS is specifically designed for measuring the similarity between natural images. Next, we investigate which ODE solver and which discretization step $N$ work the best for CD. As shown in \cref{fig:cd_solver,fig:cd_n}, Heun ODE solver and $N=18$ are the best choices. Both are in line with the recommendation of \citet{karras2022edm} despite the fact that we are training consistency models, not diffusion models. Moreover, \cref{fig:cd_solver} shows that with the same $N$, Heun's second order solver uniformly outperforms Euler's first order solver. This corroborates with \cref{thm:convergence}, which states that the optimal consistency models trained by higher order ODE solvers have smaller estimation errors with the same $N$. The results of \cref{fig:cd_n} also indicate that once $N$ is sufficiently large, the performance of CD becomes insensitive to $N$. Given these insights, we hereafter use LPIPS and Heun ODE solver for CD unless otherwise stated. For $N$ in CD, we follow the suggestions in \citet{karras2022edm} on CIFAR-10 and ImageNet $64\times 64$. We tune $N$ separately on other datasets (details in \cref{app:exp}).

Due to the strong connection between CD and CT, we adopt LPIPS for our CT experiments throughout this paper. Unlike CD, there is no need for using Heun's second order solver in CT as the loss function does not rely on any particular numerical ODE solver. As demonstrated in \cref{fig:ct_adaptive}, the convergence of CT is highly sensitive to $N$---smaller $N$ leads to faster convergence but worse samples, whereas larger $N$ leads to slower convergence but better samples upon convergence. This matches our analysis in \cref{sec:generation}, and motivates our practical choice of progressively growing $N$ and $\mu$ for CT to balance the trade-off between convergence speed and sample quality. As shown in \cref{fig:ct_adaptive}, adaptive schedules of $N$ and $\mu$ significantly improve the convergence speed and sample quality of CT. In our experiments, we tune the schedules $N(\cdot)$ and $\mu(\cdot)$ separately for images of different resolutions, with more details in \cref{app:exp}.

\begin{table*}
    \begin{minipage}[t]{0.49\linewidth}
	\caption{Sample quality on CIFAR-10. $^\ast$Methods that require synthetic data construction for distillation.}\label{tab:results}
	\centering
	{\setlength{\extrarowheight}{1.5pt}
	\begin{adjustbox}{max width=\linewidth}
	\begin{tabular}{lccc}
        \Xhline{3\arrayrulewidth}
	    METHOD & NFE ($\downarrow$) & FID ($\downarrow$) & IS ($\uparrow$) \\
        \\[-2ex]
        \multicolumn{4}{l}{\textbf{Diffusion + Samplers}}\\\Xhline{3\arrayrulewidth}
        DDIM \cite{song2020denoising}
        & 50 & 4.67\\
        DDIM \cite{song2020denoising}
        & 20 & 6.84\\
        DDIM \cite{song2020denoising}
        & 10 & 8.23\\
        DPM-solver-2 \cite{lu2022dpm}
        & 10 & 5.94\\
        DPM-solver-fast \cite{lu2022dpm}
        & 10 & 4.70 \\
        3-DEIS \cite{zhang2022fast}
        & 10 & \textbf{4.17}\\
        \\[-2ex]
        \multicolumn{4}{l}{\textbf{Diffusion + Distillation}}\\\Xhline{3\arrayrulewidth}
        Knowledge Distillation$^\ast$ \cite{luhman2021knowledge}
         & 1 & 9.36 &  \\
        DFNO$^\ast$ \cite{zheng2022fast}
        & 1 & 4.12 & \\
        1-Rectified Flow (+distill)$^\ast$ \cite{liu2022flow}
         & 1 & 6.18 & 9.08\\
        2-Rectified Flow (+distill)$^\ast$ \cite{liu2022flow}
         & 1 & 4.85 & 9.01\\
        3-Rectified Flow (+distill)$^\ast$ \cite{liu2022flow}
         & 1 & 5.21 & 8.79\\
        PD \cite{salimans2022progressive} & 1 & 8.34 & 8.69 \\
        \textbf{CD} & 1 & \textbf{3.55} & \textbf{9.48} \\
        \hline
        PD \cite{salimans2022progressive}  & 2 & 5.58 & 9.05 \\
        \textbf{CD}  & 2 & \textbf{2.93} & \textbf{9.75} \\
        \\[-3ex]
        \multicolumn{4}{l}{\textbf{Direct Generation}}\\\Xhline{3\arrayrulewidth}
        BigGAN \cite{brock2018large} & 1 & 14.7 & 9.22\\
         Diffusion GAN \cite{xiao2022tackling} & 1 & 14.6 & 8.93\\
         AutoGAN \cite{gong2019autogan} & 1 & 12.4 & 8.55\\
         E2GAN \cite{tian2020off} & 1 & 11.3 & 8.51\\
        ViTGAN \cite{lee2021vitgan}
         & 1 & 6.66 & 9.30 \\
         TransGAN \cite{jiang2021transgan} & 1 & 9.26 & 9.05 \\
        StyleGAN2-ADA \cite{karras2020analyzing}
         & 1 & 2.92 & \textbf{9.83}\\
        StyleGAN-XL \cite{sauer2022stylegan} & 1 & \textbf{1.85} & \\
        \hline
        Score SDE \cite{song2021scorebased} & 2000 & 2.20 & \textbf{9.89}\\
        DDPM \cite{ho2020denoising} & 1000 & 3.17 & 9.46\\
        LSGM \cite{vahdat2021score} & 147 & 2.10 & \\
        PFGM \cite{xu2022poisson} & 110 & 2.35 & 9.68\\
        EDM \cite{karras2022edm}
         & 35 & \textbf{2.04} & 9.84 \\
         \hline
        1-Rectified Flow \cite{liu2022flow}
         & 1 & 378 & 1.13\\
        Glow \cite{kingma2018glow}
         & 1 & 48.9 & 3.92 \\
        Residual Flow \cite{chen2019residual} & 1 & 46.4\\
        GLFlow \cite{xiao2019generative} & 1 & 44.6 & \\
        DenseFlow \cite{grcic2021densely} & 1 & 34.9 & \\
        DC-VAE \cite{parmar2021dual} & 1 & 17.9 & 8.20 \\
        \textbf{CT} & 1 & \textbf{8.70} & \textbf{8.49} \\
        \hline
        \textbf{CT}  & 2 & \textbf{5.83} & \textbf{8.85} \\
	\end{tabular}
    \end{adjustbox}
	}
\end{minipage}
\hfill
\begin{minipage}[t]{0.49\linewidth}
	\caption{Sample quality on ImageNet $64\times 64$, and LSUN Bedroom \& Cat $256\times 256$. $^\dagger$Distillation techniques. }\label{tab:results2}
	\centering
	{\setlength{\extrarowheight}{1.5pt}
	\begin{adjustbox}{max width=\linewidth}
	\begin{tabular}{lcccc}
	    \Xhline{3\arrayrulewidth}
        METHOD & NFE ($\downarrow$) & FID ($\downarrow$) & Prec. ($\uparrow$) & Rec. ($\uparrow$) \\
        \\[-2ex]
        \multicolumn{5}{l}{\textbf{ImageNet $\bm{64\times 64}$}}\\
        \Xhline{3\arrayrulewidth}
        PD$^\dagger$ \cite{salimans2022progressive} & 1 & 15.39 & 0.59 & 0.62\\
        DFNO$^{\dagger}$ \cite{zheng2022fast} & 1 & 8.35 & & \\
        \textbf{CD}$^\dagger$ & 1 & 6.20 & 0.68 & 0.63\\
        PD$^\dagger$ \cite{salimans2022progressive} & 2 & 8.95  & 0.63 & \textbf{0.65}\\
        \textbf{CD}$^\dagger$ & 2 & \textbf{4.70} & \textbf{0.69} & 0.64\\
        \hline
        ADM
        \cite{dhariwal2021diffusion}
        & 250 & \textbf{2.07} & 0.74 & 0.63 \\
        EDM
        \cite{karras2022edm}
        & 79 & 2.44 & 0.71 & \textbf{0.67} \\
        BigGAN-deep
        \cite{brock2018large}
        & 1 & 4.06 & \textbf{0.79} & 0.48 \\
        \textbf{CT} & 1 & 13.0 & 0.71 & 0.47\\
        \textbf{CT} & 2 & 11.1 & 0.69 & 0.56\\
        \\[-2ex]
        \multicolumn{5}{l}{\textbf{LSUN Bedroom $\bm{256\times 256}$}}\\
        \Xhline{3\arrayrulewidth}
        PD$^\dagger$ \cite{salimans2022progressive} & 1 & 16.92 & 0.47 & 0.27\\
        PD$^\dagger$ \cite{salimans2022progressive} & 2 & 8.47 & 0.56 & \textbf{0.39} \\
        \textbf{CD}$^\dagger$ & 1 & 7.80 & 0.66 & 0.34\\
        \textbf{CD}$^\dagger$ & 2 & \textbf{5.22} & \textbf{0.68} & \textbf{0.39}\\
        \hline
        DDPM
        \cite{ho2020denoising}
        & 1000 & 4.89 & 0.60 & 0.45\\
        ADM
        \cite{dhariwal2021diffusion}
        & 1000 & \textbf{1.90} & 0.66 &\textbf{0.51} \\
        EDM
        \cite{karras2022edm}
        & 79 & 3.57 & 0.66 & 0.45 \\
        PGGAN \cite{karras2018progressive} & 1 & 8.34 & &  \\
        PG-SWGAN \cite{wu2019sliced} & 1 & 8.0 & &  \\
        TDPM (GAN) \cite{zheng2023truncated} & 1 & 5.24 & & \\
        StyleGAN2 \cite{karras2020analyzing} & 1 & 2.35 & 0.59 & 0.48 \\
        \textbf{CT} & 1 & 16.0 & 0.60 & 0.17\\
        \textbf{CT} & 2 & 7.85 & \textbf{0.68} & 0.33\\
        \\[-2ex]
        \multicolumn{5}{l}{\textbf{LSUN Cat $\bm{256\times 256}$}}\\
        \Xhline{3\arrayrulewidth}
        PD$^\dagger$ \cite{salimans2022progressive} & 1 & 29.6 & 0.51 & 0.25 \\
        PD$^\dagger$ \cite{salimans2022progressive} & 2 & 15.5 & 0.59 & 0.36 \\
        \textbf{CD}$^\dagger$ & 1 & 11.0 & 0.65 & 0.36 \\
        \textbf{CD}$^\dagger$ & 2 & \textbf{8.84} & \textbf{0.66} & \textbf{0.40} \\
        \hline
        DDPM
        \cite{ho2020denoising}
        & 1000 & 17.1 & 0.53 & 0.48 \\
        ADM
        \cite{dhariwal2021diffusion}
        & 1000 & \textbf{5.57} & 0.63 & \textbf{0.52} \\
        EDM
        \cite{karras2022edm}
        & 79 & 6.69 & \textbf{0.70} & 0.43 \\
        PGGAN \cite{karras2018progressive} & 1 & 37.5 & & \\
        StyleGAN2 \cite{karras2020analyzing} & 1 & 7.25 & 0.58 & 0.43\\
        \textbf{CT} & 1 & 20.7 & 0.56 & 0.23 \\
        \textbf{CT} & 2 & 11.7 & 0.63 & 0.36
	\end{tabular}
    \end{adjustbox}
    }
\end{minipage}
\end{table*}

\subsection{Few-Step Image Generation}
\textbf{Distillation}~ In current literature, the most directly comparable approach to our consistency distillation (CD) is progressive distillation (PD, \citet{salimans2022progressive}); both are thus far the only distillation approaches that \emph{do not construct synthetic data before distillation}. In stark contrast, other distillation techniques, such as knowledge distillation \cite{luhman2021knowledge} and DFNO \cite{zheng2022fast}, have to prepare a large synthetic dataset by generating numerous samples from the diffusion model with expensive numerical ODE/SDE solvers. We perform comprehensive comparison for PD and CD on CIFAR-10, ImageNet $64\times 64$, and LSUN $256\times 256$, with all results reported in \cref{fig:iterative_sampling}. All methods distill from an EDM \cite{karras2022edm} model that we pre-trained in-house. We note that across all sampling iterations, \emph{using the LPIPS metric uniformly improves PD compared to the squared $\ell_2$ distance in the original paper of \citet{salimans2022progressive}}. Both PD and CD improve as we take more sampling steps. We find that CD uniformly outperforms PD across all datasets, sampling steps, and metric functions considered, except for single-step generation on Bedroom $256\times 256$, where CD with $\ell_2$ slightly underperforms PD with $\ell_2$. As shown in \cref{tab:results}, CD even outperforms distillation approaches that require synthetic dataset construction, such as Knowledge Distillation \cite{luhman2021knowledge} and DFNO \cite{zheng2022fast}.

\begin{figure*}
    \centering
    \begin{subfigure}[b]{0.33\textwidth}
        \includegraphics[width=\textwidth]{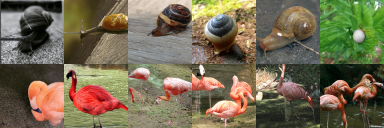}
    \end{subfigure}\hfill
    \begin{subfigure}[b]{0.33\textwidth}
        \includegraphics[width=\textwidth]{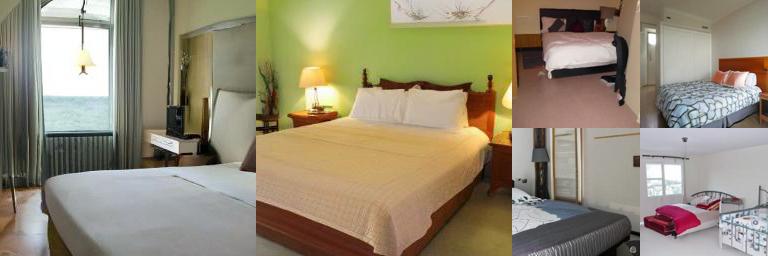}
    \end{subfigure}\hfill
    \begin{subfigure}[b]{0.33\textwidth}
        \includegraphics[width=\textwidth]{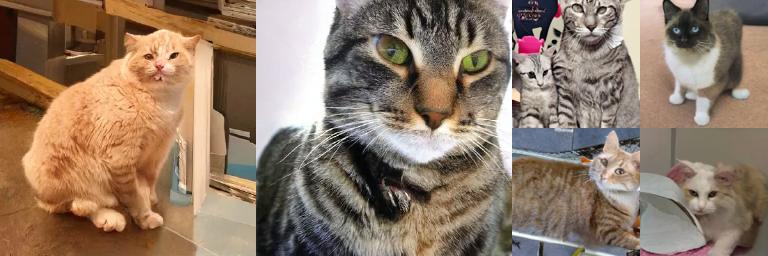}
    \end{subfigure}
    \begin{subfigure}[b]{0.33\textwidth}
        \includegraphics[width=\textwidth]{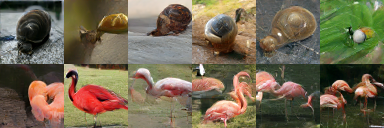}
    \end{subfigure}\hfill
    \begin{subfigure}[b]{0.33\textwidth}
        \includegraphics[width=\textwidth]{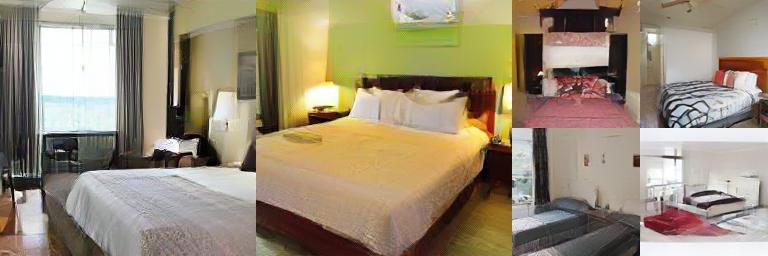}
    \end{subfigure}\hfill
    \begin{subfigure}[b]{0.33\textwidth}
        \includegraphics[width=\textwidth]{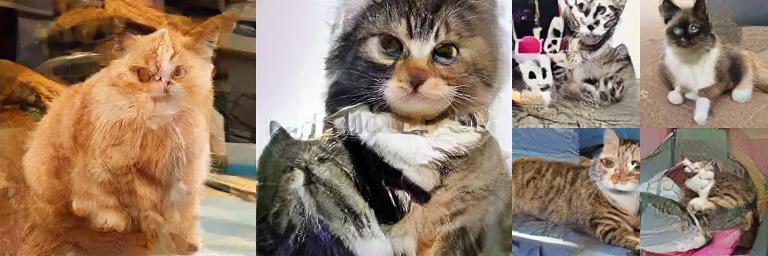}
    \end{subfigure}
    \begin{subfigure}[b]{0.33\textwidth}
        \includegraphics[width=\textwidth]{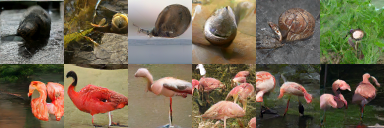}
    \end{subfigure}\hfill
    \begin{subfigure}[b]{0.33\textwidth}
        \includegraphics[width=\textwidth]{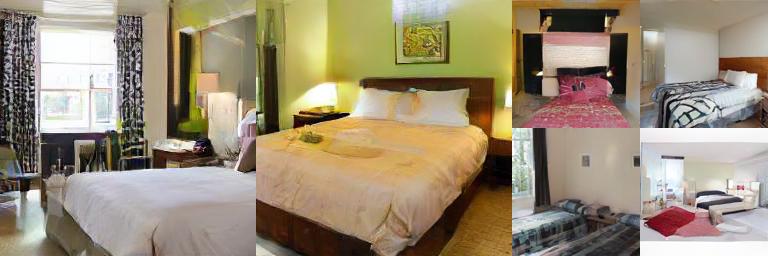}
    \end{subfigure}\hfill
    \begin{subfigure}[b]{0.33\textwidth}
        \includegraphics[width=\textwidth]{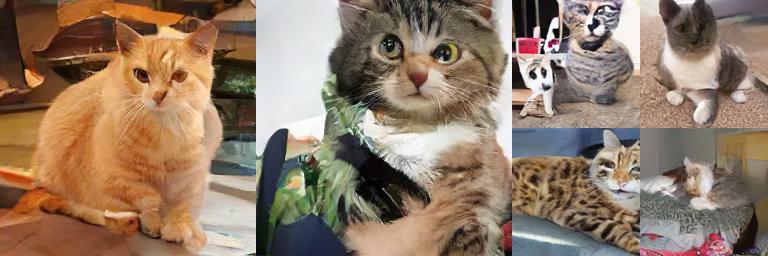}
    \end{subfigure}
    \caption{Samples generated by EDM (\emph{top}), CT + single-step generation (\emph{middle}), and CT + 2-step generation (\emph{Bottom}). All corresponding images are generated from the same initial noise.}
    \label{fig:samples}
\end{figure*}

\begin{figure*}
    \centering
    \begin{subfigure}[b]{\textwidth}
        \includegraphics[width=0.11\textwidth]{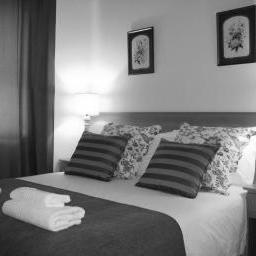}\hfill
        \includegraphics[width=0.77\textwidth]{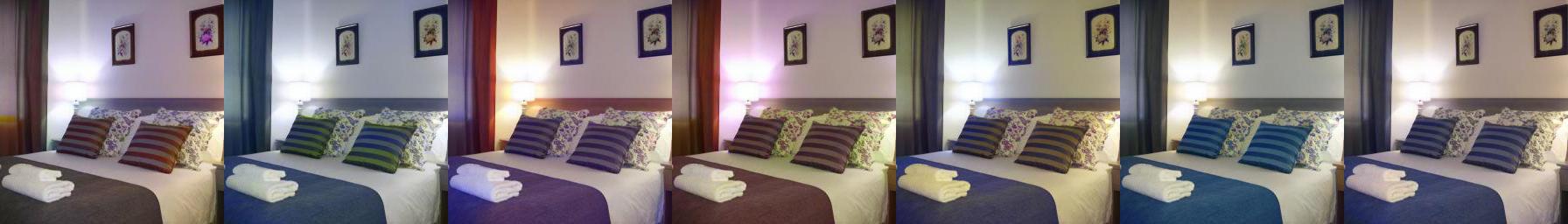}\hfill
        \includegraphics[width=0.11\textwidth]{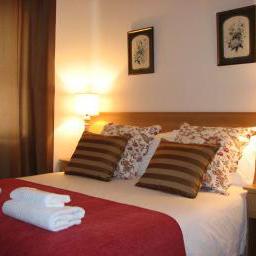}
        \caption{\emph{Left}: The gray-scale image. \emph{Middle}: Colorized images. \emph{Right}: The ground-truth image.}
        \label{fig:bedroom_colorization_lite}
    \end{subfigure}
    \begin{subfigure}[b]{\textwidth}
        \includegraphics[width=0.11\textwidth]{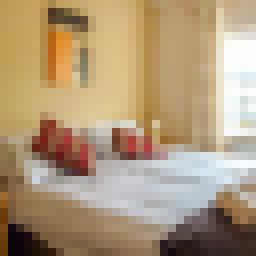}\hfill
        \includegraphics[width=0.77\textwidth]{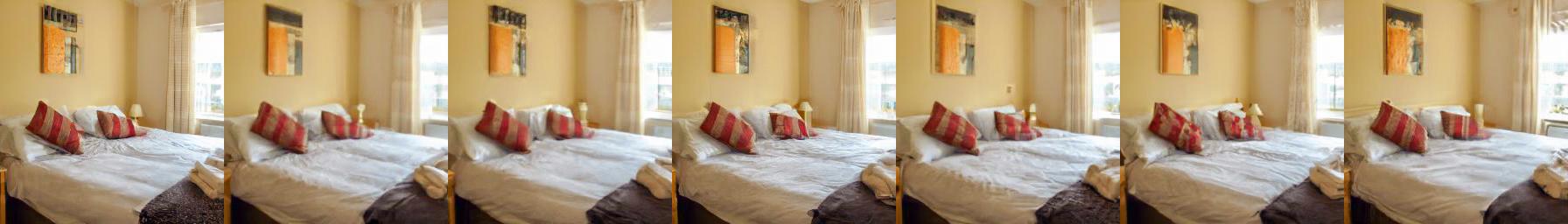}\hfill
        \includegraphics[width=0.11\textwidth]{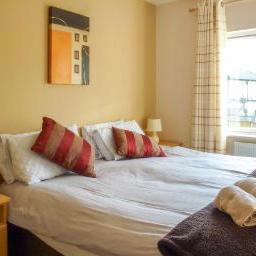}
        \caption{\emph{Left}: The downsampled image ($32\times 32$). \emph{Middle}: Full resolution images ($256\times 256$). \emph{Right}: The ground-truth image ($256\times 256$).}
        \label{fig:bedroom_superres_lite}
    \end{subfigure}
    \begin{subfigure}[b]{\textwidth}
        \includegraphics[width=0.1105\textwidth]{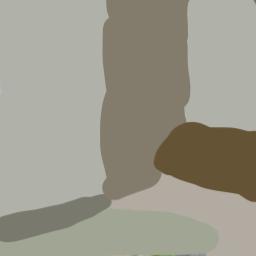}\hfill
        \includegraphics[width=0.884\textwidth]{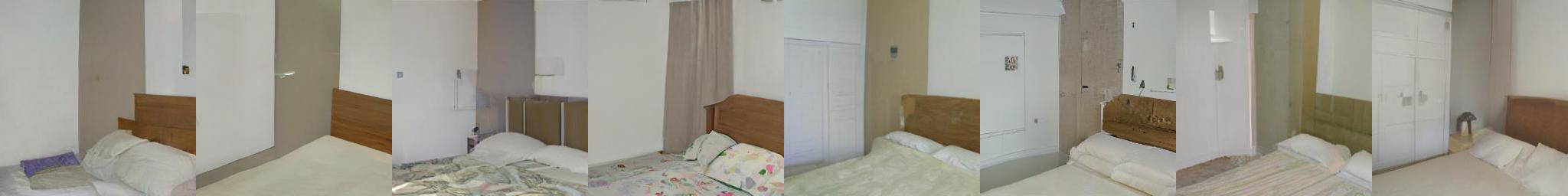}
        \caption{\emph{Left}: A stroke input provided by users. \emph{Right}: Stroke-guided image generation.}
        \label{fig:bedroom_sdedit_lite}
    \end{subfigure}
    \caption{Zero-shot image editing with a consistency model trained by consistency distillation on LSUN Bedroom $256\times 256$.}
\end{figure*}

\textbf{Direct Generation}~ In \cref{tab:results,tab:results2}, we compare the sample quality of consistency training (CT) with other generative models using one-step and two-step generation. We also include PD and CD results for reference. Both tables report PD results obtained from the $\ell_2$ metric function, as this is the default setting used in the original paper of \citet{salimans2022progressive}. For fair comparison, we ensure PD and CD distill the same EDM models. In \cref{tab:results,tab:results2}, we observe that CT outperforms existing single-step, non-adversarial generative models, \ie, VAEs and normalizing flows, by a significant margin on CIFAR-10.
Moreover, \emph{CT achieves comparable quality to one-step samples from PD without relying on distillation}. In \cref{fig:samples}, we provide EDM samples (top), single-step CT samples (middle), and two-step CT samples (bottom). In \cref{app:samples}, we show additional samples for both CD and CT in \cref{fig:cifar10_full_cd,fig:imagenet64_full_cd,fig:bedroom_full_cd,fig:cat_full_cd,fig:cifar10_full,fig:imagenet64_full,fig:bedroom_full,fig:cat_full}. Importantly, \emph{all samples obtained from the same initial noise vector share significant structural similarity}, even though CT and EDM models are trained independently from one another. This indicates that CT is less likely to suffer from mode collapse, as EDMs do not.

\subsection{Zero-Shot Image Editing}\label{sec:zeroshot}
Similar to diffusion models, consistency models allow zero-shot image editing by modifying the multistep sampling process in \cref{alg:sampling}. We demonstrate this capability with a consistency model trained on the LSUN bedroom dataset using consistency distillation. In \cref{fig:bedroom_colorization_lite}, we show such a consistency model can colorize gray-scale bedroom images at test time, even though it has never been trained on colorization tasks. In \cref{fig:bedroom_superres_lite}, we show the same consistency model can generate high-resolution images from low-resolution inputs. In \cref{fig:bedroom_sdedit_lite}, we additionally demonstrate that it can generate images based on stroke inputs created by humans, as in SDEdit for diffusion models \cite{meng2021sdedit}. Again, this editing capability is zero-shot, as the model has not been trained on stroke inputs. In \cref{app:editing}, we additionally demonstrate the zero-shot capability of consistency models on inpainting (\cref{fig:bedroom_inpainting}), interpolation (\cref{fig:bedroom_interp}) and denoising (\cref{fig:bedroom_denoising}), with more examples on colorization (\cref{fig:bedroom_colorization}), super-resolution (\cref{fig:bedroom_superres}) and stroke-guided image generation (\cref{fig:bedroom_sdedit}).

\section{Conclusion}\label{sec:conclusion}
We have introduced consistency models, a type of generative models that are specifically designed to support one-step and few-step generation. We have empirically demonstrated that our consistency distillation method outshines the existing distillation techniques for diffusion models on multiple image benchmarks and small sampling iterations. Furthermore, as a standalone generative model, consistency models generate better samples than existing single-step generation models except for GANs. Similar to diffusion models, they also allow zero-shot image editing applications such as inpainting, colorization, super-resolution, denoising, interpolation, and stroke-guided image generation.

In addition, consistency models share striking similarities with techniques employed in other fields, including deep Q-learning \cite{mnih2015human} and momentum-based contrastive learning \cite{grill2020bootstrap,he2020momentum}. This offers exciting prospects for cross-pollination of ideas and methods among these diverse fields.

\section*{Acknowledgements}
We thank Alex Nichol for reviewing the manuscript and providing valuable feedback, Chenlin Meng for providing stroke inputs needed in our stroke-guided image generation experiments, and the OpenAI Algorithms team.

\bibliography{bibliography}
\bibliographystyle{icml2023}

\newpage
\appendix
\onecolumn
\setcounter{tocdepth}{4}
\tableofcontents
\allowdisplaybreaks
\begin{appendices}
\section{Proofs}\label{app:proof}
\subsection{Notations}
We use $\vf_{\vtheta}(\rvx, t)$ to denote a consistency model parameterized by $\vtheta$, and $\vf(\rvx, t; \vphi)$ the consistency function of the empirical PF ODE in \cref{eq:e_pfode}. Here $\vphi$ symbolizes its dependency on the pre-trained score model $\vs_\vphi(\rvx, t)$. For the consistency function of the PF ODE in \cref{eq:pfode}, we denote it as $\vf(\rvx, t)$. Given a multi-variate function $\vh(\rvx, \rvy)$, we let $\partial_1 \vh(\rvx, \rvy)$ denote the Jacobian of $\vh$ over $\rvx$, and analogously $\partial_2 \vh(\rvx, \rvy)$ denote the Jacobian of $\vh$ over $\rvy$. Unless otherwise stated, $\rvx$ is supposed to be a random variable sampled from the data distribution $p_\text{data}(\rvx)$, $n$ is sampled uniformly at random from $\llbracket 1, N-1 \rrbracket$, and $\rvx_{t_{n}}$ is sampled from $\mcal{N}(\rvx; t_n^2 \mI)$. Here $\llbracket 1, N-1 \rrbracket$ represents the set of integers $\{1,2,\cdots, N-1\}$. Furthermore, recall that we define
\begin{align*}
    \hat{\rvx}_{t_n}^\vphi \coloneqq \rvx_{t_{n+1}} + (t_n - t_{n+1})\Phi(\rvx_{t_{n+1}}, t_{n+1}; \vphi),
\end{align*}
where $\Phi(\cdots; \vphi)$ denotes the update function of a one-step ODE solver for the empirical PF ODE defined by the score model $\vs_\vphi(\rvx, t)$. By default, $\mbb{E}[\cdot]$ denotes the expectation over all relevant random variables in the expression.

\subsection{Consistency Distillation}\label{app:proof_cd}
\begin{customthm}{\ref{thm:convergence}}
Let $\Delta t \coloneqq \max_{n \in \llbracket 1, N-1\rrbracket}\{|t_{n+1} - t_{n}|\}$, and $\vf(\cdot,\cdot;\vphi)$ be the consistency function of the empirical PF ODE in \cref{eq:e_pfode}. Assume $\vf_\vtheta$ satisfies the Lipschitz condition: there exists $L > 0$ such that for all $t \in [\epsilon, T]$, $\rvx$, and $\rvy$, we have $\norm{\vf_\vtheta(\rvx, t) - \vf_\vtheta(\rvy, t)}_2 \leq L \norm{\rvx - \rvy}_2$. Assume further that for all $n \in \llbracket 1, N-1 \rrbracket$, the ODE solver called at $t_{n+1}$ has local error uniformly bounded by $O((t_{n+1} - t_n)^{p+1})$ with $p\geq 1$. Then, if $\mcal{L}_\text{CD}^N(\vtheta, \vtheta; \vphi) = 0$, we have
\begin{align*}
    \sup_{n, \rvx}\|\vf_{\vtheta}(\rvx, t_n) - \vf(\rvx, t_n; \vphi)\|_2 = O((\Delta t)^p).
\end{align*}
\end{customthm}
\begin{proof}
    From $\mcal{L}_\text{CD}^N(\vtheta, \vtheta; \vphi) = 0$, we have
    \begin{align}
        \mcal{L}_\text{CD}^N(\vtheta, \vtheta; \vphi) = \mbb{E}[\lambda(t_n) d(\vf_\vtheta({\rvx}_{t_{n+1}}, t_{n+1}), \vf_{\vtheta}(\hat{\rvx}_{t_n}^\vphi, t_n))] = 0.\label{eq:zero_loss}
    \end{align}
    According to the definition, we have $p_{t_n}(\rvx_{t_n}) = p_\text{data}(\rvx) \otimes \mcal{N}(\bm{0}, t_n^2 \mI)$ where $t_n \geq \epsilon > 0$. It follows that $p_{t_n}(\rvx_{t_n}) > 0$ for every $\rvx_{t_n}$ and $1 \leq n \leq N$. Therefore, \cref{eq:zero_loss} entails
    \begin{align}
        \lambda(t_n) d(\vf_\vtheta({\rvx}_{t_{n+1}}, t_{n+1}), \vf_{\vtheta}(\hat{\rvx}_{t_n}^\vphi, t_n)) \equiv 0.
    \end{align}
    Because $\lambda(\cdot) > 0$ and $d(\rvx, \rvy) = 0 \Leftrightarrow \rvx = \rvy$, this further implies that
    \begin{align}
        \vf_\vtheta(\rvx_{t_{n+1}}, t_{n+1}) \equiv \vf_{\vtheta}(\hat{\rvx}_{t_n}^\vphi, t_n).\label{eq:zero_loss_identity}
    \end{align}
    Now let $\ve_{n}$ represent the error vector at $t_n$, which is defined as
    \begin{align*}
        \ve_{n} \coloneqq \vf_\vtheta(\rvx_{t_{n}}, t_{n}) - \vf(\rvx_{t_n}, t_n; \vphi).
    \end{align*}
    We can easily derive the following recursion relation
    \begin{align}
        \ve_{n+1} &= \vf_\vtheta(\rvx_{t_{n+1}}, t_{n+1}) - \vf(\rvx_{t_{n+1}}, t_{n+1}; \vphi)\notag\\
        &\stackrel{(i)}{=} \vf_\vtheta(\hat{\rvx}_{t_{n}}^\vphi, t_{n}) - \vf(\rvx_{t_{n}}, t_{n}; \vphi)\notag\\
        &= \vf_\vtheta(\hat{\rvx}_{t_{n}}^\vphi, t_{n}) - \vf_\vtheta(\rvx_{t_n}, t_n) + \vf_\vtheta(\rvx_{t_n}, t_n) - \vf(\rvx_{t_{n}}, t_{n}; \vphi)\notag\\
        &= \vf_\vtheta(\hat{\rvx}_{t_{n}}^\vphi, t_{n}) - \vf_\vtheta(\rvx_{t_n}, t_n) + \ve_{n},\label{eq:recursion}
    \end{align}
    where (i) is due to \cref{eq:zero_loss_identity} and $\vf(\rvx_{t_{n+1}}, t_{n+1}; \vphi) =  \vf(\rvx_{t_{n}}, t_{n}; \vphi)$. Because $\vf_\vtheta(\cdot, t_n)$ has Lipschitz constant $L$, we have
    \begin{align*}
        \norm{\ve_{n+1}}_2 &\leq \norm{\ve_{n}}_2 + L \norm{\hat{\rvx}_{t_n}^\vphi - \rvx_{t_n}}_2\\
        &\stackrel{(i)}{=}  \norm{\ve_{n}}_2 + L\cdot O((t_{n+1} - t_n)^{p+1})\\
        &=\norm{\ve_{n}}_2 + O((t_{n+1} - t_n)^{p+1}),
    \end{align*}
    where (i) holds because the ODE solver has local error bounded by $O((t_{n+1}-t_n)^{p+1})$. In addition, we observe that $\ve_1 = \bm{0}$, because
    \begin{align*}
        \ve_1 &= \vf_\vtheta(\rvx_{t_1}, t_1) - \vf(\rvx_{t_1}, t_1; \vphi) \\
        &\stackrel{(i)}{=} \rvx_{t_1} - \vf(\rvx_{t_1}, t_1; \vphi)\\
        &\stackrel{(ii)}{=} \rvx_{t_1} - \rvx_{t_1}\\
        &= \bm{0}.
    \end{align*}
    Here (i) is true because the consistency model is parameterized such that $\vf(\rvx_{t_1}, t_1; \vphi) = \rvx_{t_1}$ and (ii) is entailed by the definition of $\vf(\cdot, \cdot; \vphi)$. This allows us to perform induction on the recursion formula \cref{eq:recursion} to obtain
    \begin{align*}
        \norm{\ve_{n}}_2 &\leq \norm{\ve_{1}}_2 + \sum_{k=1}^{n-1} O((t_{k+1} - t_k)^{p+1}) \\
        &= \sum_{k=1}^{n-1} O((t_{k+1} - t_k)^{p+1})\\
        &= \sum_{k=1}^{n-1} (t_{k+1} - t_k) O((t_{k+1} - t_k)^{p})\\
        &\leq \sum_{k=1}^{n-1} (t_{k+1} - t_k) O((\Delta t)^{p})\\
        &= O((\Delta t)^p) \sum_{k=1}^{n-1} (t_{k+1} - t_k)\\
        &= O((\Delta t)^p) (t_{n} - t_1)\\
        &\leq O((\Delta t)^p) (T-\epsilon)\\
        &= O((\Delta t)^p),
    \end{align*}
    which completes the proof.
\end{proof}

\subsection{Consistency Training}\label{app:proof_ct}

The following lemma provides an unbiased estimator for the score function, which is crucial to our proof for \cref{thm:ct}.

\begin{lemma}\label{lem:grad_log_p_t}
    Let $\rvx \sim p_\text{data}(\rvx)$, $\rvx_t \sim \mcal{N}(\rvx; t^2 \mI)$, and $p_t(\rvx_t) = p_\text{data}(\rvx) \otimes \mcal{N}(\bm{0}, t^2\mI)$. We have $\nabla \log p_t(\rvx) = -\mbb{E}[\frac{\rvx_t - \rvx}{t^2} \mid \rvx_t ]$.
\end{lemma}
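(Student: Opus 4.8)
The plan is to compute the score $\nabla \log p_t(\rvx_t)$ directly from the definition of $p_t$ as a Gaussian convolution of $p_\text{data}$, differentiate under the integral sign, and then recognize the resulting integrand as a Bayesian posterior so that the whole expression becomes a conditional expectation. This is the standard Tweedie/denoising-score identity, so the work is mostly bookkeeping plus one interchange-of-limits step.

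First I would write the marginal density explicitly as
\[
    p_t(\rvx_t) = \int p_\text{data}(\rvx)\, \mcal{N}(\rvx_t; \rvx, t^2 \mI) \ud \rvx,
\]
and record the elementary Gaussian identity $\nabla_{\rvx_t} \mcal{N}(\rvx_t; \rvx, t^2 \mI) = -\frac{\rvx_t - \rvx}{t^2}\, \mcal{N}(\rvx_t; \rvx, t^2 \mI)$. Differentiating the integral in $\rvx_t$ and exchanging $\nabla$ with $\int$ then gives
\[
    \nabla p_t(\rvx_t) = -\int p_\text{data}(\rvx)\, \frac{\rvx_t - \rvx}{t^2}\, \mcal{N}(\rvx_t; \rvx, t^2 \mI) \ud \rvx.
\]

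Next I would divide by $p_t(\rvx_t)$, which is strictly positive since it is a Gaussian convolution, to obtain $\nabla \log p_t(\rvx_t) = \nabla p_t(\rvx_t) / p_t(\rvx_t)$. By Bayes' rule the ratio $p_\text{data}(\rvx)\, \mcal{N}(\rvx_t; \rvx, t^2 \mI)/p_t(\rvx_t)$ is exactly the posterior density $p(\rvx \mid \rvx_t)$ of the clean sample given its noisy observation, so the integral collapses to $-\mbb{E}[(\rvx_t - \rvx)/t^2 \mid \rvx_t]$, which is the claim.

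The only step requiring care — and where I expect the (mild) technical work to sit — is the interchange of differentiation and integration. I would justify it with the Leibniz rule / dominated convergence: since $t > 0$ is fixed, for $\rvx_t$ ranging over a small ball the kernel and its gradient are dominated, uniformly in that ball, by an integrable envelope of the form $p_\text{data}(\rvx)\, C\,(1 + \norm{\rvx}_2)\, \mcal{N}(\rvx_t; \rvx, 2 t^2 \mI)$, which is integrable whenever $p_\text{data}$ has a finite first moment. Everything else is algebra.
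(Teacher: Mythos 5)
Your proof is correct and takes essentially the same route as the paper's: differentiate the Gaussian-convolution marginal under the integral sign, use the log-derivative/Gaussian-gradient identity, and recognize the resulting ratio as the Bayes posterior $p(\rvx \mid \rvx_t)$ so that the integral is precisely the conditional expectation $-\mbb{E}[(\rvx_t-\rvx)/t^2 \mid \rvx_t]$. The only difference is that you add a brief dominated-convergence justification for the interchange of $\nabla$ and $\int$, which the paper omits; that is a harmless (and slightly more careful) elaboration rather than a different argument.
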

\begin{proof}
According to the definition of $p_t(\rvx_t)$, we have $\nabla \log p_t(\rvx_t) = \nabla_{\rvx_t} \log \int p_\text{data}(\rvx) p(\rvx_t \mid \rvx) \ud \rvx$, where $p(\rvx_t \mid \rvx) = \mcal{N}(\rvx_t; \rvx, t^2 \mI)$. This expression can be further simplified to yield
\begin{align*}
    \nabla \log p_t(\rvx_t) &= \frac{\int p_\text{data}(\rvx) \nabla_{\rvx_t} p(\rvx_t \mid \rvx) \ud \rvx}{\int p_\text{data}(\rvx) p(\rvx_t \mid \rvx) \ud \rvx}\\
    &=\frac{\int p_\text{data}(\rvx) p(\rvx_t \mid \rvx) \nabla_{\rvx_t} \log p(\rvx_t \mid \rvx) \ud \rvx}{\int p_\text{data}(\rvx) p(\rvx_t \mid \rvx) \ud \rvx}\\
    &=\frac{\int p_\text{data}(\rvx) p(\rvx_t \mid \rvx) \nabla_{\rvx_t} \log p(\rvx_t \mid \rvx) \ud \rvx}{p_t(\rvx_t)}\\
    &=\int \frac{p_\text{data}(\rvx) p(\rvx_t \mid \rvx)}{p_t(\rvx_t)} \nabla_{\rvx_t} \log p(\rvx_t \mid \rvx) \ud \rvx\\
    &\stackrel{(i)}{=}\int p(\rvx \mid \rvx_t) \nabla_{\rvx_t} \log p(\rvx_t \mid \rvx) \ud \rvx\\
    &= \mbb{E}[\nabla_{\rvx_t} \log p(\rvx_t \mid \rvx) \mid \rvx_t]\\
    &= -\mbb{E}\left[\frac{\rvx_t - \rvx}{t^2} \mid \rvx_t\right],
\end{align*}
where (i) is due to Bayes' rule.
\end{proof}

\begin{customthm}{\ref{thm:ct}}
    Let $\Delta t \coloneqq \max_{n \in \llbracket 1, N-1\rrbracket}\{|t_{n+1} - t_{n}|\}$. Assume $d$ and $\vf_{\vtheta^{-}}$ are both twice continuously differentiable with bounded second derivatives, the weighting function $\lambda(\cdot)$ is bounded, and $\mbb{E}[\norm{\nabla \log p_{t_n}(\rvx_{t_{n}})}_2^2] < \infty$. Assume further that we use the Euler ODE solver, and the pre-trained score model matches the ground truth, \ie, $\forall t\in[\epsilon, T]: \vs_{\vphi}(\rvx, t) \equiv \nabla \log p_t(\rvx)$. Then,
    \begin{align*}
       \mcal{L}_\text{CD}^N(\vtheta, \vtheta^{-}; \vphi) = \mcal{L}_\text{CT}^N(\vtheta, \vtheta^{-}) + o(\Delta t),
    \end{align*}
    where the expectation is taken with respect to $\rvx \sim p_\text{data}$, $n \sim \mcal{U}\llbracket 1,N-1 \rrbracket$, and $\rvx_{t_{n+1}} \sim \mcal{N}(\rvx; t_{n+1}^2 \mI)$. The consistency training objective, denoted by $\mcal{L}_\text{CT}^N(\vtheta, \vtheta^{-})$, is defined as
    \begin{align*}
        \mbb{E}[\lambda(t_n) d(\vf_\vtheta(\rvx + t_{n+1}\rvz, t_{n+1}), \vf_{\vtheta^{-}}(\rvx + t_n\rvz, t_n))],
    \end{align*}
    where $\rvz \sim \mcal{N}(\bf{0}, \mI)$. Moreover, $\mcal{L}_\text{CT}^N(\vtheta, \vtheta^{-}) \geq O(\Delta t)$ if $\inf_N \mcal{L}_\text{CD}^N(\vtheta, \vtheta^{-}; \vphi) > 0$.
\end{customthm}
\begin{proof}
With Taylor expansion, we have
\begin{align}
    &\mcal{L}_\text{CD}^N(\vtheta, \vtheta^{-}; \vphi) = \mbb{E}[\lambda(t_n)d(\vf_\vtheta(\rvx_{t_{n+1}}, t_{n+1}), \vf_{\vtheta^{-}}(\hat{\rvx}_{t_n}^\vphi, t_n)]\notag \\
    =& \mbb{E}[\lambda(t_n) d(\vf_\vtheta(\rvx_{t_{n+1}}, t_{n+1}), \vf_{\vtheta^{-}}(\rvx_{t_{n+1}} + (t_{n+1} - t_n)t_{n+1} \nabla\log p_{t_{n+1}}(\rvx_{t_{n+1}}), t_n))]\notag \\
    =& \mbb{E}[\lambda(t_n) d(\vf_\vtheta(\rvx_{t_{n+1}}, t_{n+1}), \vf_{\vtheta^{-}}(\rvx_{t_{n+1}}, t_{n+1}) + \partial_1\vf_{\vtheta^{-}}(\rvx_{t_{n+1}}, t_{n+1})(t_{n+1} - t_n)t_{n+1} \nabla \log p_{t_{n+1}}(\rvx_{t_{n+1}})\notag \\
    &\qquad + \partial_2 \vf_{\vtheta^{-}}(\rvx_{t_{n+1}}, t_{n+1}) (t_n-t_{n+1}) + o(|t_{n+1} - t_n|) )]\notag \\
    =& \mbb{E}\{\lambda(t_n) d(\vf_\vtheta(\rvx_{t_{n+1}}, t_{n+1}),\vf_{\vtheta^{-}}(\rvx_{t_{n+1}}, t_{n+1})) + \lambda(t_n)\partial_2 d(\vf_\vtheta(\rvx_{t_{n+1}}, t_{n+1}),\vf_{\vtheta^{-}}(\rvx_{t_{n+1}}, t_{n+1}))[\notag \\
    &\quad \partial_1\vf_{\vtheta^{-}}(\rvx_{t_{n+1}}, t_{n+1})(t_{n+1} - t_n)t_{n+1} \nabla \log p_{t_{n+1}}(\rvx_{t_{n+1}}) + \partial_2 \vf_{\vtheta^{-}}(\rvx_{t_{n+1}}, t_{n+1}) (t_n-t_{n+1}) + o(|t_{n+1} - t_n|)]\}\notag \\
    =& \mbb{E}[\lambda(t_n) d(\vf_\vtheta(\rvx_{t_{n+1}}, t_{n+1}),\vf_{\vtheta^{-}}(\rvx_{t_{n+1}}, t_{n+1}))]\notag \\
    &\quad + \mbb{E}\{\lambda(t_n) \partial_2 d(\vf_\vtheta(\rvx_{t_{n+1}}, t_{n+1}),\vf_{\vtheta^{-}}(\rvx_{t_{n+1}}, t_{n+1}))[\partial_1\vf_{\vtheta^{-}}(\rvx_{t_{n+1}}, t_{n+1})(t_{n+1} - t_n)t_{n+1} \nabla \log p_{t_{n+1}}(\rvx_{t_{n+1}})]\}\notag \\
    &\qquad + \mbb{E}\{\lambda(t_n) \partial_2 d(\vf_\vtheta(\rvx_{t_{n+1}}, t_{n+1}),\vf_{\vtheta^{-}}(\rvx_{t_{n+1}}, t_{n+1}))[\partial_2 \vf_{\vtheta^{-}}(\rvx_{t_{n+1}}, t_{n+1}) (t_n-t_{n+1})]\} + \mbb{E}[o(|t_{n+1} - t_n|)]
    .\label{eq:taylor1}
\end{align}
Then, we apply \cref{lem:grad_log_p_t} to \cref{eq:taylor1} and use Taylor expansion in the reverse direction to obtain
\begin{align}
    &\mcal{L}_\text{CD}^N(\vtheta, \vtheta^{-}; \vphi)\notag \\
    =& \mbb{E}[\lambda(t_n) d(\vf_\vtheta(\rvx_{t_{n+1}}, t_{n+1}),\vf_{\vtheta^{-}}(\rvx_{t_{n+1}}, t_{n+1}))]\notag \\
    &\quad + \mbb{E}\left\{\lambda(t_n)\partial_2 d(\vf_\vtheta(\rvx_{t_{n+1}}, t_{n+1}),\vf_{\vtheta^{-}}(\rvx_{t_{n+1}}, t_{n+1}))\left[\partial_1\vf_{\vtheta^{-}}(\rvx_{t_{n+1}}, t_{n+1})(t_{n} - t_{n+1})t_{n+1} \mbb{E}\left[\frac{\rvx_{t_{n+1}} - \rvx}{t_{n+1}^2}\Big| \rvx_{t_{n+1}} \right]\right]\right \}\notag \\
    &\qquad + \mbb{E}\{\lambda(t_n)\partial_2 d(\vf_\vtheta(\rvx_{t_{n+1}}, t_{n+1}),\vf_{\vtheta^{-}}(\rvx_{t_{n+1}}, t_{n+1}))[\partial_2 \vf_{\vtheta^{-}}(\rvx_{t_{n+1}}, t_{n+1}) (t_n-t_{n+1})]\} + \mbb{E}[o(|t_{n+1} - t_n|)]\notag\\
    \stackrel{(i)}{=}& \mbb{E}[\lambda(t_n)d(\vf_\vtheta(\rvx_{t_{n+1}}, t_{n+1}),\vf_{\vtheta^{-}}(\rvx_{t_{n+1}}, t_{n+1}))]\notag \\
    &\quad + \mbb{E}\left\{\lambda(t_n)\partial_2 d(\vf_\vtheta(\rvx_{t_{n+1}}, t_{n+1}),\vf_{\vtheta^{-}}(\rvx_{t_{n+1}}, t_{n+1}))\left[\partial_1\vf_{\vtheta^{-}}(\rvx_{t_{n+1}}, t_{n+1})(t_{n} - t_{n+1})t_{n+1} \left(\frac{\rvx_{t_{n+1}} - \rvx}{t_{n+1}^2} \right)\right]\right \}\notag \\
    &\qquad + \mbb{E}\{\lambda(t_n)\partial_2 d(\vf_\vtheta(\rvx_{t_{n+1}}, t_{n+1}),\vf_{\vtheta^{-}}(\rvx_{t_{n+1}}, t_{n+1}))[\partial_2 \vf_{\vtheta^{-}}(\rvx_{t_{n+1}}, t_{n+1}) (t_n-t_{n+1})]\} + \mbb{E}[o(|t_{n+1} - t_n|)]\notag\\
    =& \mbb{E}\bigg[\lambda(t_n)d(\vf_\vtheta(\rvx_{t_{n+1}}, t_{n+1}),\vf_{\vtheta^{-}}(\rvx_{t_{n+1}}, t_{n+1}))\notag \\
    &\quad + \lambda(t_n)\partial_2 d(\vf_\vtheta(\rvx_{t_{n+1}}, t_{n+1}),\vf_{\vtheta^{-}}(\rvx_{t_{n+1}}, t_{n+1}))\left[\partial_1\vf_{\vtheta^{-}}(\rvx_{t_{n+1}}, t_{n+1})(t_{n} - t_{n+1})t_{n+1} \left(\frac{\rvx_{t_{n+1}} - \rvx}{t_{n+1}^2} \right)\right] \notag \\
    &\quad +\lambda(t_n)\partial_2 d(\vf_\vtheta(\rvx_{t_{n+1}}, t_{n+1}),\vf_{\vtheta^{-}}(\rvx_{t_{n+1}}, t_{n+1}))[\partial_2 \vf_{\vtheta^{-}}(\rvx_{t_{n+1}}, t_{n+1}) (t_n-t_{n+1})] + o(|t_{n+1} - t_n|)\bigg] \notag\\
    &\qquad + \mbb{E}[o(|t_{n+1} - t_n|)]\notag \\
    =& \mbb{E}\left[\lambda(t_n) d\left(\vf_\vtheta(\rvx_{t_{n+1}}, t_{n+1}), \vf_{\vtheta^{-}}\left(\rvx_{t_{n+1}} + (t_{n} - t_{n+1})t_{n+1}\frac{\rvx_{t_{n+1}} - \rvx}{t_{n+1}^2} , t_n\right)\right)\right] + \mbb{E}[o(|t_{n+1} - t_n|)]\notag\\
    =& \mbb{E}\left[\lambda(t_n) d\left(\vf_\vtheta(\rvx_{t_{n+1}}, t_{n+1}), \vf_{\vtheta^{-}}\left(\rvx_{t_{n+1}} + (t_{n} - t_{n+1})\frac{\rvx_{t_{n+1}} - \rvx}{t_{n+1}} , t_n\right)\right)\right] + \mbb{E}[o(|t_{n+1} - t_n|)]\notag\\
    =& \mbb{E}\left[\lambda(t_n)d\left(\vf_\vtheta(\rvx + t_{n+1} \rvz, t_{n+1}), \vf_{\vtheta^{-}}\left(\rvx + t_{n+1}\rvz + (t_{n} - t_{n+1})\rvz , t_n\right)\right)\right] + \mbb{E}[o(|t_{n+1} - t_n|)]\notag\\
    =& \mbb{E}\left[\lambda(t_n)d\left(\vf_\vtheta(\rvx + t_{n+1} \rvz, t_{n+1}), \vf_{\vtheta^{-}}\left(\rvx + t_{n}\rvz , t_n\right)\right)\right] + \mbb{E}[o(|t_{n+1} - t_n|)]\notag\\
    =& \mbb{E}\left[\lambda(t_n)d\left(\vf_\vtheta(\rvx + t_{n+1} \rvz, t_{n+1}), \vf_{\vtheta^{-}}\left(\rvx + t_{n}\rvz , t_n\right)\right)\right] + \mbb{E}[o(\Delta t)]\notag\\
    =& \mbb{E}\left[\lambda(t_n)d\left(\vf_\vtheta(\rvx + t_{n+1} \rvz, t_{n+1}), \vf_{\vtheta^{-}}\left(\rvx + t_{n}\rvz , t_n\right)\right)\right] + o(\Delta t)\notag \\
    =& \mcal{L}_\text{CT}^N(\vtheta, \vtheta^{-}) + o(\Delta t),
\end{align}
where (i) is due to the law of total expectation, and $\rvz \coloneqq \frac{\rvx_{t_{n+1}} - \rvx}{t_{n+1}} \sim \mcal{N}(\bm{0}, \mI)$. This implies $\mcal{L}_\text{CD}^N(\vtheta, \vtheta^{-}; \vphi) = \mcal{L}_\text{CT}^N(\vtheta, \vtheta^{-}) + o(\Delta t)$ and thus completes the proof for \cref{eq:cd2}. Moreover, we have $\mcal{L}_\text{CT}^N(\vtheta, \vtheta^{-}) \geq O(\Delta t)$ whenever $\inf_N \mcal{L}_\text{CD}^N(\vtheta, \vtheta^{-}; \vphi) > 0$. Otherwise, $\mcal{L}_\text{CT}^N(\vtheta, \vtheta^{-}) < O(\Delta t)$ and thus $\lim_{\Delta t \to 0} \mcal{L}_\text{CD}^N(\vtheta, \vtheta^{-}; \vphi) = 0$, which is a clear contradiction to $\inf_N \mcal{L}_\text{CD}^N(\vtheta, \vtheta^{-}; \vphi) > 0$.
\end{proof}
\begin{remark}
    When the condition $\mathcal{L}_\text{CT}^N(\vtheta, \vtheta^{-}) \geq O(\Delta t)$ is not satisfied, such as in the case where $\vtheta^{-} = \operatorname{stopgrad}(\vtheta)$, the validity of $\mathcal{L}_\text{CT}^N(\vtheta, \vtheta^{-})$ as a training objective for consistency models can still be justified by referencing the result provided in \cref{thm:ctct}.
\end{remark}

\section{Continuous-Time Extensions}\label{app:continuous}
The consistency distillation and consistency training objectives can be generalized to hold for infinite time steps ($N\to\infty$) under suitable conditions.

\subsection{Consistency Distillation in Continuous Time} \label{sec:ctcd}
Depending on whether $\vtheta^- = \vtheta$ or $\vtheta^- = \operatorname{stopgrad}(\vtheta)$ (same as setting $\mu=0$), there are two possible continuous-time extensions for the consistency distillation objective $\mcal{L}_\text{CD}^N(\vtheta, \vtheta^{-}; \vphi)$. Given a twice continuously differentiable metric function $d(\rvx, \rvy)$, we define $\mG(\rvx)$ as a matrix, whose $(i,j)$-th entry is given by
\begin{align*}
    [\mG(\rvx)]_{ij} \coloneqq \frac{\partial^2 d(\rvx, \rvy)}{\partial y_i \partial y_j} \bigg|_{\rvy=\rvx}.
\end{align*}
Similarly, we define $\mH(\rvx)$ as
\begin{align*}
    [\mH(\rvx)]_{ij} \coloneqq \frac{\partial^2 d(\rvy, \rvx)}{\partial y_i \partial y_j} \bigg|_{\rvy=\rvx}.
\end{align*}
The matrices $\mG$ and $\mH$ play a crucial role in forming continuous-time objectives for consistency distillation. Additionally, we denote the Jacobian of $\vf_\vtheta(\rvx, t)$ with respect to $\rvx$ as $\frac{\partial \vf_\vtheta(\rvx, t)}{\partial \rvx}$.

When $\vtheta^{-}=\vtheta$ (with no stopgrad operator), we have the following theoretical result.
\begin{theorem}\label{thm:ctcd1}
    Let $t_n = \tau(\frac{n-1}{N-1})$, where $n \in \llbracket 1, N \rrbracket$, and $\tau(\cdot)$ is a strictly monotonic function with $\tau(0) = \epsilon$ and $\tau(1) = T$. Assume $\tau$ is continuously differentiable in $[0,1]$, $d$ is three times continuously differentiable with bounded third derivatives, and $\vf_{\vtheta}$ is twice continuously differentiable with bounded first and second derivatives. Assume further that the weighting function $\lambda(\cdot)$ is bounded, and $\sup_{\rvx,t\in[\epsilon, T]}\norm{\vs_\vphi(\rvx, t)}_2 < \infty$. Then with the Euler solver in consistency distillation, we have
    \begin{align}
        \lim_{N \to \infty} (N-1)^2 \mcal{L}_\text{CD}^N(\vtheta, \vtheta; \vphi) = \mcal{L}_\text{CD}^\infty(\vtheta, \vtheta; \vphi) \label{eq:ctcd_obj},
    \end{align}
    where $\mcal{L}_\text{CD}^{\infty} (\vtheta, \vtheta; \vphi)$ is defined as
    \begin{align}
        \frac{1}{2} \mbb{E}\left[\frac{\lambda(t)}{[(\tau^{-1})'(t)]^2} \left(\frac{\partial \vf_\vtheta(\rvx_t, t)}{\partial t} - t \frac{\partial \vf_\vtheta(\rvx_t, t)}{\partial \rvx_t} \vs_\vphi(\rvx_{t}, t)\right)\tran \mG(\vf_\vtheta(\rvx_t, t)) \left(\frac{\partial \vf_\vtheta(\rvx_t, t)}{\partial t} - t \frac{\partial \vf_\vtheta(\rvx_t, t)}{\partial \rvx_t} \vs_\vphi(\rvx_{t}, t)\right)\right].
    \end{align}
    Here the expectation above is taken over $\rvx \sim p_\text{data}$, $u \sim \mcal{U}[0, 1]$, $t = \tau(u)$, and $\rvx_t \sim \mcal{N}(\rvx, t^2\mI)$.
\end{theorem}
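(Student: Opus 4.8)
The plan is to Taylor-expand the summand of $\mcal{L}_\text{CD}^N(\vtheta,\vtheta;\vphi)$ to second order in the local step $t_{n+1}-t_n$, show that the leading contribution is a quadratic form in the tangent vector $\frac{\partial\vf_\vtheta}{\partial t}-t\frac{\partial\vf_\vtheta}{\partial\rvx}\vs_\vphi$ weighted by the metric Hessian $\mG$, and then recognize the rescaled average over $n$ as a Riemann sum converging to the stated integral. First I would substitute the Euler update $\Phi(\rvx,t;\vphi)=-t\vs_\vphi(\rvx,t)$, so that $\hat\rvx_{t_n}^\vphi-\rvx_{t_{n+1}}=-(t_n-t_{n+1})\,t_{n+1}\,\vs_\vphi(\rvx_{t_{n+1}},t_{n+1})$, which is $O(|t_{n+1}-t_n|)$ uniformly because $\sup_{\rvx,t}\norm{\vs_\vphi(\rvx,t)}_2<\infty$. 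A second-order Taylor expansion of $\vf_\vtheta$ in both arguments (legitimate since $\vf_\vtheta$ is twice continuously differentiable with bounded first and second derivatives) then yields
\begin{align*}
\vf_\vtheta(\hat\rvx_{t_n}^\vphi,t_n)-\vf_\vtheta(\rvx_{t_{n+1}},t_{n+1})=(t_n-t_{n+1})\,\vv_n+O\big((t_{n+1}-t_n)^2\big),
\end{align*}
where $\vv_n:=\frac{\partial\vf_\vtheta(\rvx_{t_{n+1}},t_{n+1})}{\partial t}-t_{n+1}\frac{\partial\vf_\vtheta(\rvx_{t_{n+1}},t_{n+1})}{\partial\rvx_{t_{n+1}}}\vs_\vphi(\rvx_{t_{n+1}},t_{n+1})$, and the remainder is uniform in $n$ and in the random draws.

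Next I would expand the metric. The key elementary fact is that $d(\rvx,\rvy)\geq0$ with equality iff $\rvx=\rvy$ forces, for each fixed $\rvx$, the map $\rvy\mapsto d(\rvx,\rvy)$ to attain a global minimum at $\rvy=\rvx$; hence its gradient there vanishes and its Hessian there is, by definition, $\mG(\rvx)$. With $\rvz_0:=\vf_\vtheta(\rvx_{t_{n+1}},t_{n+1})$ and $\Delta_n:=\vf_\vtheta(\hat\rvx_{t_n}^\vphi,t_n)-\rvz_0$, the third-order Taylor expansion of $d$ (using bounded third derivatives) gives
\begin{align*}
d(\rvz_0,\rvz_0+\Delta_n)=\tfrac12\,\Delta_n\tran\mG(\rvz_0)\,\Delta_n+O(\norm{\Delta_n}^3)=\tfrac12\,(t_{n+1}-t_n)^2\,\vv_n\tran\mG(\rvz_0)\,\vv_n+O\big((t_{n+1}-t_n)^3\big).
\end{align*}

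Then I would multiply by $\lambda(t_n)$ (bounded), take the full expectation — where $n\sim\mcal{U}\llbracket1,N-1\rrbracket$ supplies $\frac{1}{N-1}\sum_{n=1}^{N-1}$ — and rescale by $(N-1)^2$. By the mean value theorem $t_{n+1}-t_n=\tau(\tfrac{n}{N-1})-\tau(\tfrac{n-1}{N-1})=\frac{1}{N-1}\tau'(\xi_n)$ for some $\xi_n$ in the $n$-th subinterval, so the cubic errors aggregate to $O((N-1)^{-1})$ after rescaling, while $[(N-1)(t_{n+1}-t_n)]^2=[\tau'(\xi_n)]^2\to[\tau'(u)]^2$ uniformly along the grid by uniform continuity of $\tau'$ on $[0,1]$. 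What remains is a Riemann sum (the $O((N-1)^{-1})$ offset between the grid points at which $\lambda$, $\vv_n$, and $\tau'(\xi_n)$ are evaluated being immaterial in the limit) for $u\mapsto\tfrac12\,\mbb{E}\big[\lambda(\tau(u))\,[\tau'(u)]^2\,\vv(\rvx_{\tau(u)},\tau(u))\tran\mG(\vf_\vtheta(\rvx_{\tau(u)},\tau(u)))\,\vv(\rvx_{\tau(u)},\tau(u))\big]$, with $\rvx\sim p_\text{data}$ and $\rvx_{\tau(u)}\sim\mcal{N}(\rvx,\tau(u)^2\mI)$; continuity of this integrand in $u$ follows from the smoothness of $\vf_\vtheta$, $\vs_\vphi$, $\tau$ and continuity of the Gaussian transition density in $t$, so the average converges to $\int_0^1(\cdots)\,\ud u$. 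Substituting $[\tau'(u)]^2=1/[(\tau^{-1})'(\tau(u))]^2$ and rewriting the integral as an expectation over $u\sim\mcal{U}[0,1]$, $t=\tau(u)$, recovers exactly $\mcal{L}_\text{CD}^\infty(\vtheta,\vtheta;\vphi)$.

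I expect the two Taylor expansions to be routine; the real work is making the error control genuinely uniform — in $n$ and in the random variables — so that the cubic remainders still vanish after multiplication by $(N-1)^2$, and then justifying the interchange of limit, sum, and expectation (Riemann-sum convergence together with dominated convergence for the inner expectation over $\rvx$ and $\rvx_{\tau(u)}$). This is precisely where the regularity hypotheses are consumed: $\sup\norm{\vs_\vphi}<\infty$ controls the ODE step and keeps $\vv_n$ bounded, the bounded first and second derivatives of $\vf_\vtheta$ control the $\vf_\vtheta$-expansion remainder, the bounded third derivatives of $d$ control the metric-expansion remainder, boundedness of $\lambda$ keeps the weighting harmless, and continuous differentiability of $\tau$ turns the discrete average over $n$ into a convergent Riemann integral.
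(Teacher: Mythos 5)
Your proposal is correct and follows essentially the same route as the paper's proof: a second-order Taylor expansion of $\vf_\vtheta$ around $(\rvx_{t_{n+1}},t_{n+1})$ to identify the tangent vector $\vv_n$, a third-order expansion of $d$ in its second argument using $d(\rvx,\rvx)=0$ and $\nabla_\rvy d(\rvx,\rvy)|_{\rvy=\rvx}=\bm 0$ to produce the $\tfrac12 \vv_n\tran \mG \vv_n$ quadratic form, rescaling by $(N-1)^2$, and passing to the limit. The only cosmetic difference is that you make the Riemann-sum convergence and the uniform error control explicit via the mean value theorem on $\tau$, whereas the paper reparameterizes by $\Delta u$ from the start and states the limit more tersely; the substance is the same.
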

\begin{proof}
    Let $\Delta u = \frac{1}{N-1}$ and $u_n = \frac{n-1}{N-1}$. First, we can derive the following equation with Taylor expansion:
    \begin{align}
        &\vf_\vtheta(\hat{\rvx}_{t_n}^\vphi, t_n) - \vf_\vtheta(\rvx_{t_{n+1}}, t_{n+1}) = \vf_{\vtheta}(\rvx_{t_{n+1}} +  t_{n+1} \vs_\vphi(\rvx_{t_{n+1}}, t_{n+1})\tau'(u_{n})\Delta u, t_n) - \vf_\vtheta(\rvx_{t_{n+1}}, t_{n+1})\notag \\
        =& t_{n+1} \frac{\partial \vf_\vtheta(\rvx_{t_{n+1}}, t_{n+1})}{\partial \rvx_{t_{n+1}}}\vs_\vphi(\rvx_{t_{n+1}}, t_{n+1})\tau'(u_{n})\Delta u  - \frac{\partial \vf_\vtheta(\rvx_{t_{n+1}}, t_{n+1})}{\partial t_{n+1}}\tau'(u_{n})\Delta u  + O((\Delta u)^2),\label{eq:ctcd1}
    \end{align}
    Note that $\tau'(u_{n}) = \frac{1}{\tau^{-1}(t_{n+1})}$. Then, we apply Taylor expansion to the consistency distillation loss, which gives
    \begin{align}
        &(N-1)^2 \mcal{L}_\text{CD}^N(\vtheta, \vtheta; \vphi) = \frac{1}{(\Delta u)^2}\mcal{L}_\text{CD}^N(\vtheta, \vtheta; \vphi) = \frac{1}{(\Delta u)^2} \mbb{E}[\lambda(t_n)d(\vf_\vtheta(\rvx_{t_{n+1}}, t_{n+1}), \vf_{\vtheta}(\hat{\rvx}_{t_n}^\vphi, t_n)]\notag \\
        \stackrel{(i)}{=}&\begin{multlined}[t][0.9\displaywidth]
            \frac{1}{2 (\Delta u)^2}\bigg(\mbb{E}\{\lambda(t_n)\tau'(u_{n})^2 [\vf_\vtheta(\hat{\rvx}_{t_n}^\vphi, t_n) - \vf_\vtheta(\rvx_{t_{n+1}}, t_{n+1})]\tran \mG(\vf_\vtheta(\rvx_{t_{n+1}}, t_{n+1}))\\ \cdot [\vf_\vtheta(\hat{\rvx}_{t_n}^\vphi, t_n) - \vf_\vtheta(\rvx_{t_{n+1}}, t_{n+1})]\} + \mbb{E}[O(|\Delta u|^3)]\bigg)
        \end{multlined}\notag \\
        \stackrel{(ii)}{=}&\!\begin{multlined}[t][0.9\displaywidth]
            \frac{1}{2}\mbb{E}\bigg[\lambda(t_n) \tau'(u_{n})^2 \left(\frac{\partial \vf_\vtheta(\rvx_{t_{n+1}}, t_{n+1})}{\partial t_{n+1}} - t_{n+1} \frac{\partial \vf_\vtheta(\rvx_{t_{n+1}}, t_{n+1})}{\partial \rvx_{t_{n+1}}} \vs_\vphi(\rvx_{t_{n+1}}, t_{n+1})\right)\tran \mG(\vf_\vtheta(\rvx_{t_{n+1}}, t_{n+1})) \\
            \cdot \bigg(\frac{\partial \vf_\vtheta(\rvx_{t_{n+1}}, t_{n+1})}{\partial t_{n+1}} - t_{n+1} \frac{\partial \vf_\vtheta(\rvx_{t_{n+1}}, t_{n+1})}{\partial \rvx_{t_{n+1}}} \vs_\vphi(\rvx_{t_{n+1}}, t_{n+1})\bigg)\bigg] + \mbb{E}[O(|\Delta u|)]
        \end{multlined}\notag \\
        =&\!\begin{multlined}[t][0.9\displaywidth]
                \frac{1}{2}\mbb{E}\bigg[\frac{\lambda(t_n)}{[(\tau^{-1})'(t_{n})]^2} \left(\frac{\partial \vf_\vtheta(\rvx_{t_{n+1}}, t_{n+1})}{\partial t_{n+1}} - t_{n+1} \frac{\partial \vf_\vtheta(\rvx_{t_{n+1}}, t_{n+1})}{\partial \rvx_{t_{n+1}}} \vs_\vphi(\rvx_{t_{n+1}}, t_{n+1})\right)\tran \mG(\vf_\vtheta(\rvx_{t_{n+1}}, t_{n+1})) \\
                \cdot \bigg(\frac{\partial \vf_\vtheta(\rvx_{t_{n+1}}, t_{n+1})}{\partial t_{n+1}} - t_{n+1} \frac{\partial \vf_\vtheta(\rvx_{t_{n+1}}, t_{n+1})}{\partial \rvx_{t_{n+1}}} \vs_\vphi(\rvx_{t_{n+1}}, t_{n+1})\bigg)\bigg] + \mbb{E}[O(|\Delta u|)] \label{eq:ctcd2}
        \end{multlined}
    \end{align}
    where we obtain (i) by expanding $d(\vf_\vtheta(\rvx_{t_{n+1}}, t_{n+1}), \cdot)$ to second order and observing $d(\rvx, \rvx) \equiv 0$ and $\nabla_\rvy d(\rvx, \rvy)|_{\rvy=\rvx} \equiv \bm{0}$. We obtain (ii) using \cref{eq:ctcd1}. By taking the limit for both sides of \cref{eq:ctcd2} as $\Delta u \to 0$ or equivalently $N \to \infty$, we arrive at \cref{eq:ctcd_obj}, which completes the proof.
\end{proof}
\begin{remark}
    Although \cref{thm:ctcd1} assumes the Euler ODE solver for technical simplicity, we believe an analogous result can be derived for more general solvers, since all ODE solvers should perform similarly as $N \to \infty$. We leave a more general version of \cref{thm:ctcd1} as future work.
\end{remark}
\begin{remark}
    \cref{thm:ctcd1} implies that consistency models can be trained by minimizing $\mcal{L}_\text{CD}^\infty (\vtheta, \vtheta;\vphi)$. In particular, when $d(\rvx, \rvy) = \norm{\rvx - \rvy}_2^2$, we have
    \begin{align}
        \mcal{L}_\text{CD}^{\infty} (\vtheta, \vtheta; \vphi) = \mbb{E}\left[\frac{\lambda(t)}{[(\tau^{-1})'(t)]^2}\norm{\frac{\partial \vf_\vtheta(\rvx_t, t)}{\partial t} - t \frac{\partial \vf_\vtheta(\rvx_t, t)}{\partial \rvx_t} \vs_\vphi(\rvx_{t}, t)}^2_2 \right].
    \end{align}
    However, this continuous-time objective requires computing Jacobian-vector products as a subroutine to evaluate the loss function, which can be slow and laborious to implement in deep learning frameworks that do not support forward-mode automatic differentiation.
\end{remark}
\begin{remark}\label{remark}
    If $\vf_\vtheta(\rvx, t)$ matches the ground truth consistency function for the empirical PF ODE of $\vs_\vphi(\rvx, t)$, then
    \begin{align*}
        \frac{\partial \vf_\vtheta(\rvx, t)}{\partial t} - t \frac{\partial \vf_\vtheta(\rvx, t)}{\partial \rvx} \vs_\vphi(\rvx, t) \equiv 0
    \end{align*}
    and therefore $\mcal{L}_\text{CD}^\infty(\vtheta, \vtheta; \vphi) = 0$. This can be proved by noting that $\vf_\vtheta(\rvx_t, t) \equiv \rvx_\epsilon$ for all $t \in [\epsilon, T]$, and then taking the time-derivative of this identity:
    \begin{align*}
        &\vf_\vtheta(\rvx_t, t) \equiv \rvx_\epsilon\\
        \Longleftrightarrow&\frac{\partial \vf_\vtheta(\rvx_t, t)}{\partial \rvx_t} \frac{\ud \rvx_t}{\ud t} + \frac{\partial \vf_\vtheta(\rvx_t, t)}{\partial t} \equiv 0\\
        \Longleftrightarrow&\frac{\partial \vf_\vtheta(\rvx_t, t)}{\partial \rvx_t} [-t \vs_\vphi(\rvx_t, t)] + \frac{\partial \vf_\vtheta(\rvx_t, t)}{\partial t} \equiv 0\\
        \Longleftrightarrow&\frac{\partial \vf_\vtheta(\rvx_t, t)}{\partial t} - t \frac{\partial \vf_\vtheta(\rvx_t, t)}{\partial \rvx_t} \vs_\vphi(\rvx_t, t) \equiv 0.
    \end{align*}
    The above observation provides another motivation for $\mcal{L}_\text{CD}^\infty(\vtheta, \vtheta; \vphi)$, as it is minimized if and only if the consistency model matches the ground truth consistency function.
\end{remark}

For some metric functions, such as the $\ell_1$ norm, the Hessian $\mG(\rvx)$ is zero so \cref{thm:ctcd1} is vacuous. Below we show that a non-vacuous statement holds for the $\ell_1$ norm with just a small modification of the proof for \cref{thm:ctcd1}.
\begin{theorem}\label{thm:ctcd_l1}
    Let $t_n = \tau(\frac{n-1}{N-1})$, where $n \in \llbracket 1, N \rrbracket$, and $\tau(\cdot)$ is a strictly monotonic function with $\tau(0) = \epsilon$ and $\tau(1) = T$. Assume $\tau$ is continuously differentiable in $[0,1]$, and $\vf_{\vtheta}$ is twice continuously differentiable with bounded first and second derivatives. Assume further that the weighting function $\lambda(\cdot)$ is bounded, and $\sup_{\rvx, t\in[\epsilon,T]}\norm{\vs_\vphi(\rvx, t)}_2 < \infty$. Suppose we use the Euler ODE solver, and set $d(\rvx, \rvy) = \norm{\rvx - \rvy}_1$ in consistency distillation. Then we have
    \begin{align}
        \lim_{N \to \infty} (N-1) \mcal{L}_\text{CD}^N(\vtheta, \vtheta; \vphi) = \mcal{L}_\text{CD, $\ell_1$}^\infty(\vtheta, \vtheta; \vphi),\label{eq:ctcd_l1_obj}
    \end{align}
    where
    \begin{align*}
        \mcal{L}_\text{CD, $\ell_1$}^\infty(\vtheta, \vtheta; \vphi) \coloneqq \mbb{E}\left[\frac{\lambda(t)}{(\tau^{-1})'(t)}\norm{t \frac{\partial \vf_\vtheta(\rvx_{t}, t)}{\partial \rvx_{t}}\vs_\vphi(\rvx_{t}, t)  - \frac{\partial \vf_\vtheta(\rvx_{t}, t)}{\partial t}}_1\right]
    \end{align*}
    where the expectation above is taken over $\rvx \sim p_\text{data}$, $u \sim \mcal{U}[0, 1]$, $t = \tau(u)$, and $\rvx_t \sim \mcal{N}(\rvx, t^2\mI)$.
\end{theorem}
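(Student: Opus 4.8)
The plan is to follow the proof of \cref{thm:ctcd1}, but to exploit that $\norm{\cdot}_1$ is positively homogeneous of degree \emph{one}; this is precisely why the correct normalization here is $(N-1)$ rather than $(N-1)^2$. Write $\Delta u = \tfrac{1}{N-1}$ and $u_n = \tfrac{n-1}{N-1}$, so $t_n = \tau(u_n)$. Since the Euler update has $\Phi(\rvx, t; \vphi) = -t\vs_\vphi(\rvx, t)$, the Taylor computation leading to \cref{eq:ctcd1} in the proof of \cref{thm:ctcd1} gives, uniformly in $n$ and in the random inputs $\rvx, \rvx_{t_{n+1}}$,
\begin{align*}
    \vf_\vtheta(\hat{\rvx}_{t_n}^\vphi, t_n) - \vf_\vtheta(\rvx_{t_{n+1}}, t_{n+1}) = \va_n\,\Delta u + \rvr_n, \qquad \norm{\rvr_n}_1 = o(\Delta u),
\end{align*}
where $\va_n \coloneqq \tau'(u_n)\big(t_{n+1}\partial_1\vf_\vtheta(\rvx_{t_{n+1}}, t_{n+1})\vs_\vphi(\rvx_{t_{n+1}}, t_{n+1}) - \partial_2\vf_\vtheta(\rvx_{t_{n+1}}, t_{n+1})\big)$, with $\partial_1, \partial_2$ the Jacobians in the first and second arguments. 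The uniformity of the remainder is where the hypotheses enter: bounded first and second derivatives of $\vf_\vtheta$, $\sup_{\rvx, t\in[\epsilon, T]}\norm{\vs_\vphi(\rvx, t)}_2 < \infty$, and $\tau \in C^1[0,1]$ (so $\tau'$ is bounded and uniformly continuous).

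Next I would take the $\ell_1$ norm and divide by $\Delta u$. Because $\tau(0) = \epsilon < T = \tau(1)$, $\tau$ is strictly increasing, so $\Delta u > 0$ and $\tau'(u_n) \geq 0$; hence $\norm{\va_n\Delta u}_1 = \Delta u\,\norm{\va_n}_1$, and the reverse triangle inequality gives $\big|\norm{\va_n\Delta u + \rvr_n}_1 - \Delta u\,\norm{\va_n}_1\big| \leq \norm{\rvr_n}_1 = o(\Delta u)$. Substituting $d(\vf_\vtheta(\rvx_{t_{n+1}}, t_{n+1}), \vf_\vtheta(\hat{\rvx}_{t_n}^\vphi, t_n)) = \norm{\va_n\Delta u + \rvr_n}_1$ (valid since $\vtheta^- = \vtheta$ and $d = \norm{\cdot - \cdot}_1$), multiplying $\mcal{L}_\text{CD}^N$ by $(N-1) = 1/\Delta u$, and using that $\lambda$ is bounded (so the remainder contributes only $o(1)$), I obtain
\begin{align*}
    (N-1)\mcal{L}_\text{CD}^N(\vtheta, \vtheta; \vphi) = \mbb{E}\big[\lambda(t_n)\tau'(u_n)\norm{t_{n+1}\partial_1\vf_\vtheta(\rvx_{t_{n+1}}, t_{n+1})\vs_\vphi(\rvx_{t_{n+1}}, t_{n+1}) - \partial_2\vf_\vtheta(\rvx_{t_{n+1}}, t_{n+1})}_1\big] + o(1).
\end{align*}

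Finally I would let $N \to \infty$. The inverse function theorem gives $\tau'(u_n) = 1/(\tau^{-1})'(t_n)$; moreover $u_n \to u \sim \mcal{U}[0,1]$, and $t_n, t_{n+1} \to t \coloneqq \tau(u)$, and one couples $\rvx_{t_{n+1}} \sim \mcal{N}(\rvx; t_{n+1}^2\mI)$ to $\rvx_t \sim \mcal{N}(\rvx; t^2\mI)$. By continuity (in $u$ and in $\rvx_t$) and boundedness of the integrand, the average over $n \sim \mcal{U}\llbracket 1, N-1 \rrbracket$ is a Riemann sum converging (dominated convergence together with Fubini) to
\begin{align*}
    \mbb{E}\left[\frac{\lambda(t)}{(\tau^{-1})'(t)}\norm{t\frac{\partial\vf_\vtheta(\rvx_t, t)}{\partial\rvx_t}\vs_\vphi(\rvx_t, t) - \frac{\partial\vf_\vtheta(\rvx_t, t)}{\partial t}}_1\right] = \mcal{L}_\text{CD, $\ell_1$}^\infty(\vtheta, \vtheta; \vphi),
\end{align*}
while the $o(1)$ term vanishes; this is \cref{eq:ctcd_l1_obj}.

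The main obstacle is the middle step: unlike in \cref{thm:ctcd1}, $\norm{\cdot}_1$ is not differentiable, so one cannot Taylor-expand $d$ and must instead argue through positive homogeneity — and for the error to survive division by $\Delta u$ and the expectation, the $o(\Delta u)$ bound on $\rvr_n$ must be genuinely \emph{uniform} in both the index $n$ and the random inputs. Everything else (the Riemann-sum limit, replacing $t_{n+1}$ by $t$, and the $(\tau^{-1})'$ bookkeeping) is routine and mirrors the proof of \cref{thm:ctcd1}.
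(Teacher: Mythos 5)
Your argument is correct and follows essentially the same route as the paper's: Taylor-expand the inner difference $\vf_\vtheta(\hat{\rvx}_{t_n}^\vphi, t_n) - \vf_\vtheta(\rvx_{t_{n+1}}, t_{n+1})$ exactly as in \cref{eq:ctcd1}, exploit the degree-one positive homogeneity of $\norm{\cdot}_1$ to factor out $\Delta u$ (the paper phrases this by keeping the $O(\Delta u)$ remainder inside the norm; you make it explicit via the reverse triangle inequality, which is the cleaner way to handle the nondifferentiability of $\ell_1$), and then pass to the Riemann/dominated-convergence limit. Your more cautious $o(\Delta u)$ bound for the remainder is actually the right order under the stated $\tau \in C^1$ hypothesis (the paper's $O((\Delta u)^2)$ would require a bit more regularity of $\tau$), but this makes no difference to the conclusion.
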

\begin{proof}
    Let $\Delta u = \frac{1}{N-1}$ and $u_n = \frac{n-1}{N-1}$. We have
    \begin{align}
        &(N-1) \mcal{L}_\text{CD}^N(\vtheta, \vtheta; \vphi) = \frac{1}{\Delta u}\mcal{L}_\text{CD}^N(\vtheta, \vtheta; \vphi) = \frac{1}{\Delta u} \mbb{E}[\lambda(t_n)\| \vf_\vtheta(\rvx_{t_{n+1}}, t_{n+1}) - \vf_{\vtheta}(\hat{\rvx}_{t_n}^\vphi, t_n)\|_1]\notag \\
        \stackrel{(i)}{=}& \frac{1}{\Delta u} \mbb{E}\left[\lambda(t_n)\norm{t_{n+1} \frac{\partial \vf_\vtheta(\rvx_{t_{n+1}}, t_{n+1})}{\partial \rvx_{t_{n+1}}}\vs_\vphi(\rvx_{t_{n+1}}, t_{n+1})\tau'(u_{n}) - \frac{\partial \vf_\vtheta(\rvx_{t_{n+1}}, t_{n+1})}{\partial t_{n+1}}\tau'(u_{n}) + O((\Delta u)^2)}_1\right]\notag\\
        =& \mbb{E}\left[\lambda(t_n)\tau'(u_{n})\norm{t_{n+1} \frac{\partial \vf_\vtheta(\rvx_{t_{n+1}}, t_{n+1})}{\partial \rvx_{t_{n+1}}}\vs_\vphi(\rvx_{t_{n+1}}, t_{n+1})  - \frac{\partial \vf_\vtheta(\rvx_{t_{n+1}}, t_{n+1})}{\partial t_{n+1}}  + O(\Delta u)}_1\right]\notag\\
        =& \mbb{E}\left[\frac{\lambda(t_n)}{(\tau^{-1})'(t_{n})}\norm{t_{n+1} \frac{\partial \vf_\vtheta(\rvx_{t_{n+1}}, t_{n+1})}{\partial \rvx_{t_{n+1}}}\vs_\vphi(\rvx_{t_{n+1}}, t_{n+1})  - \frac{\partial \vf_\vtheta(\rvx_{t_{n+1}}, t_{n+1})}{\partial t_{n+1}}  + O(\Delta u)}_1\right]\label{eq:ctcd_l1}
    \end{align}
    where (i) is obtained by plugging \cref{eq:ctcd1} into the previous equation. Taking the limit for both sides of \cref{eq:ctcd_l1} as $\Delta u \to 0$ or equivalently $N\to \infty$ leads to \cref{eq:ctcd_l1_obj}, which completes the proof.
\end{proof}
\begin{remark}
    According to \cref{thm:ctcd_l1}, consistency models can be trained by minimizing $\mcal{L}_\text{CD, $\ell_1$}^\infty(\vtheta, \vtheta; \vphi)$. Moreover, the same reasoning in \cref{remark} can be applied to show that $\mcal{L}_\text{CD, $\ell_1$}^\infty(\vtheta, \vtheta; \vphi) = 0$ if and only if $\vf_\vtheta(\rvx_t, t) = \rvx_\epsilon$ for all $\rvx_t \in \mathbb{R}^d$ and $t \in [\epsilon, T]$.
\end{remark}

In the second case where $\vtheta^- = \operatorname{stopgrad}(\vtheta)$, we can derive a so-called ``pseudo-objective'' whose gradient matches the gradient of $\mcal{L}_\text{CD}^N(\vtheta, \vtheta^{-}; \vphi)$ in the limit of $N\to\infty$. Minimizing this pseudo-objective with gradient descent gives another way to train consistency models via distillation. This pseudo-objective is provided by the theorem below.

\begin{theorem}\label{thm:ctcd2}
    Let $t_n = \tau(\frac{n-1}{N-1})$, where $n \in \llbracket 1, N \rrbracket$, and $\tau(\cdot)$ is a strictly monotonic function with $\tau(0) = \epsilon$ and $\tau(1) = T$. Assume $\tau$ is continuously differentiable in $[0,1]$, $d$ is three times continuously differentiable with bounded third derivatives, and $\vf_{\vtheta}$ is twice continuously differentiable with bounded first and second derivatives. Assume further that the weighting function $\lambda(\cdot)$ is bounded, $\sup_{\rvx, t\in[\epsilon,T]}\norm{\vs_\vphi(\rvx, t)}_2 < \infty$, and $\sup_{\rvx, t\in[\epsilon, T]}\norm{\nabla_\vtheta \vf_\vtheta(\rvx, t)}_2 < \infty$. Suppose we use the Euler ODE solver, and $\vtheta^{-} = \operatorname{stopgrad}(\vtheta)$ in consistency distillation. Then,
    \begin{align}
        \lim_{N \to \infty} (N-1) \nabla_\vtheta \mcal{L}_\text{CD}^N(\vtheta, \vtheta^{-}; \vphi) = \nabla_\vtheta \mcal{L}_\text{CD}^\infty(\vtheta, \vtheta^{-}; \vphi),\label{eq:ctcd_obj2}
    \end{align}
    where
    \begin{align}
        \mcal{L}_\text{CD}^{\infty} (\vtheta, \vtheta^{-}; \vphi) \coloneqq \mbb{E}\left[\frac{\lambda(t)}{(\tau^{-1})'(t)} \vf_\vtheta(\rvx_t, t) \tran \mH(\vf_{\vtheta^-}(\rvx_t, t)) \left(\frac{\partial \vf_{\vtheta^-}(\rvx_t, t)}{\partial t} - t \frac{\partial \vf_{\vtheta^-}(\rvx_t, t)}{\partial \rvx_t} \vs_\vphi(\rvx_{t}, t)\right)\right].
    \end{align}
    Here the expectation above is taken over $\rvx \sim p_\text{data}$, $u \sim \mcal{U}[0, 1]$, $t = \tau(u)$, and $\rvx_t \sim \mcal{N}(\rvx, t^2\mI)$.
\end{theorem}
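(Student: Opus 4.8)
The plan is to mirror the proofs of \cref{thm:ctcd1,thm:ctcd_l1}, with one essential change of order: because the stopgrad operator breaks the symmetric second-order structure that made the squared-distance argument go through, I take $\nabla_\vtheta$ \emph{first} and only then Taylor-expand. Write $\Delta u = \frac{1}{N-1}$ and $u_n = \frac{n-1}{N-1}$, so that $t_n = \tau(u_n)$. Since $\vtheta^- = \operatorname{stopgrad}(\vtheta)$, differentiating $\mcal{L}_\text{CD}^N(\vtheta,\vtheta^-;\vphi) = \mbb{E}[\lambda(t_n) d(\vf_\vtheta(\rvx_{t_{n+1}},t_{n+1}),\vf_{\vtheta^-}(\hat{\rvx}_{t_n}^\vphi,t_n))]$ touches only the online branch, so $\nabla_\vtheta\mcal{L}_\text{CD}^N = \mbb{E}[\lambda(t_n)(\nabla_\vtheta \vf_\vtheta(\rvx_{t_{n+1}},t_{n+1}))\tran\,\partial_1 d(\vf_\vtheta(\rvx_{t_{n+1}},t_{n+1}),\vf_{\vtheta^-}(\hat{\rvx}_{t_n}^\vphi,t_n))]$, where $\partial_1 d$ denotes the gradient of $d$ in its first argument, evaluated (but not further differentiated) at the current pair of points.

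The core step is a first-order expansion of this $\partial_1 d$ term about the diagonal. Setting $\rva \coloneqq \vf_\vtheta(\rvx_{t_{n+1}},t_{n+1})$ and $\rvb \coloneqq \vf_{\vtheta^-}(\hat{\rvx}_{t_n}^\vphi,t_n)$, we have $\rvb = \vf_\vtheta(\hat{\rvx}_{t_n}^\vphi, t_n)$ numerically (since $\vtheta^- = \vtheta$), and by the Euler one-step expansion already established in \cref{eq:ctcd1}, $\rvb - \rva = -\tau'(u_n)\Delta u\big(\frac{\partial \vf_\vtheta}{\partial t} - t_{n+1}\frac{\partial \vf_\vtheta}{\partial \rvx}\vs_\vphi\big) + O((\Delta u)^2)$, with all derivatives at $(\rvx_{t_{n+1}},t_{n+1})$. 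Since $d(\rvx,\rvx) \equiv 0$ and $d(\cdot,\rvx)$ is minimized at $\rvx$, we have $\partial_1 d(\rvx,\rvx) \equiv \bm{0}$; differentiating this identity in $\rvx$ yields $\partial_1\partial_1 d(\rvx,\rvx) + \partial_2\partial_1 d(\rvx,\rvx) = \bm 0$, i.e. $\partial_2\partial_1 d(\rvx,\rvx) = -\mH(\rvx)$ by the definition of $\mH$. Hence $\partial_1 d(\rva,\rvb) = \partial_2\partial_1 d(\rva,\rva)(\rvb-\rva) + O(\norm{\rvb-\rva}^2) = \tau'(u_n)\Delta u\,\mH(\rva)\big(\frac{\partial \vf_\vtheta}{\partial t} - t_{n+1}\frac{\partial \vf_\vtheta}{\partial \rvx}\vs_\vphi\big) + O((\Delta u)^2)$.

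Multiplying by $N-1 = 1/\Delta u$ cancels the $\Delta u$ in the leading term and turns the remainder into $O(\Delta u)$; using $\tau'(u_n) = 1/(\tau^{-1})'(t_n)$ and letting $N\to\infty$ (so $\Delta u \to 0$, $t_{n+1},t_n \to t$) gives $\lim_{N\to\infty}(N-1)\nabla_\vtheta\mcal{L}_\text{CD}^N = \mbb{E}[\frac{\lambda(t)}{(\tau^{-1})'(t)}(\nabla_\vtheta\vf_\vtheta(\rvx_t,t))\tran\mH(\vf_\vtheta(\rvx_t,t))(\frac{\partial\vf_\vtheta(\rvx_t,t)}{\partial t} - t\frac{\partial\vf_\vtheta(\rvx_t,t)}{\partial\rvx_t}\vs_\vphi(\rvx_t,t))]$. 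On the other side, differentiating the stated $\mcal{L}_\text{CD}^\infty(\vtheta,\vtheta^-;\vphi)$ in $\vtheta$ touches only the explicit factor $\vf_\vtheta(\rvx_t,t)\tran$ (everything wrapped inside $\vtheta^-$ is frozen), producing exactly the same expression once $\vtheta^- = \vtheta$ is used at the evaluation point; this establishes \cref{eq:ctcd_obj2}. I would also remark that $\mcal{L}_\text{CD}^\infty$ is only a \emph{pseudo-objective}: its gradient, not its value, is the scaled limit.

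The main obstacle is not the algebra but justifying the interchange of the limit with the expectation and the gradient. This is where the hypotheses are spent: bounded third derivatives of $d$, bounded first and second derivatives of $\vf_\vtheta$, and boundedness of $\lambda$, $\vs_\vphi$, and $\nabla_\vtheta\vf_\vtheta$ together give a deterministic (hence $\rvx$- and $n$-uniform) $O((\Delta u)^2)$ bound on the Taylor remainders inside the expectation, so dominated convergence applies after scaling by $1/\Delta u$. One must also be careful that in the chain rule for $\nabla_\vtheta$ the factor $\partial_1 d(\rva,\rvb)$ is treated as a value, and that the stopgrad freezes $\vtheta^-$ consistently on both sides of the comparison, so that $\mH(\vf_\vtheta)$ may be identified with $\mH(\vf_{\vtheta^-})$ (and likewise for the bracketed derivative terms) in the final step.
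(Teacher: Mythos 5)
Your proof is correct and reaches the same limit as the paper, and the ingredients are the same (the Euler expansion in Eq.~(18) of the appendix, the identities $d(\rvx,\rvx)\equiv 0$ and $\partial_1 d(\rvx,\rvx)\equiv\bm 0$, the definition of $\mH$). The one genuine difference is the order of operations: the paper first expands $d$ to second order about the diagonal, getting a quadratic form $\tfrac{1}{2}(\vf_\vtheta-\vf_{\vtheta^-})\tran\mH(\vf_{\vtheta^-})(\vf_\vtheta-\vf_{\vtheta^-})$ plus an $O(|\Delta u|^3)$ remainder, and only then applies $\nabla_\vtheta$; you apply $\nabla_\vtheta$ to the exact loss (isolating $(\nabla_\vtheta\vf_\vtheta)\tran\,\partial_1 d$), and then expand $\partial_1 d$ to first order about the diagonal using the identity $\partial_2\partial_1 d(\rvx,\rvx)=-\mH(\rvx)$. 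Your ordering is marginally tidier because you never need to argue that $\nabla_\vtheta$ of a Taylor remainder stays $O(|\Delta u|^2)$ — the remainder you control is a purely analytic one in the metric, bounded by the assumed third derivatives of $d$ and first derivatives of $\vf_\vtheta$. Two cosmetic mismatches that do not affect the limit: you produce $\mH(\vf_\vtheta(\rvx_{t_{n+1}},t_{n+1}))$ where the paper has $\mH(\vf_{\vtheta^-}(\hat{\rvx}_{t_n}^\vphi,t_n))$; since $\mH$ is continuous (indeed Lipschitz under the bounded-third-derivative hypothesis) and the two arguments differ by $O(\Delta u)$, the discrepancy is absorbed into the vanishing remainder, and both become $\mH(\vf_{\vtheta^-}(\rvx_t,t))$ in the limit.
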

\begin{proof}
    We denote $\Delta u = \frac{1}{N-1}$ and $u_n = \frac{n-1}{N-1}$. First, we leverage Taylor series expansion to obtain
    \begin{align}
        &(N-1)\mcal{L}_\text{CD}^N(\vtheta, \vtheta^{-}; \vphi) = \frac{1}{\Delta u}\mcal{L}_\text{CD}^N(\vtheta, \vtheta^{-}; \vphi) = \frac{1}{\Delta u} \mbb{E}[\lambda(t_n)d(\vf_\vtheta(\rvx_{t_{n+1}}, t_{n+1}), \vf_{\vtheta^{-}}(\hat{\rvx}_{t_n}^\vphi, t_n)]\notag \\
        \stackrel{(i)}{=}&\begin{multlined}[t][0.9\displaywidth]
            \frac{1}{2 \Delta u}\bigg(\mbb{E}\{\lambda(t_n)[\vf_\vtheta(\rvx_{t_{n+1}}, t_{n+1}) - \vf_{\vtheta^-}(\hat{\rvx}_{t_n}^\vphi, t_n)]\tran \mH(\vf_{\vtheta^{-}}(\hat{\rvx}_{t_n}^\vphi, t_n))\\\cdot [\vf_\vtheta(\rvx_{t_{n+1}}, t_{n+1}) - \vf_{\vtheta^{-}}(\hat{\rvx}_{t_n}^\vphi, t_n)]\} + \mbb{E}[O(|\Delta u|^3)]\bigg)
        \end{multlined}\notag \\
        =&\frac{1}{2\Delta u}\mbb{E}\{\lambda(t_n)[\vf_\vtheta(\rvx_{t_{n+1}}, t_{n+1}) - \vf_{\vtheta^-}(\hat{\rvx}_{t_n}^\vphi, t_n)]\tran \mH(\vf_{\vtheta^{-}}(\hat{\rvx}_{t_n}^\vphi, t_n))[\vf_\vtheta(\rvx_{t_{n+1}}, t_{n+1}) - \vf_{\vtheta^-}(\hat{\rvx}_{t_n}^\vphi, t_n)]\} + \mbb{E}[O(|\Delta u|^2)]\label{eq:ctcd2_1}
    \end{align}
    where (i) is derived by expanding $d(\cdot, \vf_{\vtheta^{-}}(\hat{\rvx}_{t_n}^\vphi, t_n))$ to second order and leveraging $d(\rvx, \rvx) \equiv 0$ and $\nabla_\rvy d(\rvy, \rvx)|_{\rvy=\rvx} \equiv \bm{0}$. Next, we compute the gradient of \cref{eq:ctcd2_1} with respect to $\vtheta$ and simplify the result to obtain
    \begin{align}
        &(N-1) \nabla_\vtheta \mcal{L}_\text{CD}^N(\vtheta, \vtheta^{-}; \vphi) = \frac{1}{\Delta u} \nabla_\vtheta \mcal{L}_\text{CD}^N(\vtheta, \vtheta^{-}; \vphi)\notag \\
        =&\frac{1}{2\Delta u} \nabla_\vtheta \mbb{E}\{\lambda(t_n)[\vf_\vtheta(\rvx_{t_{n+1}}, t_{n+1}) - \vf_{\vtheta^-}(\hat{\rvx}_{t_n}^\vphi, t_n)]\tran \mH(\vf_{\vtheta^{-}}(\hat{\rvx}_{t_n}^\vphi, t_n))[\vf_\vtheta(\rvx_{t_{n+1}}, t_{n+1}) - \vf_{\vtheta^-}(\hat{\rvx}_{t_n}^\vphi, t_n)]\} + \mbb{E}[O(|\Delta u|^2)]\notag \\
        \stackrel{(i)}{=}&\frac{1}{\Delta u} \mbb{E}\{\lambda(t_n)[\nabla_\vtheta \vf_\vtheta(\rvx_{t_{n+1}}, t_{n+1})]\tran \mH(\vf_{\vtheta^{-}}(\hat{\rvx}_{t_n}^\vphi, t_n))[\vf_\vtheta(\rvx_{t_{n+1}}, t_{n+1}) - \vf_{\vtheta^-}(\hat{\rvx}_{t_n}^\vphi, t_n)]\} + \mbb{E}[O(|\Delta u|^2)]\notag \\
        \stackrel{(ii)}{=}&\!\begin{multlined}[t][0.9\displaywidth]
                \frac{1}{\Delta u}\mbb{E}\bigg\{\lambda(t_n)[\nabla_\vtheta \vf_\vtheta(\rvx_{t_{n+1}}, t_{n+1})]\tran \mH(\vf_{\vtheta^{-}}(\hat{\rvx}_{t_n}^\vphi, t_n))\bigg[t_{n+1} \frac{\partial \vf_{\vtheta^-}(\rvx_{t_{n+1}}, t_{n+1})}{\partial \rvx_{t_{n+1}}}\vs_\vphi(\rvx_{t_{n+1}}, t_{n+1})\tau'(u_{n})\Delta u \\ - \frac{\partial \vf_{\vtheta^-}(\rvx_{t_{n+1}}, t_{n+1})}{\partial t_{n+1}}\tau'(u_{n})\Delta u\bigg]\bigg\} + \mbb{E}[O(|\Delta u|)]
            \end{multlined}\notag \\
        =&\!\begin{multlined}[t][0.9\displaywidth]
            \mbb{E}\bigg\{\lambda(t_n)[\nabla_\vtheta \vf_\vtheta(\rvx_{t_{n+1}}, t_{n+1})]\tran \mH(\vf_{\vtheta^{-}}(\hat{\rvx}_{t_n}^\vphi, t_n))\bigg[t_{n+1} \frac{\partial \vf_{\vtheta^-}(\rvx_{t_{n+1}}, t_{n+1})}{\partial \rvx_{t_{n+1}}}\vs_\vphi(\rvx_{t_{n+1}}, t_{n+1})\tau'(u_{n}) \\ - \frac{\partial \vf_{\vtheta^-}(\rvx_{t_{n+1}}, t_{n+1})}{\partial t_{n+1}}\tau'(u_{n})\bigg]\bigg\} + \mbb{E}[O(|\Delta u|)]
        \end{multlined}\notag \\
        =&\!\begin{multlined}[t][0.9\displaywidth]
            \nabla_\vtheta \mbb{E}\bigg\{\lambda(t_n)[\vf_\vtheta(\rvx_{t_{n+1}}, t_{n+1})]\tran \mH(\vf_{\vtheta^{-}}(\hat{\rvx}_{t_n}^\vphi, t_n))\bigg[t_{n+1} \frac{\partial \vf_{\vtheta^-}(\rvx_{t_{n+1}}, t_{n+1})}{\partial \rvx_{t_{n+1}}}\vs_\vphi(\rvx_{t_{n+1}}, t_{n+1})\tau'(u_{n}) \\ - \frac{\partial \vf_{\vtheta^-}(\rvx_{t_{n+1}}, t_{n+1})}{\partial t_{n+1}}\tau'(u_{n})\bigg]\bigg\} + \mbb{E}[O(|\Delta u|)]
        \end{multlined}\\\notag
        =&\!\begin{multlined}[t][0.9\displaywidth]
            \nabla_\vtheta \mbb{E}\bigg\{\frac{\lambda(t_n)}{(\tau^{-1})'(t_{n})}[\vf_\vtheta(\rvx_{t_{n+1}}, t_{n+1})]\tran \mH(\vf_{\vtheta^{-}}(\hat{\rvx}_{t_n}^\vphi, t_n))\bigg[t_{n+1} \frac{\partial \vf_{\vtheta^-}(\rvx_{t_{n+1}}, t_{n+1})}{\partial \rvx_{t_{n+1}}}\vs_\vphi(\rvx_{t_{n+1}}, t_{n+1}) \\ - \frac{\partial \vf_{\vtheta^-}(\rvx_{t_{n+1}}, t_{n+1})}{\partial t_{n+1}}\bigg]\bigg\} + \mbb{E}[O(|\Delta u|)]
        \end{multlined}\label{eq:ctcd2_grad}
    \end{align}
    Here (i) results from the chain rule, and (ii) follows from \cref{eq:ctcd1} and $\vf_\vtheta(\rvx, t) \equiv \vf_{\vtheta^{-}}(\rvx, t)$, since $\vtheta^{-} = \operatorname{stopgrad}(\vtheta)$. Taking the limit for both sides of \cref{eq:ctcd2_grad} as $\Delta u \to 0$ (or $N\to\infty$) yields \cref{eq:ctcd_obj2}, which completes the proof.
\end{proof}
\begin{remark}
    When $d(\rvx, \rvy) = \norm{\rvx - \rvy}_2^2$, the pseudo-objective $\mcal{L}_\text{CD}^\infty (\vtheta, \vtheta^{-}; \vphi)$ can be simplified to
    \begin{align}
        \mcal{L}_\text{CD}^{\infty} (\vtheta, \vtheta^{-}; \vphi) = 2 \mbb{E}\left[\frac{\lambda(t)}{(\tau^{-1})'(t)}\vf_\vtheta(\rvx_t, t) \tran \left(\frac{\partial \vf_{\vtheta^-}(\rvx_t, t)}{\partial t} - t \frac{\partial \vf_{\vtheta^-}(\rvx_t, t)}{\partial \rvx_t} \vs_\vphi(\rvx_{t}, t)\right)\right].
    \end{align}
\end{remark}
\begin{remark}
    The objective $\mcal{L}_\text{CD}^\infty(\vtheta, \vtheta^{-}; \vphi)$ defined in \cref{thm:ctcd2} is only meaningful in terms of its gradient---one cannot measure the progress of training by tracking the value of $\mcal{L}_\text{CD}^\infty(\vtheta, \vtheta^{-}; \vphi)$, but can still apply gradient descent to this objective to distill consistency models from pre-trained diffusion models. Because this objective is not a typical loss function, we refer to it as the ``pseudo-objective'' for consistency distillation.
\end{remark}
\begin{remark}\label{remark2}
    Following the same reasoning in \cref{remark}, we can easily derive that $\mcal{L}_\text{CD}^\infty(\vtheta, \vtheta^{-}; \vphi) = 0$ and $\nabla_\vtheta \mcal{L}_\text{CD}^\infty(\vtheta, \vtheta^{-}; \vphi) = \bm{0}$ if $\vf_\vtheta(\rvx, t)$ matches the ground truth consistency function for the empirical PF ODE that involves $\vs_\vphi(\rvx, t)$. However, the converse does not hold true in general. This distinguishes $\mcal{L}_\text{CD}^\infty(\vtheta, \vtheta^{-}; \vphi)$ from $\mcal{L}_\text{CD}^\infty(\vtheta, \vtheta; \vphi)$, the latter of which is a true loss function.
\end{remark}

\subsection{Consistency Training in Continuous Time}\label{sec:ctct}
A remarkable observation is that the pseudo-objective in \cref{thm:ctcd2} can be estimated without any pre-trained diffusion models, which enables direct consistency training of consistency models. More precisely, we have the following result.

\begin{theorem}\label{thm:ctct}
    Let $t_n = \tau(\frac{n-1}{N-1})$, where $n \in \llbracket 1, N \rrbracket$, and $\tau(\cdot)$ is a strictly monotonic function with $\tau(0) = \epsilon$ and $\tau(1) = T$. Assume $\tau$ is continuously differentiable in $[0,1]$, $d$ is three times continuously differentiable with bounded third derivatives, and $\vf_{\vtheta}$ is twice continuously differentiable with bounded first and second derivatives. Assume further that the weighting function $\lambda(\cdot)$ is bounded, $\mbb{E}[\norm{\nabla \log p_{t_n}(\rvx_{t_{n}})}_2^2] < \infty$, $\sup_{\rvx, t\in[\epsilon, T]}\norm{\nabla_\vtheta \vf_\vtheta(\rvx, t)}_2 < \infty$, and $\vphi$ represents diffusion model parameters that satisfy $\vs_\vphi(\rvx, t) \equiv \nabla \log p_t(\rvx)$. Then if $\vtheta^{-} = \operatorname{stopgrad}(\vtheta)$, we have
    \begin{align}
        \lim_{N \to \infty} (N-1)\nabla_\vtheta \mcal{L}_\text{CD}^N(\vtheta, \vtheta^{-}; \vphi) = \lim_{N \to \infty} (N-1)\nabla_\vtheta \mcal{L}_\text{CT}^N(\vtheta, \vtheta^{-}) = \nabla_\vtheta \mcal{L}_\text{CT}^\infty(\vtheta, \vtheta^{-}),\label{eq:ctct_obj}
    \end{align}
    where $\mcal{L}^N_\text{CD}$ uses the Euler ODE solver, and
    \begin{align}
        \mcal{L}_\text{CT}^{\infty} (\vtheta, \vtheta^{-}) \coloneqq \mbb{E}\left[\frac{\lambda(t)}{(\tau^{-1})'(t)} \vf_\vtheta(\rvx_t, t) \tran \mH(\vf_{\vtheta^-}(\rvx_t, t)) \left(\frac{\partial \vf_{\vtheta^-}(\rvx_t, t)}{\partial t} + \frac{\partial \vf_{\vtheta^-}(\rvx_t, t)}{\partial \rvx_t} \cdot \frac{\rvx_t - \rvx}{t}\right)\right].
    \end{align}
    Here the expectation above is taken over $\rvx \sim p_\text{data}$, $u\sim\mcal{U}[0,1]$, $t=\tau(u)$, and $\rvx_t \sim \mcal{N}(\rvx, t^2\mI)$.
\end{theorem}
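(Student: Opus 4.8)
The plan is to derive Theorem~\ref{thm:ctct} by re-running the argument behind Theorem~\ref{thm:ctcd2} after one structural observation. With the Euler solver the distillation update is $\hat\rvx_{t_n}^\vphi = \rvx_{t_{n+1}} - (t_n - t_{n+1})t_{n+1}\vs_\vphi(\rvx_{t_{n+1}}, t_{n+1})$; replacing the pre-trained score $\vs_\vphi(\rvx_{t_{n+1}}, t_{n+1})$ by the single-sample estimate $-(\rvx_{t_{n+1}} - \rvx)/t_{n+1}^2 = -\rvz/t_{n+1}$ turns this into $\rvx_{t_{n+1}} + (t_n - t_{n+1})\rvz = \rvx + t_n\rvz$, the consistency-training input (recall $\rvx_{t_{n+1}} = \rvx + t_{n+1}\rvz$). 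Hence $\mcal{L}_\text{CT}^N(\vtheta, \vtheta^-)$ is exactly the expression $\mcal{L}_\text{CD}^N(\vtheta, \vtheta^-; \vphi)$ with $\vs_\vphi(\rvx_{t_{n+1}}, t_{n+1})$ replaced by $-\rvz/t_{n+1}$, and since the Taylor-expansion chain from \cref{eq:ctcd2_1} through \cref{eq:ctcd2_grad} uses $\vs_\vphi$ only through this update, it carries over verbatim: the term $t_{n+1}\partial_1\vf_{\vtheta^-}(\rvx_{t_{n+1}}, t_{n+1})\vs_\vphi(\rvx_{t_{n+1}}, t_{n+1})$ becomes $-\partial_1\vf_{\vtheta^-}(\rvx_{t_{n+1}}, t_{n+1})(\rvx_{t_{n+1}} - \rvx)/t_{n+1}$, and passing to the limit $N\to\infty$ produces precisely $\nabla_\vtheta\mcal{L}_\text{CT}^\infty(\vtheta, \vtheta^-)$. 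This establishes the second equality in \cref{eq:ctct_obj}.

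For the first equality I would run the same chain once more on $\mcal{L}_\text{CD}^N(\vtheta, \vtheta^-; \vphi)$ itself, under the hypothesis $\vs_\vphi(\rvx, t)\equiv\nabla\log p_t(\rvx)$ (with the uniform-score hypothesis of Theorem~\ref{thm:ctcd2} relaxed as discussed below), obtaining $\lim_{N\to\infty}(N-1)\nabla_\vtheta\mcal{L}_\text{CD}^N = \nabla_\vtheta\mcal{L}_\text{CD}^\infty(\vtheta, \vtheta^-; \vphi)$ in the form of Theorem~\ref{thm:ctcd2}. It then remains to identify $\nabla_\vtheta\mcal{L}_\text{CD}^\infty(\vtheta, \vtheta^-; \vphi) = \nabla_\vtheta\mcal{L}_\text{CT}^\infty(\vtheta, \vtheta^-)$. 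The two integrands differ only in one slot, namely $-t\,\partial_{\rvx_t}\vf_{\vtheta^-}(\rvx_t, t)\vs_\vphi(\rvx_t, t)$ versus $\partial_{\rvx_t}\vf_{\vtheta^-}(\rvx_t, t)\cdot(\rvx_t - \rvx)/t$, while the remaining factors $\lambda(t)/(\tau^{-1})'(t)$, $\mH(\vf_{\vtheta^-}(\rvx_t, t))$, $\vf_\vtheta(\rvx_t, t)$, $\partial_t\vf_{\vtheta^-}(\rvx_t, t)$ depend only on $(\rvx_t, t)$ and $\vtheta$ and not on $\rvx$; since that slot is linear in $\rvx$, the law of total expectation together with Lemma~\ref{lem:grad_log_p_t} (which gives $\mbb{E}[(\rvx_t - \rvx)/t\mid\rvx_t] = -t\nabla\log p_t(\rvx_t) = -t\vs_\vphi(\rvx_t, t)$) shows the two objectives have the same expectation, and as $\nabla_\vtheta$ acts only on the $\rvx$-free factor $\vf_\vtheta(\rvx_t, t)$, the same holds for the gradients. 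Equivalently, one can merge the two paragraphs into a single limit computation, noting that the CD and CT Euler directions share the conditional mean $(t_{n+1}-t_n)t_{n+1}\nabla\log p_{t_{n+1}}(\rvx_{t_{n+1}})$ given $\rvx_{t_{n+1}}$.

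The main obstacle, and the only place genuinely new work is required relative to Theorem~\ref{thm:ctcd2}, is remainder control. In that proof the uniform bound $\sup_{\rvx, t}\norm{\vs_\vphi(\rvx, t)}_2 < \infty$ was used to keep the second-order Taylor remainder of $d$ and the first-order remainder of $\vf_{\vtheta^-}$ uniformly small after multiplying by $N-1 = (\Delta u)^{-1}$, but in consistency training the ``score'' $-\rvz/t_{n+1}$ is an unbounded Gaussian quantity. I would replace each almost-sure bound by an $L^1$ bound: the Euler increment is $O(\Delta u)(1 + \norm{\rvz}_2)$, so the relevant remainders are $O((\Delta u)^3)(1 + \norm{\rvz}_2)^3$ and $O((\Delta u)^2)(1 + \norm{\rvz}_2)^2$, each integrable because $\rvz\sim\mcal{N}(\vzero, \mI)$ has finite moments of all orders; the hypotheses that $\lambda$ is bounded, that $\vf_\vtheta$ has bounded first and second derivatives and $d$ bounded third derivatives, and that $\sup_{\rvx, t}\norm{\nabla_\vtheta\vf_\vtheta(\rvx, t)}_2 < \infty$ then let dominated convergence move $\lim_{N\to\infty}$ inside the expectation. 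The same device covers the CD side, where $\vs_\vphi(\rvx_t, t) = \nabla\log p_t(\rvx_t) = \mbb{E}[-\rvz/t\mid\rvx_t]$ has all moments dominated by those of $\rvz/t$, so $\mbb{E}[\norm{\nabla\log p_{t_n}(\rvx_{t_n})}_2^k] < \infty$ for every $k$; the stated hypothesis $\mbb{E}[\norm{\nabla\log p_{t_n}(\rvx_{t_n})}_2^2] < \infty$ additionally guarantees that $\mcal{L}_\text{CT}^\infty(\vtheta, \vtheta^-)$ is a well-defined finite quantity, which completes \cref{eq:ctct_obj}.
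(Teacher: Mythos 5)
Your overall strategy matches the paper's: Taylor-expand the discrete loss, pass to the limit, and compare. Your derivation of the second equality (substituting $-\rvz/t_{n+1}$ for the score in the Theorem~\ref{thm:ctcd2} chain) is exactly what the paper does. For the first equality, however, you take a materially different—and arguably more careful—route. The paper's proof claims the algebraic identity $\hat{\rvx}_{t_n}^\vphi = \rvx_{t_{n+1}} - (t_n-t_{n+1})t_{n+1}\bigl(-(\rvx_{t_{n+1}}-\rvx)/t_{n+1}^2\bigr) = \rvx + t_n\rvz$, which as written silently replaces $\vs_\vphi(\rvx_{t_{n+1}},t_{n+1}) = \nabla\log p_{t_{n+1}}(\rvx_{t_{n+1}})$ by the single-sample quantity $-(\rvx_{t_{n+1}}-\rvx)/t_{n+1}^2$; these are not pointwise equal, only equal in conditional expectation given $\rvx_{t_{n+1}}$. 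Your version makes this precise: after passing to the limit, the CD and CT integrands differ only in the slot that is linear in $\rvx$, the prefactors $\lambda/(\tau^{-1})'$, $\mH(\vf_{\vtheta^-})$, $\nabla_\vtheta\vf_\vtheta$ are $\sigma(\rvx_t)$-measurable, so the law of total expectation plus Lemma~\ref{lem:grad_log_p_t} collapses the two. What this buys is correctness of the key step; what it costs is that you either need a version of Theorem~\ref{thm:ctcd2} with the $\sup\|\vs_\vphi\|_2<\infty$ hypothesis relaxed, or (as you note) a merged single-pass computation where the cross term $t_{n+1}\vs_\vphi + \rvz$ is killed by conditioning before the remainder is bounded. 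Your discussion of remainder control is also a genuine addition: the paper leaves the $\mbb{E}[O(|\Delta u|^2)]$ bookkeeping implicit, and your $L^1$/dominated-convergence argument using Gaussian moments is the natural way to make it airtight once the solver direction becomes unbounded in $\rvz$.
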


\begin{proof}
    The proof mostly follows that of \cref{thm:ctcd2}. First, we leverage Taylor series expansion to obtain
    \begin{align}
        &(N-1)\mcal{L}_\text{CT}^N(\vtheta, \vtheta^{-}) = \frac{1}{\Delta u}\mcal{L}_\text{CT}^N(\vtheta, \vtheta^{-}) = \frac{1}{\Delta u} \mbb{E}[\lambda(t_n) d(\vf_\vtheta(\rvx + t_{n+1}\rvz, t_{n+1}), \vf_{\vtheta^{-}}(\rvx + t_n\rvz, t_n))]\notag \\
        \stackrel{(i)}{=}&\begin{multlined}[t][0.9\displaywidth]
            \frac{1}{2 \Delta u}\bigg(\mbb{E}\{\lambda(t_n)[\vf_\vtheta(\rvx + t_{n+1}\rvz, t_{n+1}) - \vf_{\vtheta^-}(\rvx + t_n \rvz, t_n)]\tran \mH(\vf_{\vtheta^{-}}(\rvx + t_n \rvz, t_n))\\ \cdot [\vf_\vtheta(\rvx + t_{n+1}\rvz, t_{n+1}) - \vf_{\vtheta^{-}}(\rvx + t_n \rvz, t_n)]\} + \mbb{E}[O(|\Delta u|^3)]\bigg)
        \end{multlined}\notag \\
        =&\begin{multlined}[t][0.9\displaywidth]
            \frac{1}{2\Delta u}\mbb{E}\{\lambda(t_n)[\vf_\vtheta(\rvx + t_{n+1}\rvz, t_{n+1}) - \vf_{\vtheta^-}(\rvx + t_n \rvz, t_n)]\tran \mH(\vf_{\vtheta^{-}}(\rvx + t_n \rvz, t_n))\\ \cdot [\vf_\vtheta(\rvx + t_{n+1}\rvz, t_{n+1}) - \vf_{\vtheta^-}(\rvx + t_n \rvz, t_n)]\} + \mbb{E}[O(|\Delta u|^2)]
        \end{multlined}\label{eq:ctct_1}
    \end{align}
    where $\rvz \sim \mcal{N}(\bm{0}, \mI)$, (i) is derived by first expanding $d(\cdot, \vf_{\vtheta^{-}}(\rvx + t_n\rvz, t_n))$ to second order, and then noting that $d(\rvx, \rvx) \equiv 0$ and $\nabla_\rvy d(\rvy, \rvx)|_{\rvy=\rvx} \equiv \bm{0}$. Next, we compute the gradient of \cref{eq:ctct_1} with respect to $\vtheta$ and simplify the result to obtain
    \begin{align}
        &(N-1) \nabla_\vtheta \mcal{L}_\text{CT}^N(\vtheta, \vtheta^{-}) = \frac{1}{\Delta u} \nabla_\vtheta \mcal{L}_\text{CT}^N(\vtheta, \vtheta^{-})\notag \\
        =&\begin{multlined}[t][0.9\displaywidth]
            \frac{1}{2\Delta u} \nabla_\vtheta \mbb{E}\{\lambda(t_n)[\vf_\vtheta(\rvx + t_{n+1}\rvz, t_{n+1}) - \vf_{\vtheta^-}(\rvx + t_n \rvz, t_n)]\tran \mH(\vf_{\vtheta^{-}}(\rvx + t_n \rvz, t_n))\\ \cdot[\vf_\vtheta(\rvx + t_{n+1}\rvz, t_{n+1}) - \vf_{\vtheta^-}(\rvx + t_n \rvz, t_n)]\} + \mbb{E}[O(|\Delta u|^2)]
        \end{multlined}\notag \\
        \stackrel{(i)}{=}&\begin{multlined}[t][0.9\displaywidth]
            \frac{1}{\Delta u} \mbb{E}\{\lambda(t_n)[\nabla_\vtheta \vf_\vtheta(\rvx + t_{n+1}\rvz, t_{n+1})]\tran \mH(\vf_{\vtheta^{-}}(\rvx + t_n \rvz, t_n))\\ \cdot [\vf_\vtheta(\rvx + t_{n+1}\rvz, t_{n+1}) - \vf_{\vtheta^-}(\rvx + t_n \rvz, t_n)]\} + \mbb{E}[O(|\Delta u|^2)]
        \end{multlined}\label{eq:continue} \\
        \stackrel{(ii)}{=}&\begin{multlined}[t][0.9\displaywidth]
                \frac{1}{\Delta u}\mbb{E}\bigg\{\lambda(t_n)[\nabla_\vtheta \vf_\vtheta(\rvx + t_{n+1}\rvz, t_{n+1})]\tran \mH(\vf_{\vtheta^{-}}(\rvx + t_n \rvz, t_n))\bigg[\tau'(u_n)\Delta u \partial_1 \vf_{\vtheta^{-}}(\rvx + t_n \rvz, t_n)\rvz \\ + \partial_2 \vf_{\vtheta^{-}}(\rvx + t_n \rvz, t_n)\tau'(u_n)\Delta u \bigg]\bigg\} + \mbb{E}[O(|\Delta u|)]
            \end{multlined}\notag \\
        =&\begin{multlined}[t][0.9\displaywidth]
            \mbb{E}\bigg\{\lambda(t_n)\tau'(u_n)[\nabla_\vtheta \vf_\vtheta(\rvx + t_{n+1}\rvz, t_{n+1})]\tran \mH(\vf_{\vtheta^{-}}(\rvx + t_n \rvz, t_n))\bigg[\partial_1 \vf_{\vtheta^{-}}(\rvx + t_n \rvz, t_n)\rvz \\ + \partial_2 \vf_{\vtheta^{-}}(\rvx + t_n \rvz, t_n)\bigg]\bigg\} + \mbb{E}[O(|\Delta u|)]
        \end{multlined}\notag \\
        =&\begin{multlined}[t][0.9\displaywidth]
            \nabla_\vtheta \mbb{E}\bigg\{\lambda(t_n)\tau'(u_n)[\vf_\vtheta(\rvx + t_{n+1}\rvz, t_{n+1})]\tran \mH(\vf_{\vtheta^{-}}(\rvx + t_n \rvz, t_n))\bigg[\partial_1 \vf_{\vtheta^{-}}(\rvx + t_n \rvz, t_n)\rvz \\ + \partial_2 \vf_{\vtheta^{-}}(\rvx + t_n \rvz, t_n)\bigg]\bigg\} + \mbb{E}[O(|\Delta u|)]
        \end{multlined}\notag \\
        =&\nabla_\vtheta \mbb{E}\bigg\{\lambda(t_n)\tau'(u_n)[\vf_\vtheta(\rvx_{t_{n+1}}, t_{n+1})]\tran \mH(\vf_{\vtheta^{-}}(\rvx_{t_n}, t_n))\bigg[\partial_1 \vf_{\vtheta^{-}}(\rvx_{t_n}, t_n)\frac{\rvx_{t_n} - \rvx}{t_n} + \partial_2 \vf_{\vtheta^{-}}(\rvx_{t_n}, t_n)\bigg]\bigg\} + \mbb{E}[O(|\Delta u|)]\notag \\
        =&\nabla_\vtheta \mbb{E}\bigg\{\frac{\lambda(t_n)}{(\tau^{-1})'(t_n)}[\vf_\vtheta(\rvx_{t_{n+1}}, t_{n+1})]\tran \mH(\vf_{\vtheta^{-}}(\rvx_{t_n}, t_n))\bigg[\partial_1 \vf_{\vtheta^{-}}(\rvx_{t_n}, t_n)\frac{\rvx_{t_n} - \rvx}{t_n} + \partial_2 \vf_{\vtheta^{-}}(\rvx_{t_n}, t_n)\bigg]\bigg\} + \mbb{E}[O(|\Delta u|)]\label{eq:ctct_grad}
    \end{align}
    Here (i) results from the chain rule, and (ii) follows from Taylor expansion. Taking the limit for both sides of \cref{eq:ctct_grad} as $\Delta u \to 0$ or $N\to\infty$ yields the second equality in \cref{eq:ctct_obj}.

    Now we prove the first equality. Applying Taylor expansion again, we obtain
    \begin{align*}
        &(N-1)\nabla_\vtheta \mcal{L}_\text{CD}^N(\vtheta, \vtheta^{-}; \vphi) = \frac{1}{\Delta u}\nabla_\vtheta \mcal{L}_\text{CD}^N(\vtheta, \vtheta^{-}; \vphi) = \frac{1}{\Delta u}\nabla_\vtheta \mbb{E}[\lambda(t_n)d(\vf_\vtheta(\rvx_{t_{n+1}}, t_{n+1}), \vf_{\vtheta^{-}}(\hat{\rvx}_{t_n}^\vphi, t_n))]\\
        =& \frac{1}{\Delta u}\mbb{E}[\lambda(t_n)\nabla_\vtheta d(\vf_\vtheta(\rvx_{t_{n+1}}, t_{n+1}), \vf_{\vtheta^{-}}(\hat{\rvx}_{t_n}^\vphi, t_n))]\\
        =& \frac{1}{\Delta u}\mbb{E}[\lambda(t_n) \nabla_\vtheta \vf_\vtheta(\rvx_{t_{n+1}}, t_{n+1})\tran \partial_1 d(\vf_\vtheta(\rvx_{t_{n+1}}, t_{n+1}), \vf_{\vtheta^{-}}(\hat{\rvx}_{t_n}^\vphi, t_n))]\\
        =& \frac{1}{\Delta u}\begin{multlined}[t][0.9\displaywidth]
            \mbb{E}\bigg\{\lambda(t_n) \nabla_\vtheta \vf_\vtheta(\rvx_{t_{n+1}}, t_{n+1})\tran \bigg[\partial_1 d(\vf_{\vtheta^{-}}(\hat{\rvx}_{t_n}^\vphi, t_n), \vf_{\vtheta^{-}}(\hat{\rvx}_{t_n}^\vphi, t_n)) \\+ \mH(\vf_{\vtheta^{-}}(\hat{\rvx}_{t_n}^\vphi, t_n)) (\vf_\vtheta(\rvx_{t_{n+1}}, t_{n+1}) - \vf_{\vtheta^{-}}(\hat{\rvx}_{t_n}^\vphi, t_n)) + O(|\Delta u|^2)\bigg]\bigg\}
        \end{multlined}\\
        =& \frac{1}{\Delta u}\mbb{E}\{\lambda(t_n) \nabla_\vtheta \vf_\vtheta(\rvx_{t_{n+1}}, t_{n+1})\tran [\mH(\vf_{\vtheta^{-}}(\hat{\rvx}_{t_n}^\vphi, t_n)) (\vf_\vtheta(\rvx_{t_{n+1}}, t_{n+1}) - \vf_{\vtheta^{-}}(\hat{\rvx}_{t_n}^\vphi, t_n))] + O(|\Delta u|^2)\}
        \\
        =& \frac{1}{\Delta u}\mbb{E}\{\lambda(t_n) \nabla_\vtheta \vf_\vtheta(\rvx_{t_{n+1}}, t_{n+1})\tran [\mH(\vf_{\vtheta^{-}}(\hat{\rvx}_{t_n}^\vphi, t_n)) (\vf_{\vtheta^{-}}(\rvx_{t_{n+1}}, t_{n+1}) - \vf_{\vtheta^{-}}(\hat{\rvx}_{t_n}^\vphi, t_n))] + O(|\Delta u|^2)\}\\
        \stackrel{(i)}{=}&\begin{multlined}[t][0.9\displaywidth]
            \frac{1}{\Delta u} \mbb{E}\{\lambda(t_n)[\nabla_\vtheta \vf_\vtheta(\rvx + t_{n+1}\rvz, t_{n+1})]\tran \mH(\vf_{\vtheta^{-}}(\rvx + t_n \rvz, t_n))\\ \cdot [\vf_\vtheta(\rvx + t_{n+1}\rvz, t_{n+1}) - \vf_{\vtheta^-}(\rvx + t_n \rvz, t_n)]\} + \mbb{E}[O(|\Delta u|^2)]
        \end{multlined}
    \end{align*}
    where (i) holds because $\rvx_{t_{n+1}} = \rvx + t_{n+1} \rvz$ and $\hat{\rvx}_{t_n}^\vphi = \rvx_{t_{n+1}} -(t_n - t_{n+1}) t_{n+1} \frac{-(\rvx_{t_{n+1}} - \rvx)}{t_{n+1}^2} = \rvx_{t_{n+1}} + (t_n - t_{n+1}) \rvz = \rvx + t_n \rvz$. Because (i) matches \cref{eq:continue}, we can use the same reasoning procedure from \cref{eq:continue} to \cref{eq:ctct_grad} to conclude $\lim_{N \to \infty} (N-1)\nabla_\vtheta \mcal{L}_\text{CD}^N(\vtheta, \vtheta^{-}; \vphi) = \lim_{N \to \infty} (N-1)\nabla_\vtheta \mcal{L}_\text{CT}^N(\vtheta, \vtheta^{-})$, completing the proof.
\end{proof}
\begin{remark}
    Note that $\mcal{L}_\text{CT}^\infty(\vtheta, \vtheta^{-})$ does not depend on the diffusion model parameter $\vphi$ and hence can be optimized without any pre-trained diffusion models.
\end{remark}
\begin{remark}
    When $d(\rvx, \rvy) = \norm{\rvx - \rvy}_2^2$, the continuous-time consistency training objective becomes
    \begin{align}
        \mcal{L}_\text{CT}^{\infty} (\vtheta, \vtheta^{-}) = 2\mbb{E}\left[\frac{\lambda(t)}{(\tau^{-1})'(t)} \vf_\vtheta(\rvx_t, t) \tran \left(\frac{\partial \vf_{\vtheta^-}(\rvx_t, t)}{\partial t} + \frac{\partial \vf_{\vtheta^-}(\rvx_t, t)}{\partial \rvx_t} \cdot \frac{\rvx_t - \rvx}{t}\right)\right].
    \end{align}
\end{remark}
\begin{remark}
    Similar to $\mcal{L}_\text{CD}^\infty (\vtheta, \vtheta^{-}; \vphi)$ in \cref{thm:ctcd2}, $\mcal{L}_\text{CT}^\infty(\vtheta, \vtheta^{-})$ is a pseudo-objective; one cannot track training by monitoring the value of $\mcal{L}_\text{CT}^\infty(\vtheta, \vtheta^{-})$, but can still apply gradient descent on this loss function to train a consistency model $\vf_\vtheta(\rvx, t)$ directly from data. Moreover, the same observation in \cref{remark2} holds true: $\mcal{L}_\text{CT}^\infty(\vtheta, \vtheta^{-}) = 0$ and $\nabla_\vtheta \mcal{L}_\text{CT}^\infty(\vtheta, \vtheta^{-}) = \bm{0}$ if $\vf_\vtheta(\rvx, t)$ matches the ground truth consistency function for the PF ODE.
\end{remark}
\subsection{Experimental Verifications}
\begin{figure}
    \centering
    \begin{subfigure}[b]{0.4\linewidth}
        \includegraphics[width=\linewidth]{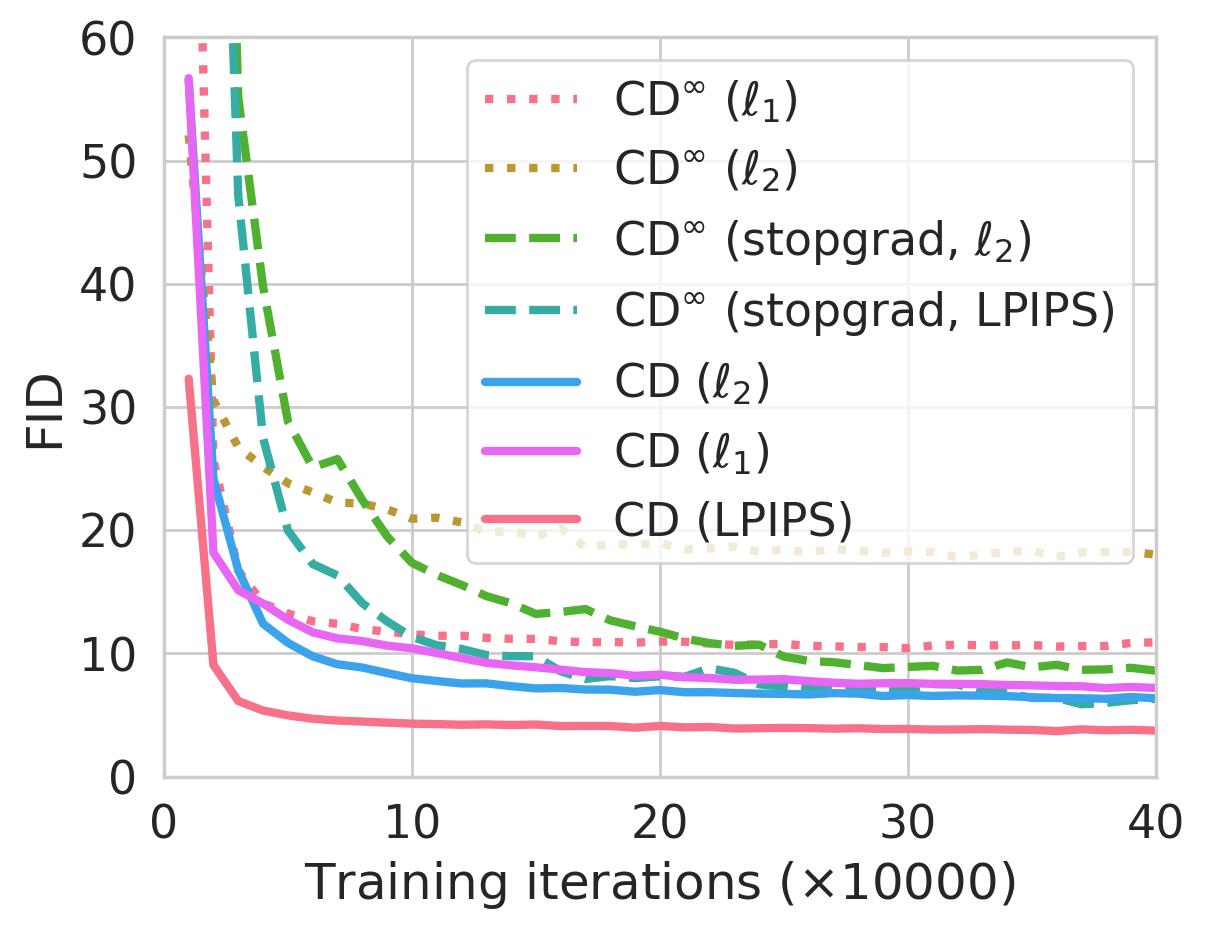}
        \caption{Consistency Distillation}\label{fig:distillation_compare}
    \end{subfigure}
    \begin{subfigure}[b]{0.4\linewidth}
        \includegraphics[width=\linewidth]{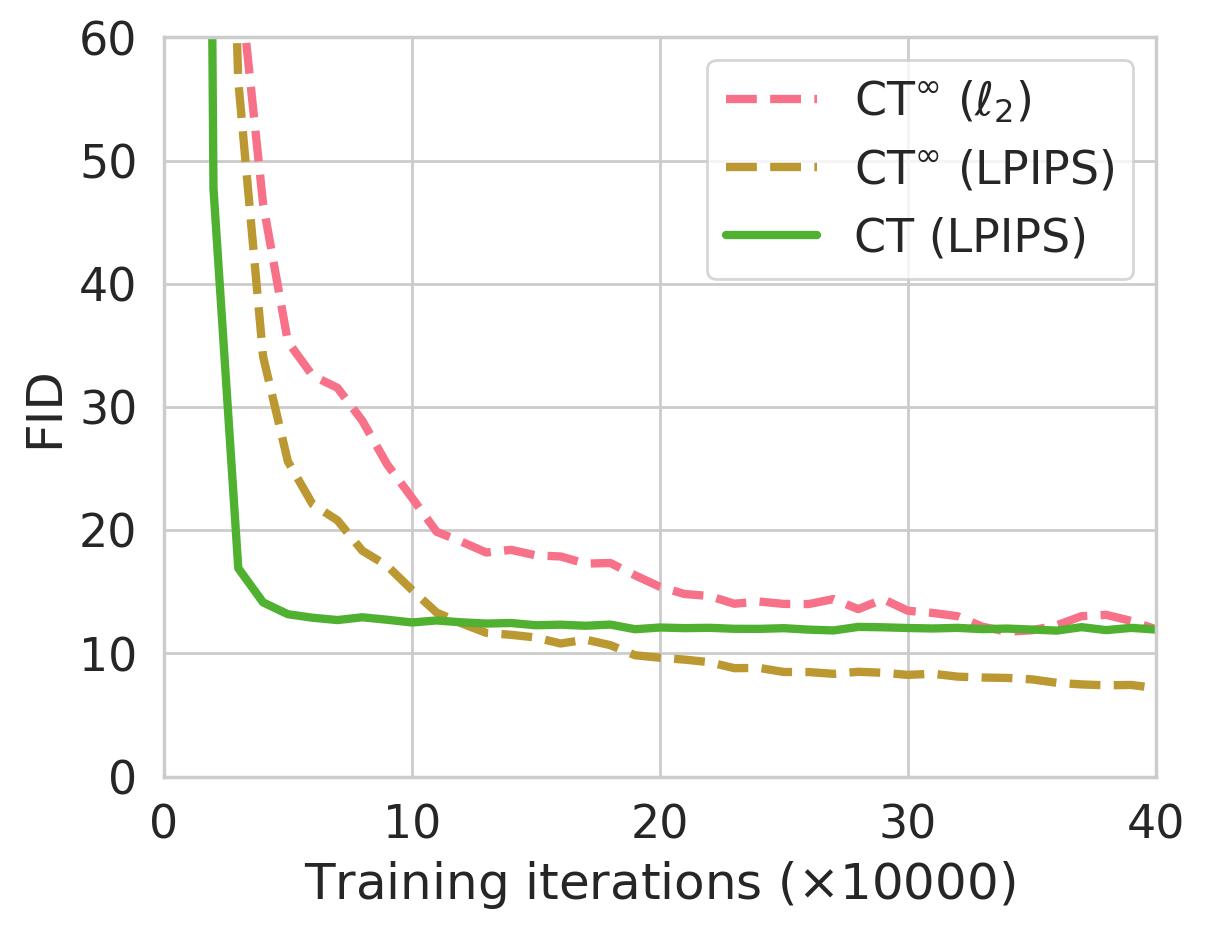}
        \caption{Consistency Training}\label{fig:gen_compare}
    \end{subfigure}
    \caption{Comparing discrete consistency distillation/training algorithms with continuous counterparts.}\label{fig:compare}
\end{figure}
To experimentally verify the efficacy of our continuous-time CD and CT objectives, we train consistency models with a variety of loss functions on CIFAR-10. All results are provided in \cref{fig:compare}. We set $\lambda(t) = (\tau^{-1})'(t)$ for all continuous-time experiments. Other hyperparameters are the same as in \cref{tab:hyperparameters}. We occasionally modify some hyperparameters for improved performance. For distillation, we compare the following objectives:
\begin{itemize}
    \item CD $(\ell_2)$: Consistency distillation $\mcal{L}^{N}_\text{CD}$ with $N=18$ and the $\ell_2$ metric.
    \item CD $(\ell_1)$: Consistency distillation $\mcal{L}^{N}_\text{CD}$ with $N=18$ and the $\ell_1$ metric. We set the learning rate to 2e-4.
    \item CD (LPIPS): Consistency distillation $\mcal{L}^{N}_\text{CD}$ with $N=18$ and the LPIPS metric.
    \item CD$^\infty$ $(\ell_2)$: Consistency distillation $\mcal{L}^\infty_\text{CD}$ in \cref{thm:ctcd1} with the $\ell_2$ metric. We set the learning rate to 1e-3 and dropout to 0.13.
    \item CD$^\infty$ $(\ell_1)$: Consistency distillation $\mcal{L}^\infty_\text{CD}$ in \cref{thm:ctcd_l1} with the $\ell_1$ metric. We set the learning rate to 1e-3 and dropout to 0.3.
    \item CD$^\infty$ (stopgrad, $\ell_2$): Consistency distillation $\mcal{L}^\infty_\text{CD}$ in \cref{thm:ctcd2} with the $\ell_2$ metric. We set the learning rate to 5e-6.
    \item CD$^\infty$ (stopgrad, LPIPS): Consistency distillation $\mcal{L}^\infty_\text{CD}$ in \cref{thm:ctcd2} with the LPIPS metric. We set the learning rate to 5e-6.
\end{itemize}
We did not investigate using the LPIPS metric in \cref{thm:ctcd1} because minimizing the resulting objective would require back-propagating through second order derivatives of the VGG network used in LPIPS, which is computationally expensive and prone to numerical instability. As revealed by \cref{fig:distillation_compare}, the stopgrad version of continuous-time distillation (\cref{thm:ctcd2}) works better than the non-stopgrad version (\cref{thm:ctcd1}) for both the LPIPS and $\ell_2$ metrics, and the LPIPS metric works the best for all distillation approaches. Additionally, discrete-time consistency distillation outperforms continuous-time consistency distillation, possibly due to the larger variance in continuous-time objectives, and the fact that one can use effective higher-order ODE solvers in discrete-time objectives.

For consistency training (CT), we find it important to initialize consistency models from a pre-trained EDM model in order to stabilize training when using continuous-time objectives. We hypothesize that this is caused by the large variance in our continuous-time loss functions. For fair comparison, we thus initialize all consistency models from the same pre-trained EDM model on CIFAR-10 for both discrete-time and continuous-time CT, even though the former works well with random initialization. We leave variance reduction techniques for continuous-time CT to future research.

We empirically compare the following objectives:
\begin{itemize}
    \item CT (LPIPS): Consistency training $\mcal{L}_\text{CT}^N$ with $N=120$ and the LPIPS metric. We set the learning rate to 4e-4, and the EMA decay rate for the target network to 0.99. We do not use the schedule functions for $N$ and $\mu$ here because they cause slower learning when the consistency model is initialized from a pre-trained EDM model.
    \item CT$^\infty$ $(\ell_2)$: Consistency training $\mcal{L}^{\infty}_\text{CT}$ with the $\ell_2$ metric. We set the learning rate to 5e-6.
    \item CT$^\infty$ (LPIPS): Consistency training $\mcal{L}^\infty_\text{CT}$ with the LPIPS metric. We set the learning rate to 5e-6.
\end{itemize}
As shown in \cref{fig:gen_compare}, the LPIPS metric leads to improved performance for continuous-time CT. We also find that continuous-time CT outperforms discrete-time CT with the same LPIPS metric. This is likely due to the bias in discrete-time CT, as $\Delta t > 0$ in \cref{thm:ct} for discrete-time objectives, whereas continuous-time CT has no bias since it implicitly drives $\Delta t$ to $0$.

\section{Additional Experimental Details}\label{app:exp}
\begin{table}
    \setlength{\tabcolsep}{15pt}
    \caption{Hyperparameters used for training CD and CT models}\label{tab:hyperparameters}
    \centering
    \begin{adjustbox}{max width=\linewidth}
        \begin{tabular}{l|cc|cc|cc}
            \Xhline{3\arrayrulewidth}
            Hyperparameter & \multicolumn{2}{c|}{CIFAR-10} & \multicolumn{2}{c|}{ImageNet $64\times 64$} & \multicolumn{2}{c}{LSUN $256\times 256$} \\
            & CD & CT & CD & CT & CD & CT\\
            \hline
            Learning rate & 4e-4 & 4e-4 & 8e-6 & 8e-6 & 1e-5 & 1e-5\\
            Batch size & 512 & 512 & 2048 & 2048 & 2048 & 2048\\
            $\mu$ & 0 &  & 0.95 & & 0.95 & \\
            $\mu_0$ & & 0.9 & & 0.95 & & 0.95\\
            $s_0$ & & 2 & & 2 & & 2 \\
            $s_1$ & & 150 & & 200 & & 150 \\
            $N$ & 18 & & 40 & & 40 & \\
            ODE solver & Heun & & Heun & & Heun & \\
            EMA decay rate & 0.9999 & 0.9999 & 0.999943 & 0.999943 & 0.999943 & 0.999943\\
            Training iterations & 800k & 800k & 600k & 800k & 600k & 1000k\\
            Mixed-Precision (FP16) & No & No & Yes & Yes & Yes & Yes\\
            Dropout probability & 0.0 & 0.0 & 0.0 & 0.0 & 0.0 & 0.0\\
            Number of GPUs & 8 & 8 & 64 & 64 & 64 & 64\\
            \Xhline{3\arrayrulewidth}
        \end{tabular}
    \end{adjustbox}
\end{table}

\paragraph{Model Architectures} We follow \citet{song2021scorebased,dhariwal2021diffusion} for model architectures. Specifically, we use the NCSN++ architecture in \citet{song2021scorebased} for all CIFAR-10 experiments, and take the corresponding network architectures from \citet{dhariwal2021diffusion} when performing experiments on ImageNet $64\times 64$, LSUN Bedroom $256\times 256$ and LSUN Cat $256\times 256$.

\paragraph{Parameterization for Consistency Models} We use the same architectures for consistency models as those used for EDMs. The only difference is we slightly modify the skip connections in EDM to ensure the boundary condition holds for consistency models. Recall that in \cref{sec:consistency} we propose to parameterize a consistency model in the following form:
\begin{align*}
    \vf_\vtheta(\rvx, t) = c_\text{skip}(t)\rvx + c_\text{out}(t) F_\vtheta(\rvx, t).
\end{align*}
In EDM \cite{karras2022edm}, authors choose
\begin{align*}
    c_\text{skip}(t) = \frac{\sigma_\text{data}^2}{t^2 + \sigma_\text{data}^2},\quad c_\text{out}(t) = \frac{\sigma_\text{data} t }{\sqrt{\sigma_\text{data}^2 + t^2}},
\end{align*}
where $\sigma_\text{data} = 0.5$. However, this choice of $c_\text{skip}$ and $c_\text{out}$ does not satisfy the boundary condition when the smallest time instant $\epsilon \neq 0$. To remedy this issue, we modify them to
\begin{align*}
    c_\text{skip}(t) = \frac{\sigma_\text{data}^2}{(t-\epsilon)^2 + \sigma_\text{data}^2},\quad c_\text{out}(t) = \frac{\sigma_\text{data} (t-\epsilon) }{\sqrt{\sigma_\text{data}^2 + t^2}},
\end{align*}
which clearly satisfies $c_\text{skip}(\epsilon) = 1$ and $c_\text{out}(\epsilon) = 0$.

\paragraph{Schedule Functions for Consistency Training} As discussed in \cref{sec:generation}, consistency generation requires specifying schedule functions $N(\cdot)$ and $\mu(\cdot)$ for best performance. Throughout our experiments, we use schedule functions that take the form below:
\begin{align*}
    N(k) &= \left\lceil \sqrt{\frac{k}{K} ((s_1 + 1)^2 - s_0^2) + s_0^2} - 1 \right\rceil + 1\\
    \mu(k) &= \exp\left(\frac{s_0 \log \mu_0}{N(k)}\right),
\end{align*}
where $K$ denotes the total number of training iterations, $s_0$ denotes the initial discretization steps, $s_1 > s_0$ denotes the target discretization steps at the end of training, and $\mu_0 > 0$ denotes the EMA decay rate at the beginning of model training.

\paragraph{Training Details}
In both consistency distillation and progressive distillation, we distill EDMs \cite{karras2022edm}. We trained these EDMs ourselves according to the specifications given in \citet{karras2022edm}. The original EDM paper did not provide hyperparameters for the LSUN Bedroom $256\times 256$ and Cat $256\times 256$ datasets, so we mostly used the same hyperparameters as those for the ImageNet $64\times 64$ dataset. The difference is that we trained for 600k and 300k iterations for the LSUN Bedroom and Cat datasets respectively, and reduced the batch size from 4096 to 2048.

We used the same EMA decay rate for LSUN $256\times 256$ datasets as for the ImageNet $64\times 64$ dataset. For progressive distillation, we used the same training settings as those described in \citet{salimans2022progressive} for CIFAR-10 and ImageNet $64\times 64$. Although the original paper did not test on LSUN $256\times 256$ datasets, we used the same settings for ImageNet $64\times 64$ and found them to work well.

In all distillation experiments, we initialized the consistency model with pre-trained EDM weights. For consistency training, we initialized the model randomly, just as we did for training the EDMs. We trained all consistency models with the Rectified Adam optimizer \cite{liu2019variance}, with no learning rate decay or warm-up, and no weight decay. We also applied EMA to the weights of the online consistency models in both consistency distillation and consistency training, as well as to the weights of the training online consistency models according to \citet{karras2022edm}. For LSUN $256\times 256$ datasets, we chose the EMA decay rate to be the same as that for ImageNet $64\times 64$, except for consistency distillation on LSUN Bedroom $256\times 256$, where we found that using zero EMA worked better.

When using the LPIPS metric on CIFAR-10 and ImageNet $64\times 64$, we rescale images to resolution $224\times 224$ with bilinear upsampling before feeding them to the LPIPS network. For LSUN $256\times 256$, we evaluated LPIPS without rescaling inputs. In addition, we performed horizontal flips for data augmentation for all models and on all datasets. We trained all models on a cluster of Nvidia A100 GPUs. Additional hyperparameters for consistency training and distillation are listed in \cref{tab:hyperparameters}.

\section{Additional Results on Zero-Shot Image Editing}\label{app:editing}
\begin{algorithm}[tb]
    \caption{Zero-Shot Image Editing}
    \label{alg:editing}
 \begin{algorithmic}[1]
    \STATE {\bfseries Input:} Consistency model $\vf_\vtheta(\cdot, \cdot)$, sequence of time points $t_1 > t_2 > \cdots > t_{N}$, reference image $\rvy$, invertible linear transformation $\mA$, and binary image mask $\bm{\Omega}$
    \STATE $\rvy \gets \mA^{-1}[(\mA \rvy) \odot (1 - \bm{\Omega}) + \bm{0} \odot \bm{\Omega}]$
    \STATE Sample $\rvx \sim \mcal{N}(\rvy, t_1^2 \mI)$
    \STATE $\rvx \gets \vf_\vtheta(\rvx, t_1)$
    \STATE $\rvx \gets \mA^{-1}[(\mA \rvy) \odot (1 - \bm{\Omega}) + (\mA\rvx) \odot \bm{\Omega}]$
    \FOR{$n=2$ {\bfseries to} $N$}
        \STATE Sample $\rvx \sim \mcal{N}(\rvx, (t_n^2 - \epsilon^2) \mI)$
        \STATE $\rvx \gets \vf_\vtheta(\rvx, t_n)$
        \STATE $\rvx \gets \mA^{-1}[(\mA\rvy) \odot (1-\bm{\Omega}) + (\mA\rvx) \odot \bm{\Omega}]$
    \ENDFOR
    \STATE {\bfseries Output:} $\rvx$
 \end{algorithmic}
\end{algorithm}

With consistency models, we can perform a variety of zero-shot image editing tasks. As an example, we present additional results on colorization (\cref{fig:bedroom_colorization}), super-resolution (\cref{fig:bedroom_superres}), inpainting (\cref{fig:bedroom_inpainting}), interpolation (\cref{fig:bedroom_interp}), denoising (\cref{fig:bedroom_denoising}), and stroke-guided image generation (SDEdit, \citet{meng2021sdedit}, \cref{fig:bedroom_sdedit}). The consistency model used here is trained via consistency distillation on the LSUN Bedroom $256\times 256$.

All these image editing tasks, except for image interpolation and denoising, can be performed via a small modification to the multistep sampling algorithm in \cref{alg:sampling}. The resulting pseudocode is provided in \cref{alg:editing}. Here $\rvy$ is a reference image that guides sample generation, $\bm{\Omega}$ is a binary mask, $\odot$ computes element-wise products, and $\mA$ is an invertible linear transformation that maps images into a latent space where the conditional information in $\rvy$ is infused into the iterative generation procedure by masking with $\bm{\Omega}$. Unless otherwise stated, we choose
\begin{align*}
    t_i = \left(T^{1/\rho} + \frac{i-1}{N-1}(\epsilon^{1/\rho} - T^{1/\rho})\right)^\rho
\end{align*}
in our experiments, where $N=40$ for LSUN Bedroom $256\times 256$.

Below we describe how to perform each task using \cref{alg:editing}.
\paragraph{Inpainting}
When using \cref{alg:editing} for inpainting, we let $\rvy$ be an image where missing pixels are masked out, $\bm{\Omega}$ be a binary mask where 1 indicates the missing pixels, and $\mA$ be the identity transformation.

\paragraph{Colorization}
The algorithm for image colorization is similar, as colorization becomes a special case of inpainting once we transform data into a decoupled space. Specifically, let $\rvy \in \mbb{R}^{h\times w\times 3}$ be a gray-scale image that we aim to colorize, where all channels of $\rvy$ are assumed to be the same, \ie, $\rvy[:, :, 0] = \rvy[:, :, 1] = \rvy[:, :, 2]$ in NumPy notation. In our experiments, each channel of this gray scale image is obtained from a colorful image by averaging the RGB channels with
\begin{align*}
    0.2989 R + 0.5870 G + 0.1140 B.
\end{align*}
We define $\bm{\Omega} \in \{0, 1\}^{h\times w\times 3}$ to be a binary mask such that
\begin{align*}
    \bm{\Omega}[i, j, k] = \begin{cases}
        1, &\quad \text{$k=1$ or $2$}\\
        0, &\quad \text{$k=0$}
    \end{cases}.
\end{align*}
Let $\mQ \in \mbb{R}^{3\times 3}$ be an orthogonal matrix whose first column is proportional to the vector $(0.2989, 0.5870, 0.1140)$. This orthogonal matrix can be obtained easily via QR decomposition, and we use the following in our experiments
\begin{align*}
    \mQ = \begin{pmatrix}
        0.4471 & -0.8204 & 0.3563\\
        0.8780 & 0.4785 &  0 \\
        0.1705 & -0.3129 & -0.9343
    \end{pmatrix}.
\end{align*}
We then define the linear transformation $\mA: \rvx \in \mbb{R}^{h\times w \times 3} \mapsto \rvy \in \mbb{R}^{h\times w \times 3}$, where
\begin{align*}
    \rvy[i, j, k] = \sum_{l=0}^2 \rvx[i, j, l] \mQ[l, k].
\end{align*}
Because $\mQ$ is orthogonal, the inversion $\mA^{-1} : \rvy \in \mbb{R}^{h \times w \time 3} \mapsto \rvx \in \mbb{R}^{h \times w \times 3}$ is easy to compute, where
\begin{align*}
    \rvx[i, j, k] = \sum_{l=0}^2 \rvy[i, j, l] \mQ[k, l].
\end{align*}
With $\mA$ and $\bm{\Omega}$ defined as above, we can now use \cref{alg:editing} for image colorization.

\paragraph{Super-resolution} With a similar strategy, we employ \cref{alg:editing} for image super-resolution. For simplicity, we assume that the down-sampled image is obtained by averaging non-overlapping patches of size $p\times p$. Suppose the shape of full resolution images is $h \times w \times 3$. Let $\rvy \in \mbb{R}^{h\times w\times 3}$ denote a low-resolution image naively up-sampled to full resolution, where pixels in each non-overlapping patch share the same value. Additionally, let $\bm{\Omega} \in \{0, 1\}^{h/p\times w/p \times p^2 \times 3}$ be a binary mask such that
\begin{align*}
    \bm{\Omega}[i, j, k, l] = \begin{cases}
        1, &\quad k \geq 1\\
        0, &\quad k=0
    \end{cases}.
\end{align*}
Similar to image colorization, super-resolution requires an orthogonal matrix $\mQ \in \mbb{R}^{p^2 \times p^2}$ whose first column is $(\nicefrac{1}{p}, \nicefrac{1}{p}, \cdots, \nicefrac{1}{p})$. This orthogonal matrix can be obtained with QR decomposition. To perform super-resolution, we define the linear transformation $\mA: \rvx \in \mbb{R}^{h \times w\times 3} \mapsto \rvy \in \mbb{R}^{h/p\times w/p \times p^2 \times 3}$, where
\begin{align*}
    \rvy[i, j, k, l] = \sum_{m=0}^{p^2-1} \rvx[i\times p+ (m - m\bmod p)/p, j \times p + m \bmod p, l ] \mQ[m, k].
\end{align*}
The inverse transformation $\mA^{-1}: \rvy \in \mbb{R}^{h/p\times w/p \times p^2 \times 3} \mapsto \rvx \in \mbb{R}^{h \times w\times 3}$ is easy to derive, with
\begin{align*}
    \rvx[i, j, k, l] = \sum_{m=0}^{p^2-1} \rvy[i\times p+ (m - m\bmod p)/p, j \times p + m \bmod p, l ] \mQ[k, m].
\end{align*}
Above definitions of $\mA$ and $\bm{\Omega}$ allow us to use \cref{alg:editing} for image super-resolution.

\paragraph{Stroke-guided image generation} We can also use \cref{alg:editing} for stroke-guided image generation as introduced in SDEdit \cite{meng2021sdedit}. Specifically, we let $\rvy \in \mbb{R}^{h\times w \times 3}$ be a stroke painting. We set $\mA = \mI$, and define $\bm{\Omega}\in \mbb{R}^{h\times w \times 3}$ as a matrix of ones. In our experiments, we set $t_1 = 5.38$ and $t_2 = 2.24$, with $N=2$.

\paragraph{Denoising} It is possible to denoise images perturbed with various scales of Gaussian noise using a single consistency model. Suppose the input image $\rvx$ is perturbed with $\mcal{N}(\bm{0}; \sigma^2 \mI)$. As long as $\sigma \in [\epsilon, T]$, we can evaluate $\vf_\vtheta(\rvx, \sigma)$ to produce the denoised image.

\paragraph{Interpolation} We can interpolate between two images generated by consistency models. Suppose the first sample $\rvx_1$ is produced by noise vector $\rvz_1$, and the second sample $\rvx_2$ is produced by noise vector $\rvz_2$. In other words, $\rvx_1 = \vf_\vtheta(\rvz_1, T)$ and $\rvx_2 = \vf_\vtheta(\rvz_2, T)$. To interpolate between $\rvx_1$ and $\rvx_2$, we first use spherical linear interpolation to get
\begin{align*}
    \rvz = \frac{\sin[(1-\alpha) \psi]}{\sin(\psi)} \rvz_1 + \frac{\sin(\alpha \psi)}{\sin(\psi)}\rvz_2,
\end{align*}
where $\alpha \in [0, 1]$ and $\psi = \arccos(\frac{\rvz_1\tran \rvz_2}{\norm{\rvz_1}_2 \norm{\rvz_2}_2})$, then evaluate $\vf_\vtheta(\rvz, T)$ to produce the interpolated image.

\begin{figure}
    \centering
    \begin{subfigure}[b]{0.11\textwidth}
        \includegraphics[width=\textwidth]{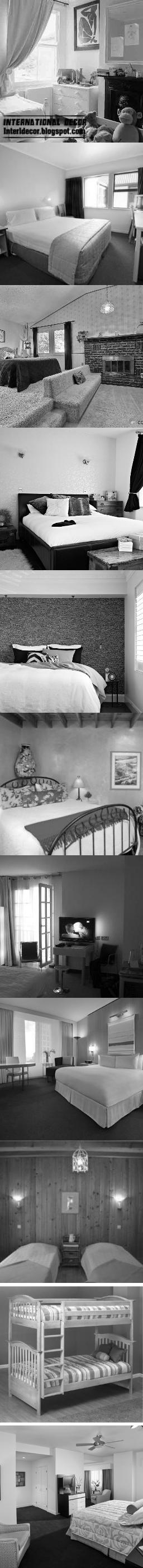}
    \end{subfigure}\hfill
    \begin{subfigure}[b]{0.77\textwidth}
        \includegraphics[width=\textwidth]{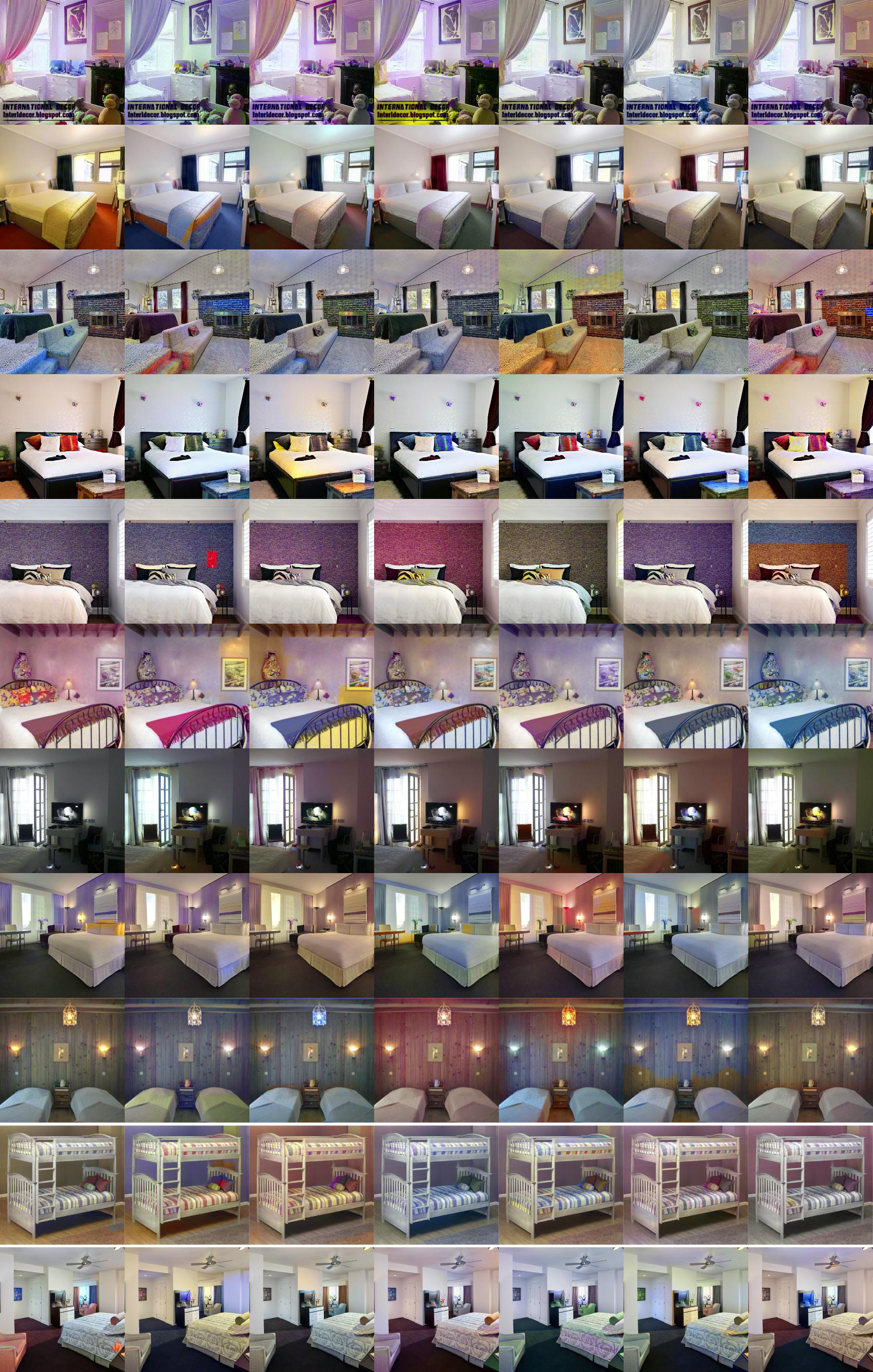}
    \end{subfigure}\hfill
    \begin{subfigure}[b]{0.11\textwidth}
        \includegraphics[width=\textwidth]{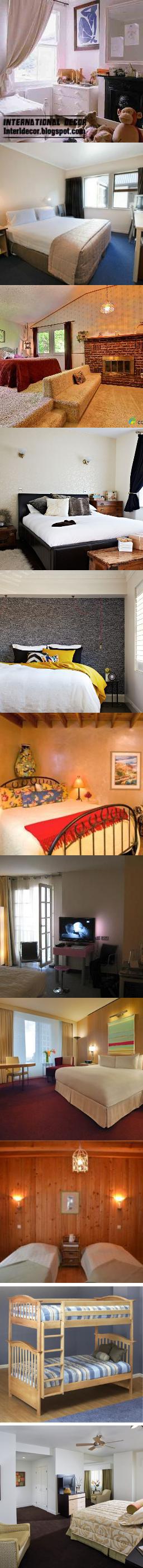}
    \end{subfigure}\hfill
    \caption{Gray-scale images (left), colorized images by a consistency model (middle), and ground truth (right).}
    \label{fig:bedroom_colorization}
\end{figure}

\begin{figure}
    \centering
    \begin{subfigure}[b]{0.11\textwidth}
        \includegraphics[width=\textwidth]{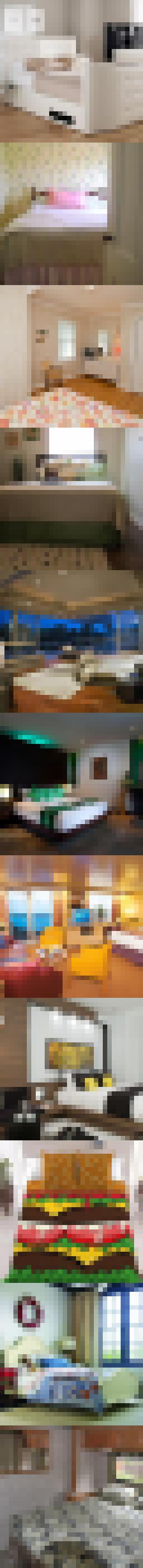}
    \end{subfigure}\hfill
    \begin{subfigure}[b]{0.77\textwidth}
        \includegraphics[width=\textwidth]{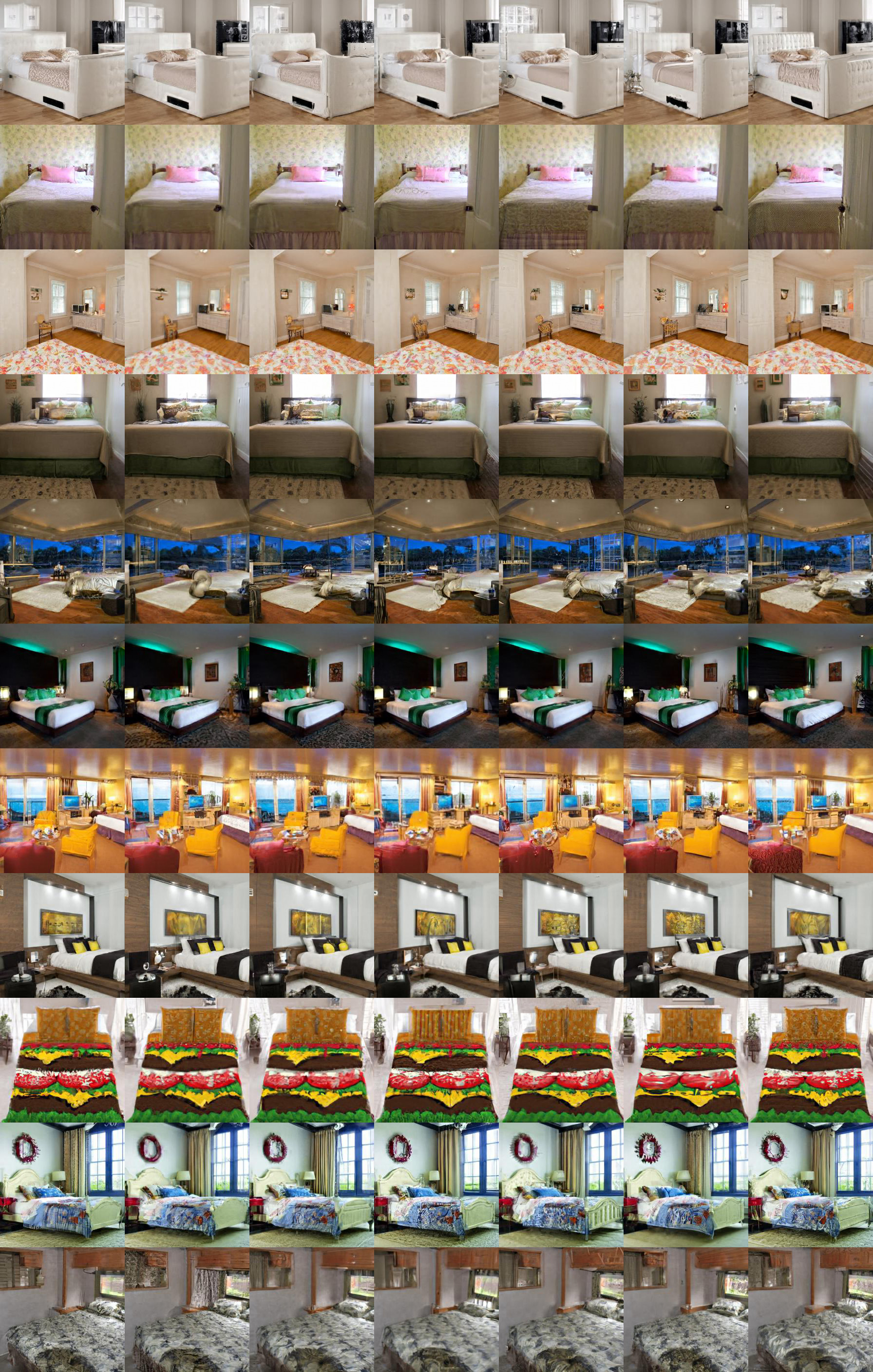}
    \end{subfigure}\hfill
    \begin{subfigure}[b]{0.11\textwidth}
        \includegraphics[width=\textwidth]{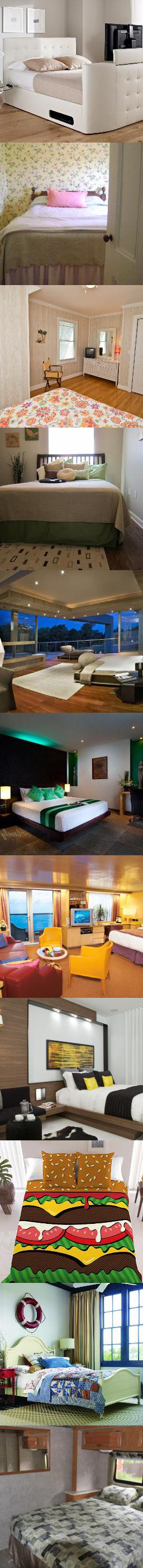}
    \end{subfigure}\hfill
    \caption{Downsampled images of resolution $32\times 32$ (left), full resolution ($256\times 256$) images generated by a consistency model (middle), and ground truth images of resolution $256\times 256$ (right).}
    \label{fig:bedroom_superres}
\end{figure}

\begin{figure}
    \centering
    \begin{subfigure}[b]{0.11\textwidth}
        \includegraphics[width=\textwidth]{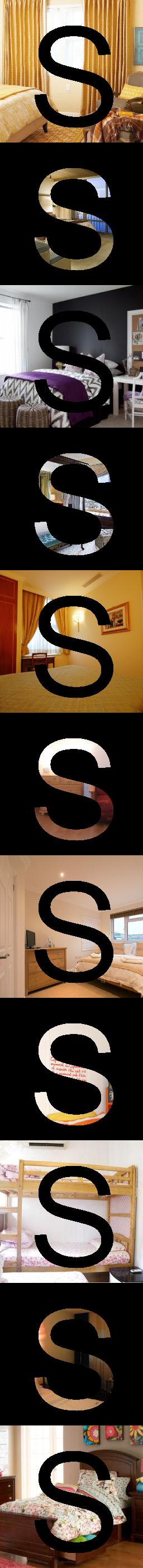}
    \end{subfigure}\hfill
    \begin{subfigure}[b]{0.77\textwidth}
        \includegraphics[width=\textwidth]{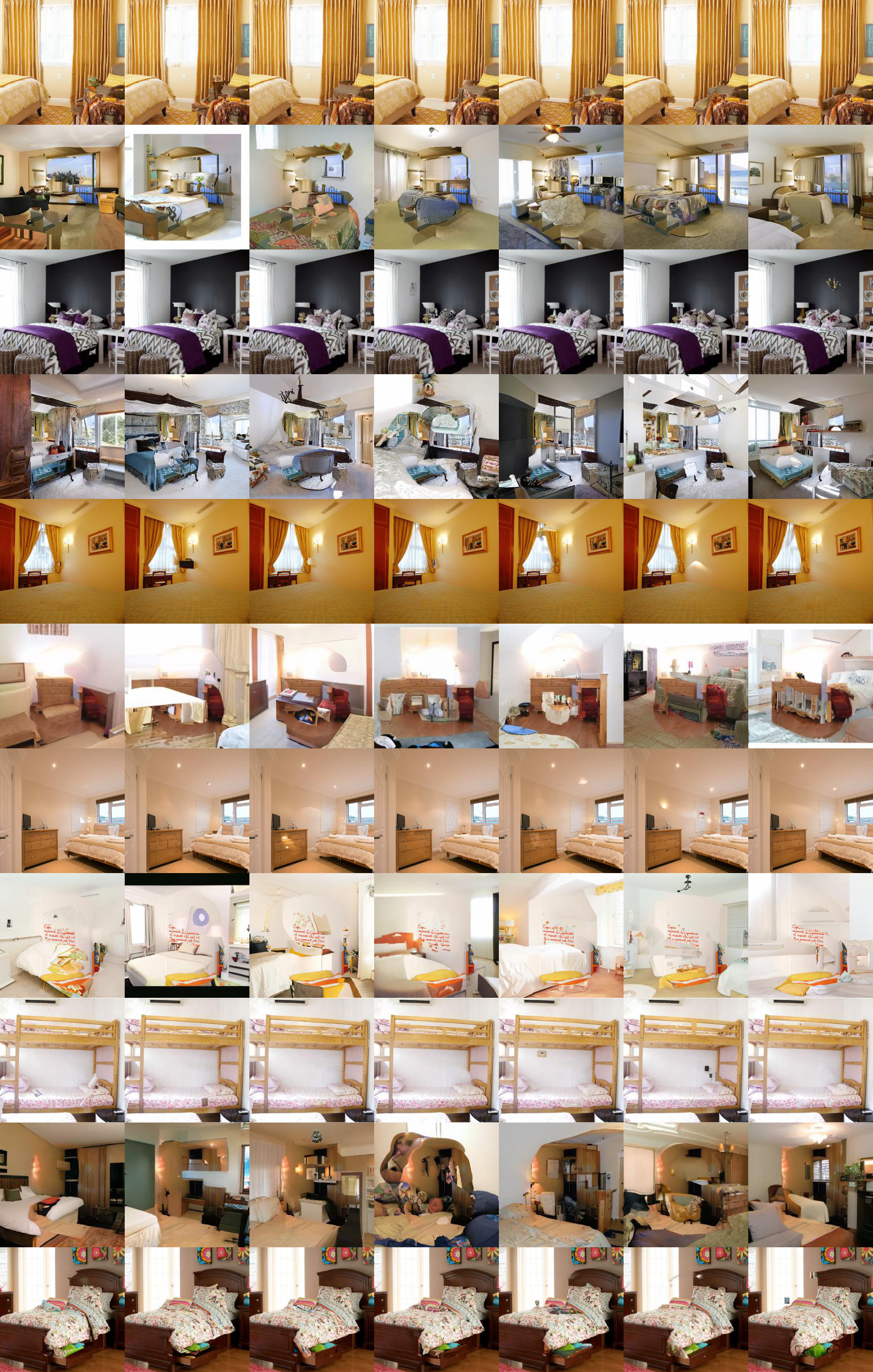}
    \end{subfigure}\hfill
    \begin{subfigure}[b]{0.11\textwidth}
        \includegraphics[width=\textwidth]{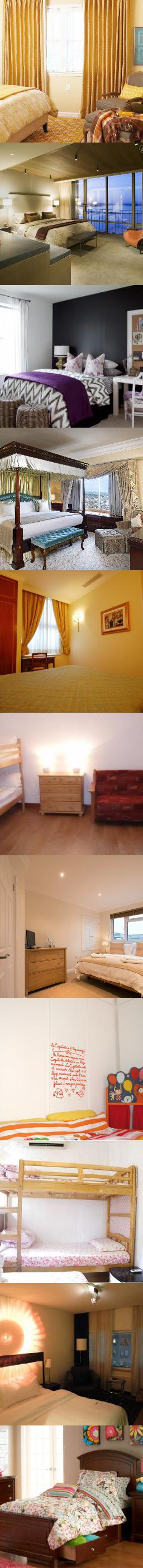}
    \end{subfigure}\hfill
    \caption{Masked images (left), imputed images by a consistency model (middle), and ground truth (right).}
    \label{fig:bedroom_inpainting}
\end{figure}

\begin{figure}
    \centering
    \begin{subfigure}[b]{0.11\textwidth}
        \includegraphics[width=\textwidth]{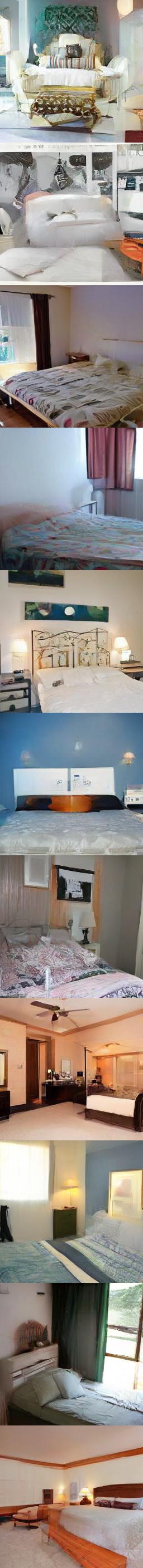}
    \end{subfigure}\hfill
    \begin{subfigure}[b]{0.77\textwidth}
        \includegraphics[width=\textwidth]{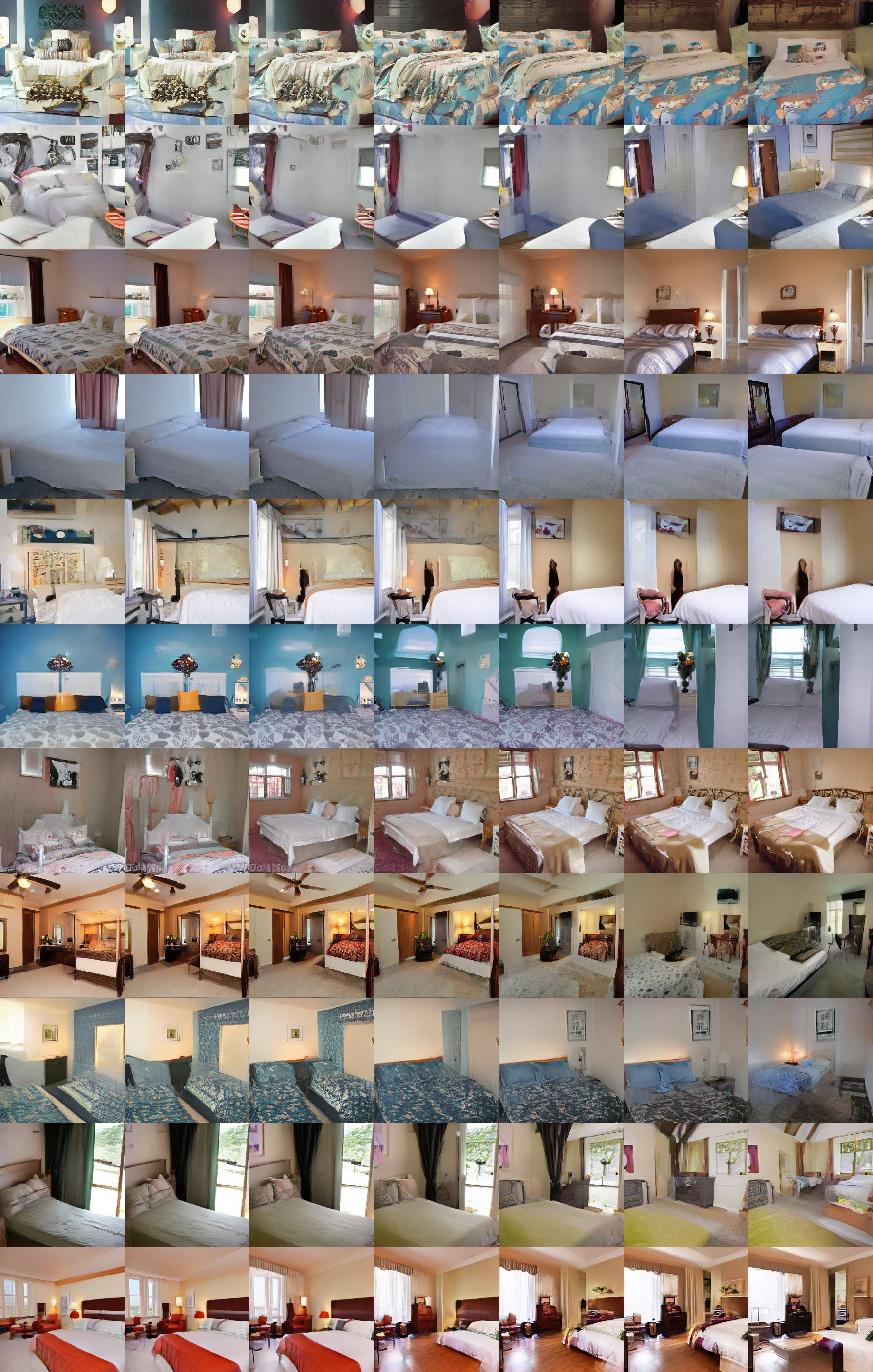}
    \end{subfigure}\hfill
    \begin{subfigure}[b]{0.11\textwidth}
        \includegraphics[width=\textwidth]{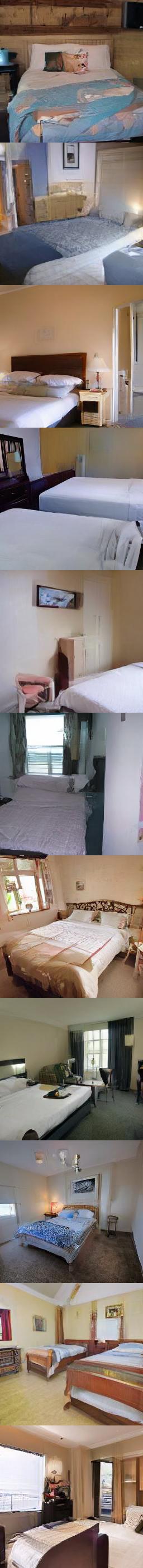}
    \end{subfigure}\hfill
    \caption{Interpolating between leftmost and rightmost images with spherical linear interpolation. All samples are generated by a consistency model trained on LSUN Bedroom $256\times 256$.}
    \label{fig:bedroom_interp}
\end{figure}

\begin{figure}
    \centering
    \begin{subfigure}[b]{0.11\textwidth}
        \includegraphics[width=\textwidth]{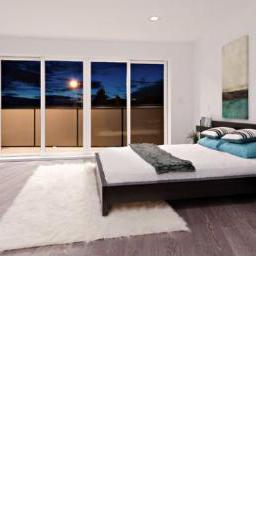}
    \end{subfigure}\hfill
    \begin{subfigure}[b]{0.88\textwidth}
        \includegraphics[width=\textwidth]{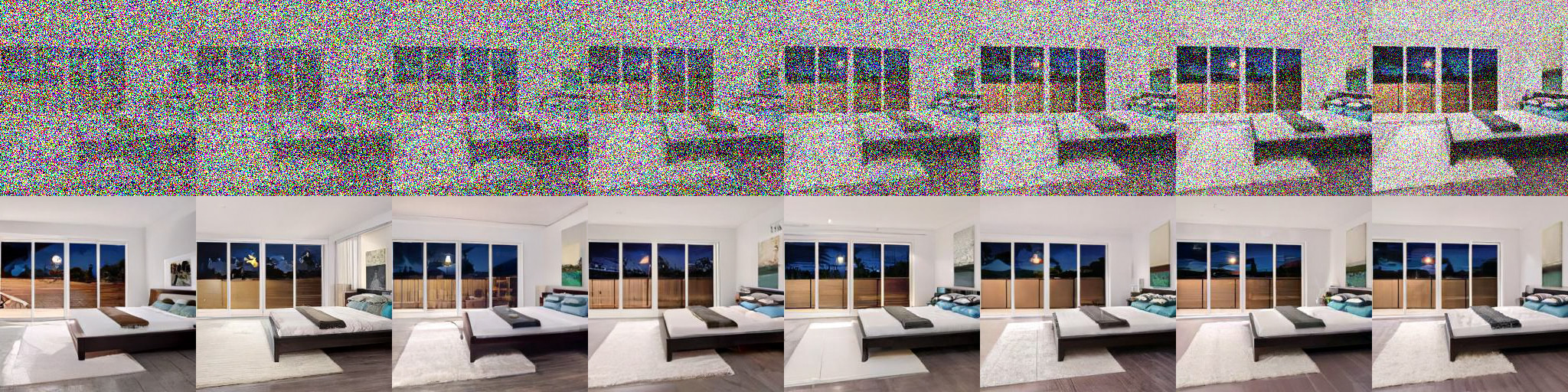}
    \end{subfigure}
    \begin{subfigure}[b]{0.11\textwidth}
        \includegraphics[width=\textwidth]{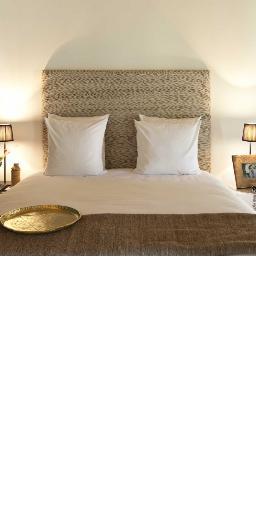}
    \end{subfigure}\hfill
    \begin{subfigure}[b]{0.88\textwidth}
        \includegraphics[width=\textwidth]{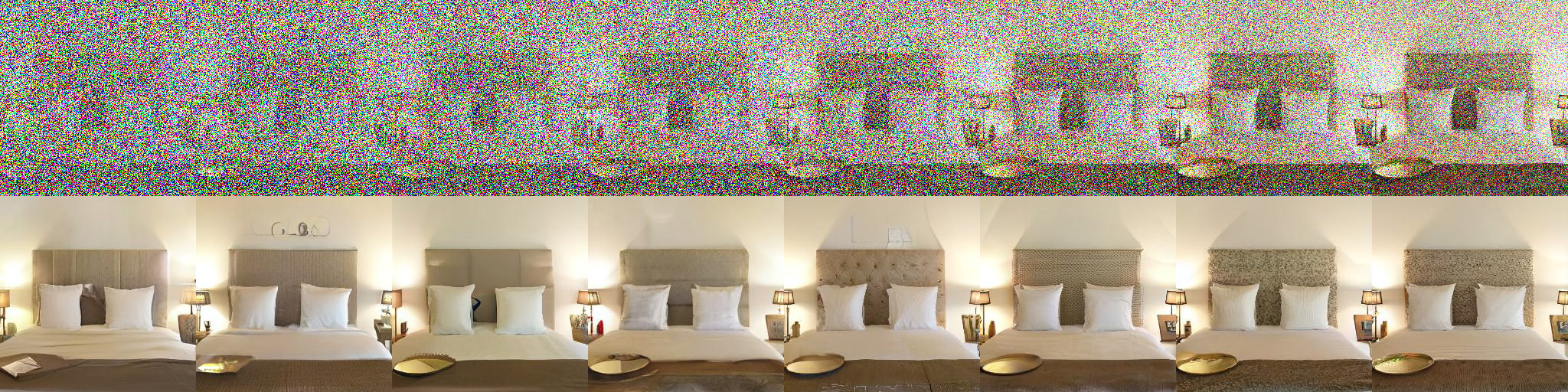}
    \end{subfigure}
    \begin{subfigure}[b]{0.11\textwidth}
        \includegraphics[width=\textwidth]{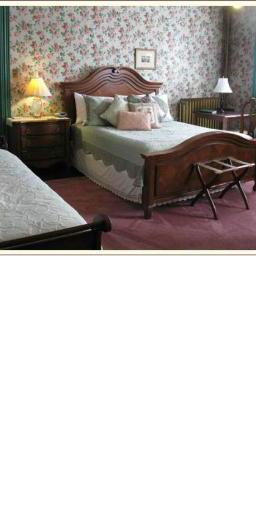}
    \end{subfigure}\hfill
    \begin{subfigure}[b]{0.88\textwidth}
        \includegraphics[width=\textwidth]{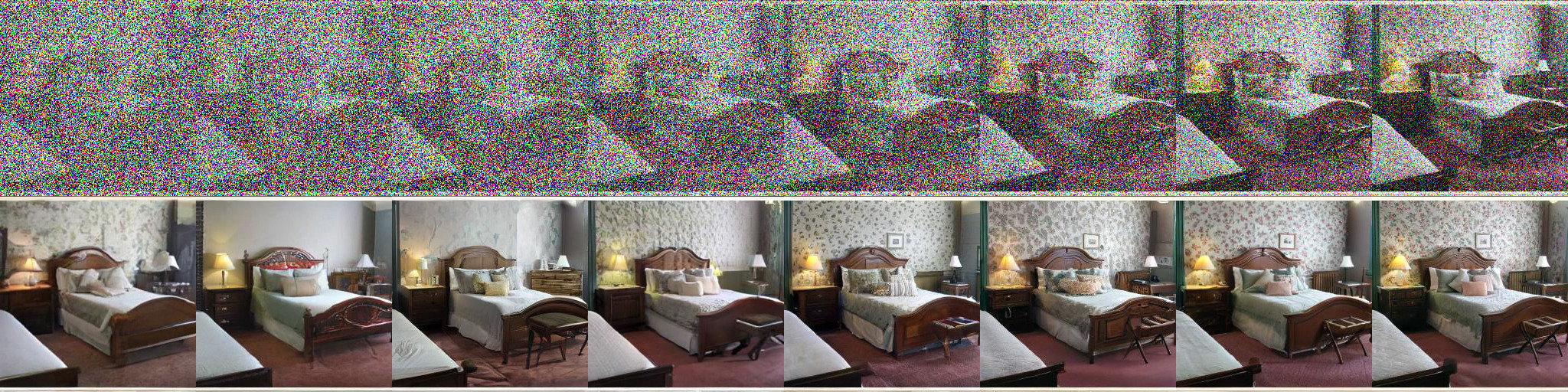}
    \end{subfigure}
    \begin{subfigure}[b]{0.11\textwidth}
        \includegraphics[width=\textwidth]{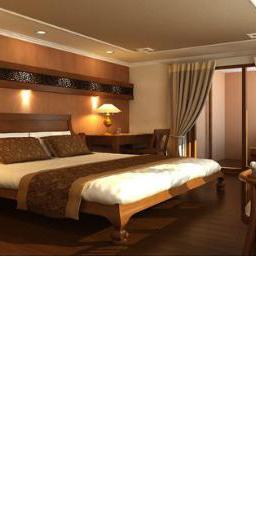}
    \end{subfigure}\hfill
    \begin{subfigure}[b]{0.88\textwidth}
        \includegraphics[width=\textwidth]{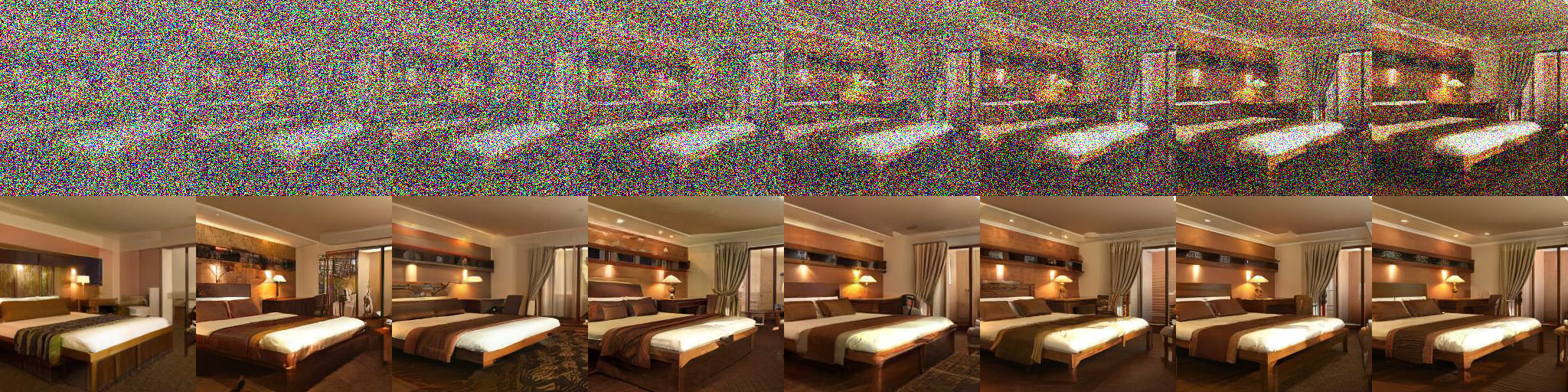}
    \end{subfigure}
    \begin{subfigure}[b]{0.11\textwidth}
        \includegraphics[width=\textwidth]{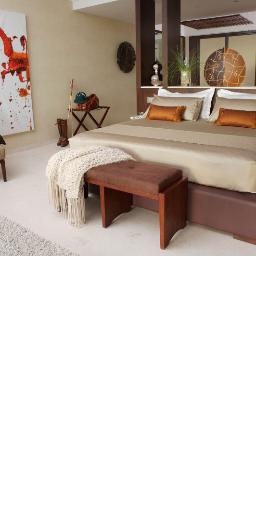}
    \end{subfigure}\hfill
    \begin{subfigure}[b]{0.88\textwidth}
        \includegraphics[width=\textwidth]{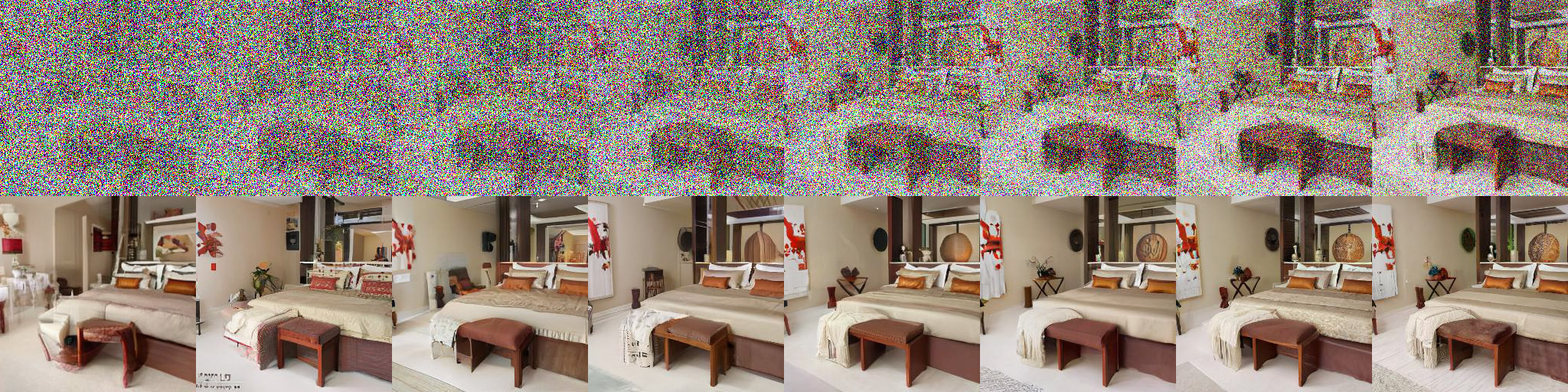}
    \end{subfigure}
    \caption{Single-step denoising with a consistency model. The leftmost images are ground truth. For every two rows, the top row shows noisy images with different noise levels, while the bottom row gives denoised images.}
    \label{fig:bedroom_denoising}
\end{figure}

\begin{figure}
    \centering
    \begin{subfigure}[b]{0.11\textwidth}
        \includegraphics[width=\textwidth]{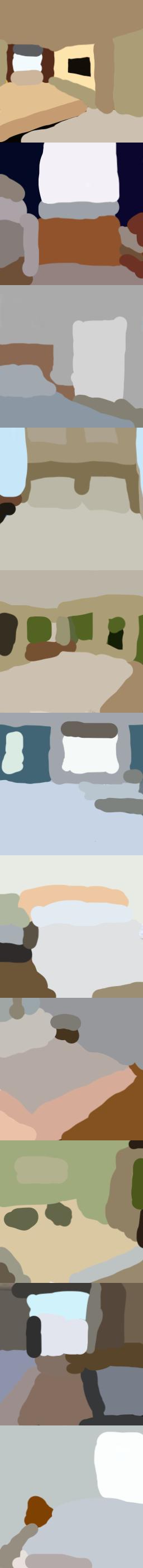}
    \end{subfigure}\hfill
    \begin{subfigure}[b]{0.88\textwidth}
        \includegraphics[width=\textwidth]{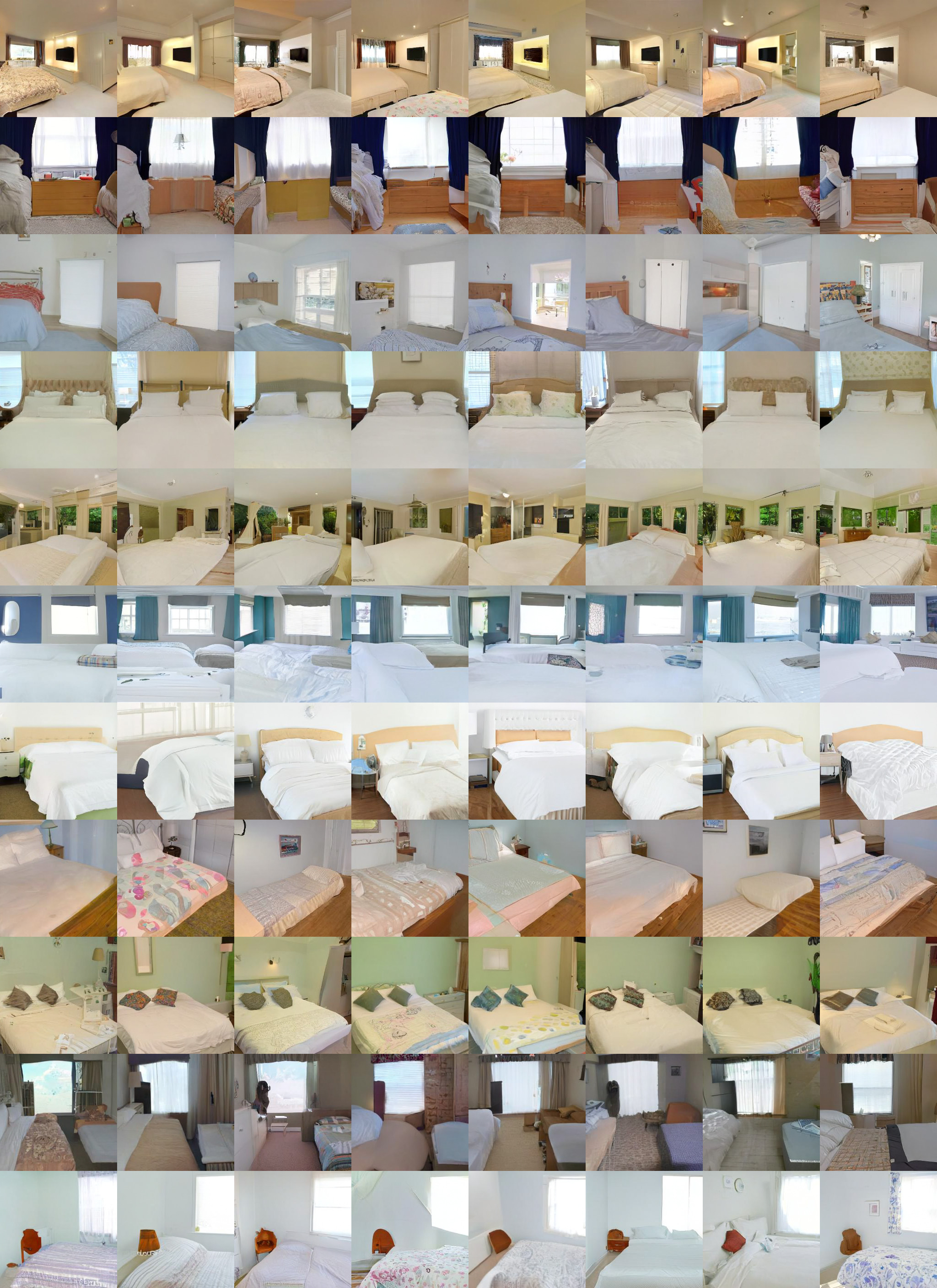}
    \end{subfigure}
    \caption{SDEdit with a consistency model. The leftmost images are stroke painting inputs. Images on the right side are the results of stroke-guided image generation (SDEdit).}
    \label{fig:bedroom_sdedit}
\end{figure}

\section{Additional Samples from Consistency Models}\label{app:samples}

We provide additional samples from consistency distillation (CD) and consistency training (CT) on CIFAR-10 (\cref{fig:cifar10_full_cd,fig:cifar10_full}), ImageNet $64\times 64$ (\cref{fig:imagenet64_full_cd,fig:imagenet64_full}), LSUN Bedroom $256\times 256$ (\cref{fig:bedroom_full_cd,fig:bedroom_full}) and LSUN Cat $256\times 256$ (\cref{fig:cat_full_cd,fig:cat_full}).

\begin{figure}
    \centering
    \begin{subfigure}[b]{\textwidth}
        \includegraphics[width=\textwidth]{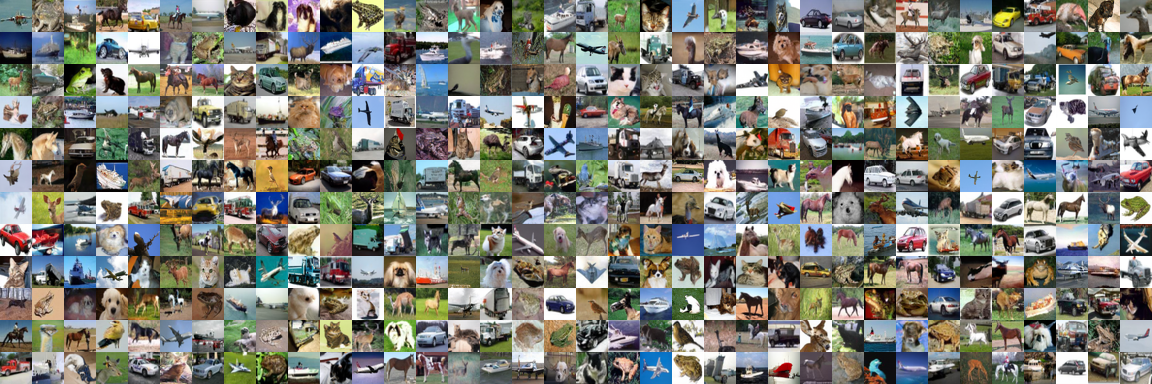}
        \caption{EDM (FID=2.04)}
    \end{subfigure}
    \begin{subfigure}[b]{\textwidth}
        \includegraphics[width=\textwidth]{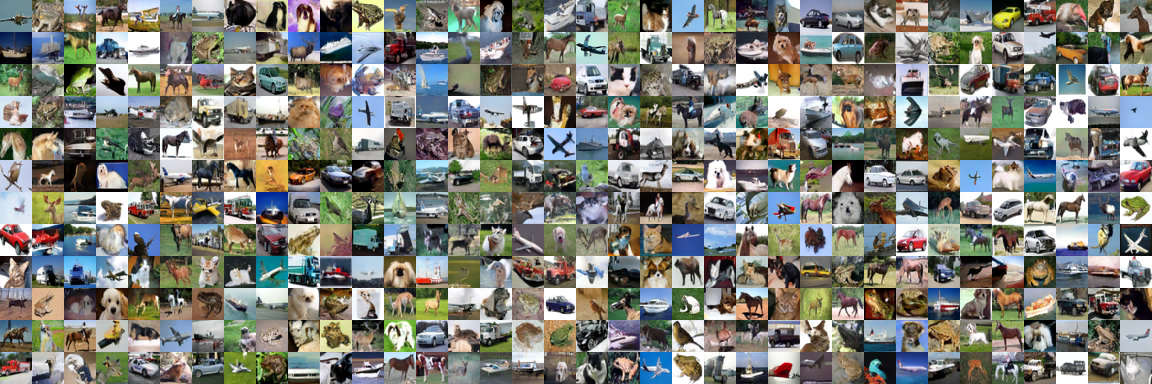}
        \caption{CD with single-step generation (FID=3.55)}
    \end{subfigure}
    \begin{subfigure}[b]{\textwidth}
        \includegraphics[width=\textwidth]{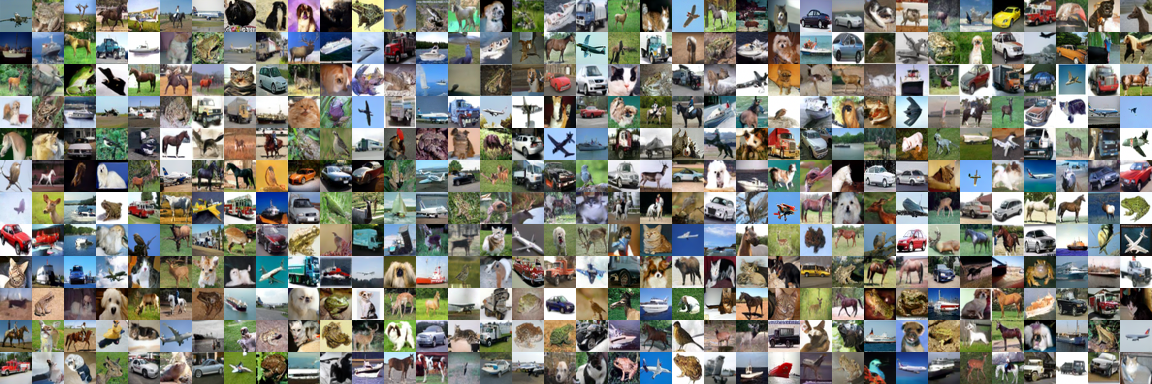}
        \caption{CD with two-step generation (FID=2.93)}
    \end{subfigure}
    \caption{Uncurated samples from CIFAR-10 $32\times 32$. All corresponding samples use the same initial noise.}
    \label{fig:cifar10_full_cd}
\end{figure}

\begin{figure}
    \centering
    \begin{subfigure}[b]{\textwidth}
        \includegraphics[width=\textwidth]{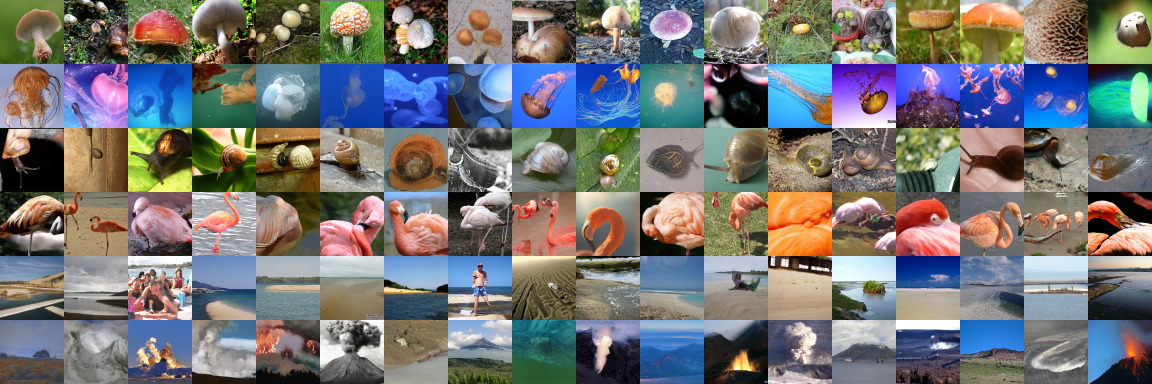}
        \caption{EDM (FID=2.44)}
    \end{subfigure}
    \begin{subfigure}[b]{\textwidth}
        \includegraphics[width=\textwidth]{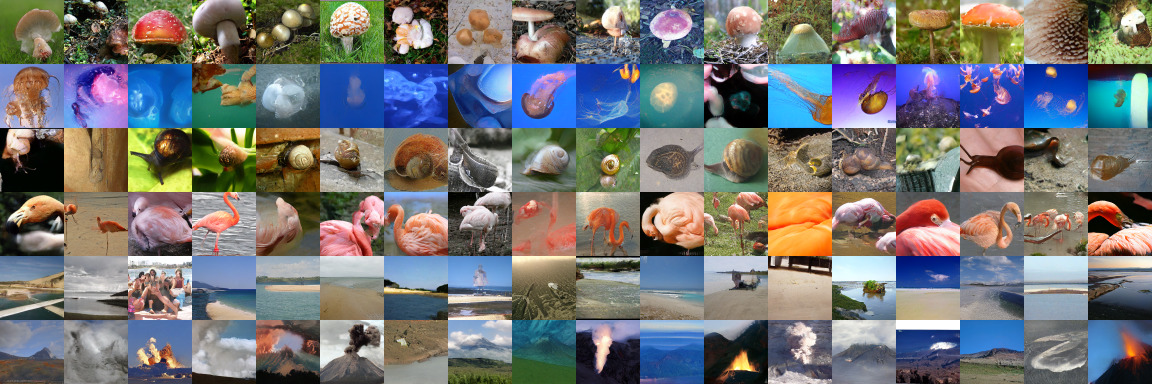}
        \caption{CD with single-step generation (FID=6.20)}
    \end{subfigure}
    \begin{subfigure}[b]{\textwidth}
        \includegraphics[width=\textwidth]{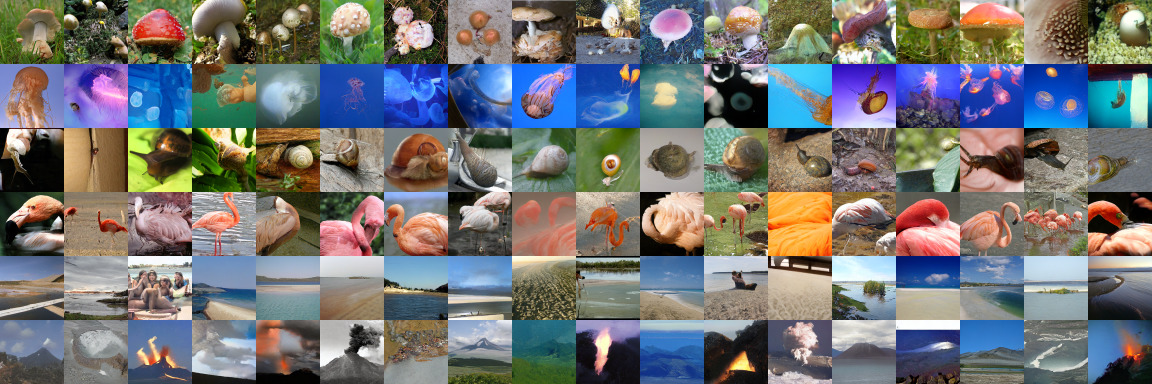}
        \caption{CD with two-step generation (FID=4.70)}
    \end{subfigure}
    \caption{Uncurated samples from ImageNet $64\times 64$. All corresponding samples use the same initial noise.}
    \label{fig:imagenet64_full_cd}
\end{figure}

\begin{figure}
    \centering
    \begin{subfigure}[b]{\textwidth}
        \includegraphics[width=\textwidth]{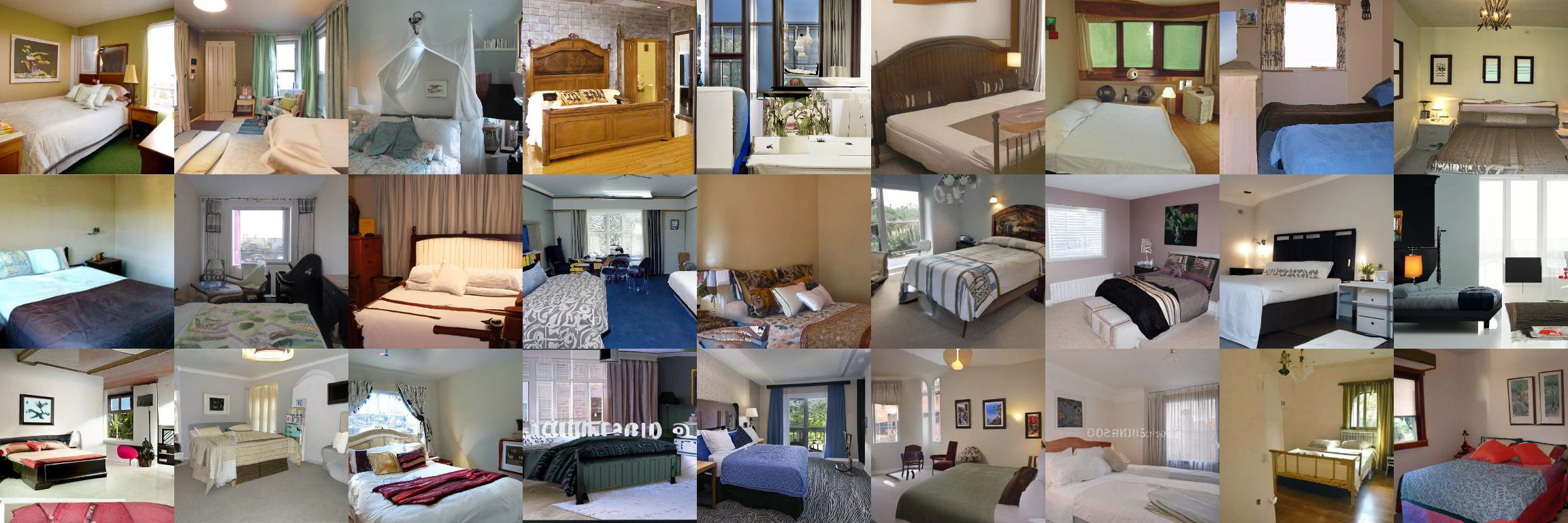}
        \caption{EDM (FID=3.57)}
    \end{subfigure}
    \begin{subfigure}[b]{\textwidth}
        \includegraphics[width=\textwidth]{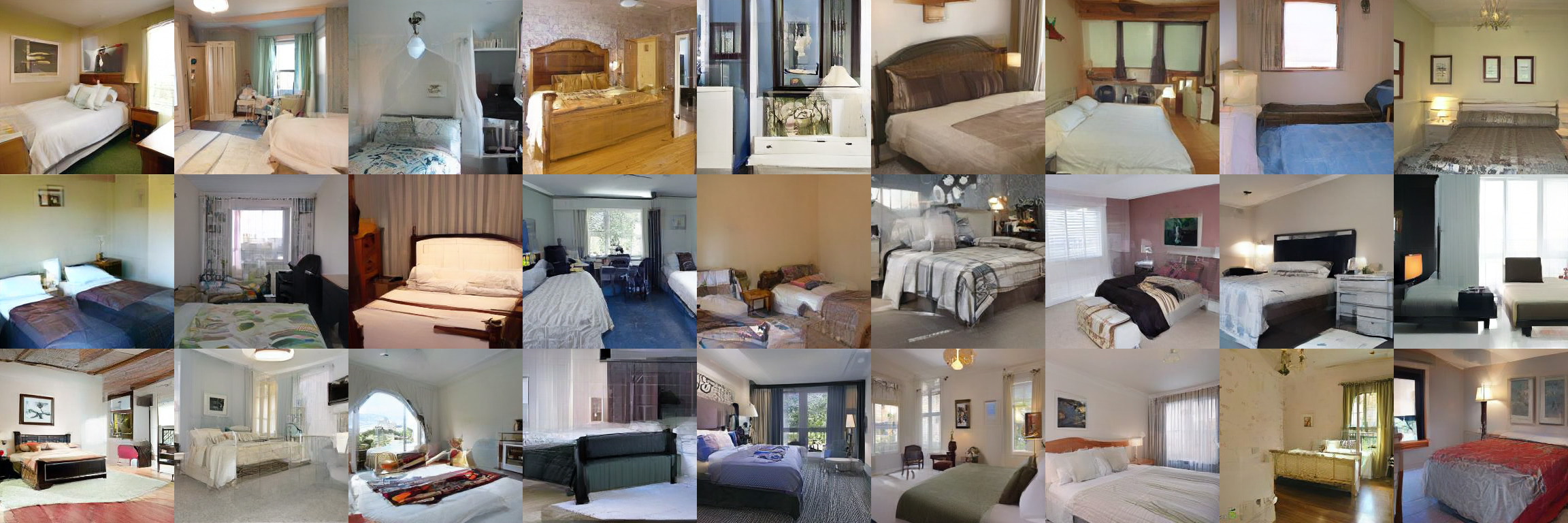}
        \caption{CD with single-step generation (FID=7.80)}
    \end{subfigure}
    \begin{subfigure}[b]{\textwidth}
        \includegraphics[width=\textwidth]{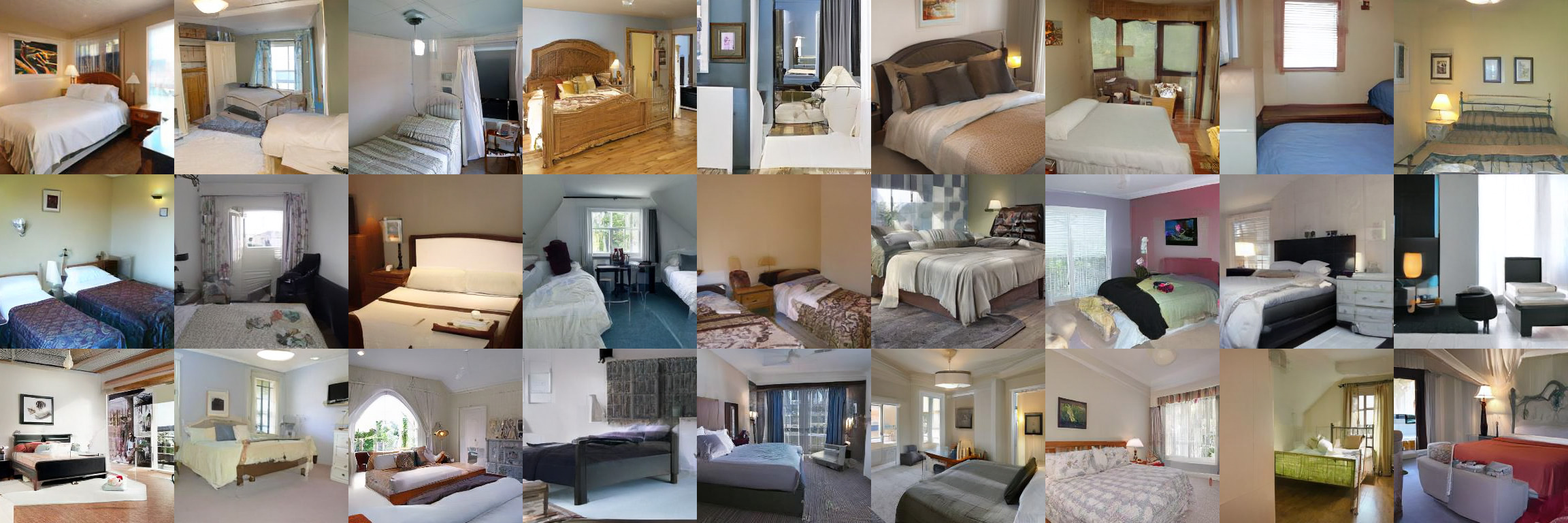}
        \caption{CD with two-step generation (FID=5.22)}
    \end{subfigure}
    \caption{Uncurated samples from LSUN Bedroom $256\times 256$. All corresponding samples use the same initial noise.}
    \label{fig:bedroom_full_cd}
\end{figure}

\begin{figure}
    \centering
    \begin{subfigure}[b]{\textwidth}
        \includegraphics[width=\textwidth]{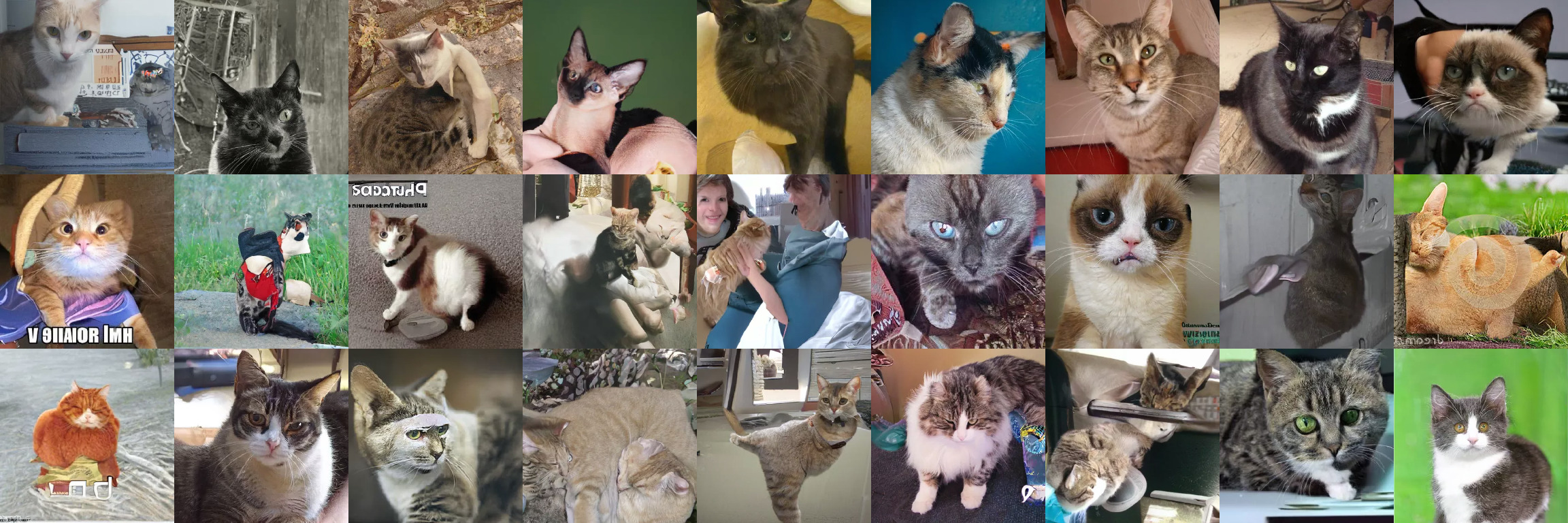}
        \caption{EDM (FID=6.69)}
    \end{subfigure}
    \begin{subfigure}[b]{\textwidth}
        \includegraphics[width=\textwidth]{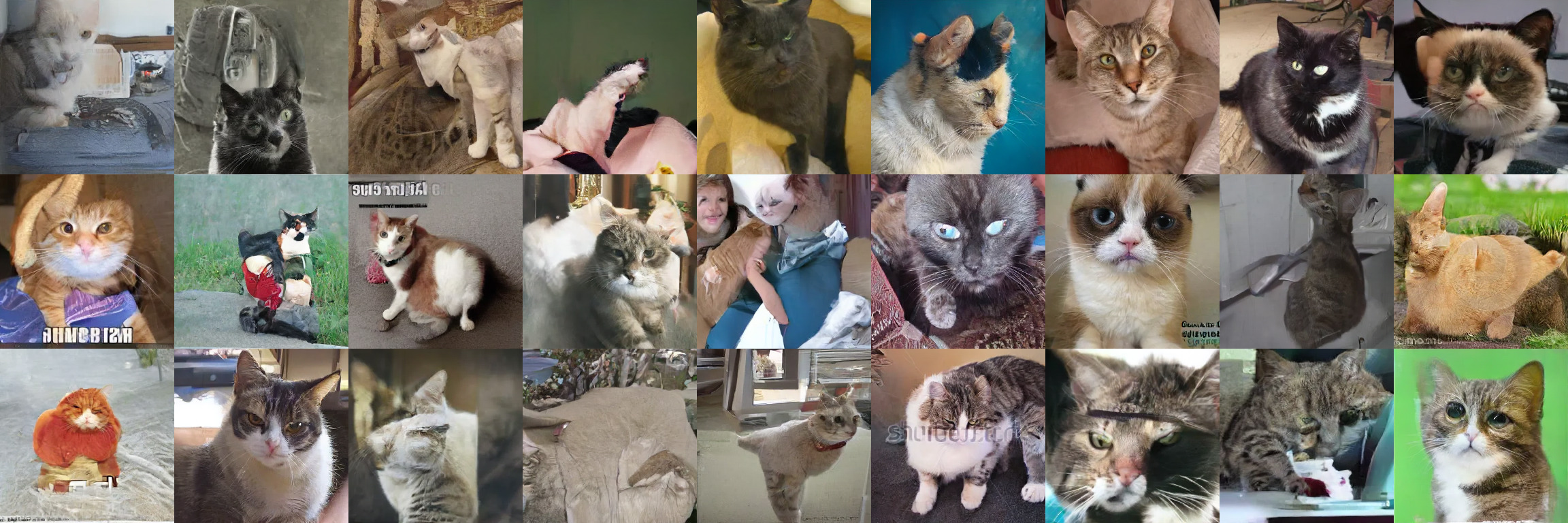}
        \caption{CD with single-step generation (FID=10.99)}
    \end{subfigure}
    \begin{subfigure}[b]{\textwidth}
        \includegraphics[width=\textwidth]{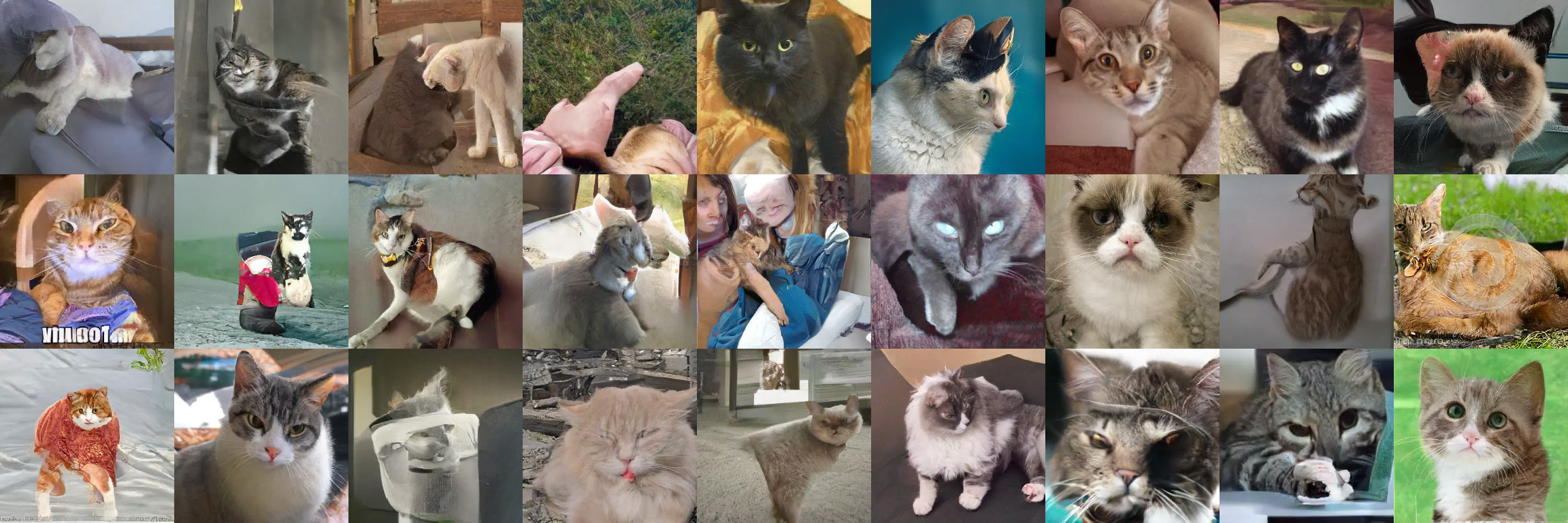}
        \caption{CD with two-step generation (FID=8.84)}
    \end{subfigure}
    \caption{Uncurated samples from LSUN Cat $256\times 256$. All corresponding samples use the same initial noise.}
    \label{fig:cat_full_cd}
\end{figure}

\begin{figure}
    \centering
    \begin{subfigure}[b]{\textwidth}
        \includegraphics[width=\textwidth]{figures/cifar10_edm_full.png}
        \caption{EDM (FID=2.04)}
    \end{subfigure}
    \begin{subfigure}[b]{\textwidth}
        \includegraphics[width=\textwidth]{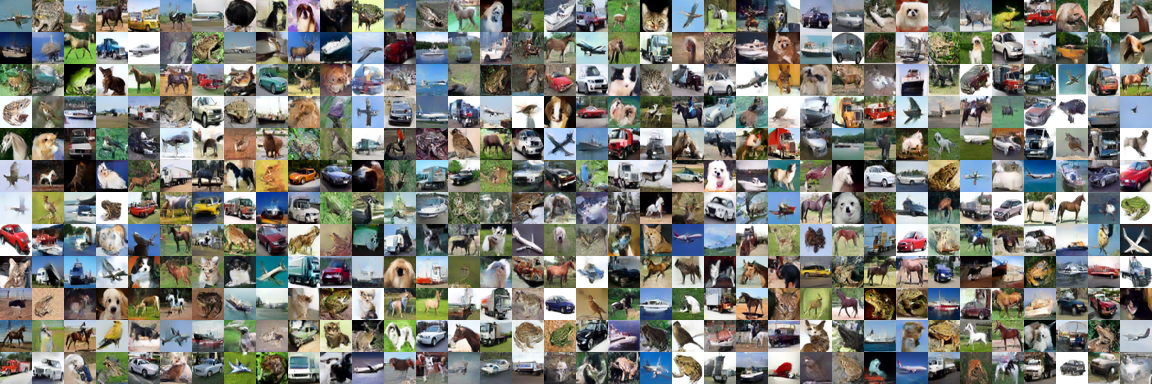}
        \caption{CT with single-step generation (FID=8.73)}
    \end{subfigure}
    \begin{subfigure}[b]{\textwidth}
        \includegraphics[width=\textwidth]{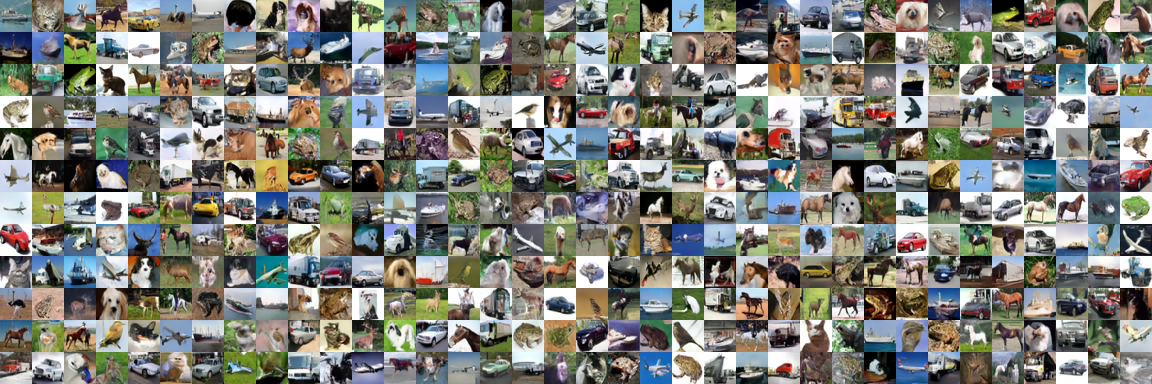}
        \caption{CT with two-step generation (FID=5.83)}
    \end{subfigure}
    \caption{Uncurated samples from CIFAR-10 $32\times 32$. All corresponding samples use the same initial noise.}
    \label{fig:cifar10_full}
\end{figure}

\begin{figure}
    \centering
    \begin{subfigure}[b]{\textwidth}
        \includegraphics[width=\textwidth]{figures/imagenet64_edm_full.jpg}
        \caption{EDM (FID=2.44)}
    \end{subfigure}
    \begin{subfigure}[b]{\textwidth}
        \includegraphics[width=\textwidth]{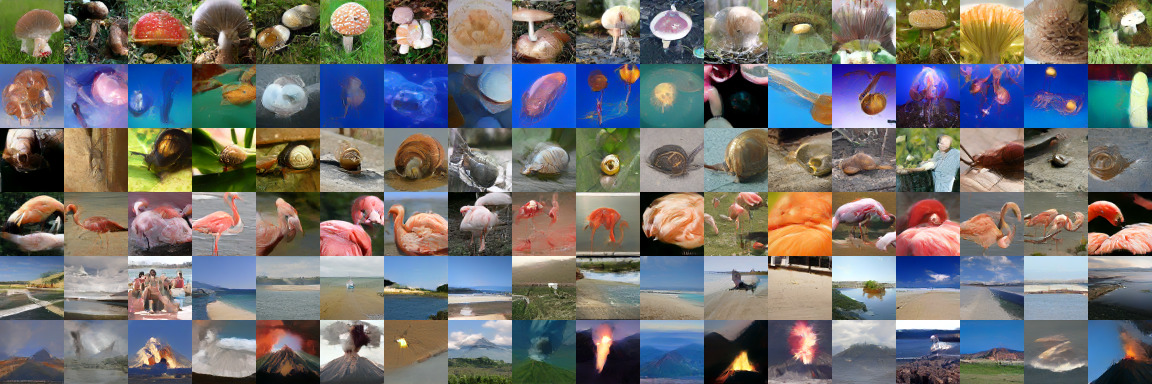}
        \caption{CT with single-step generation (FID=12.96)}
    \end{subfigure}
    \begin{subfigure}[b]{\textwidth}
        \includegraphics[width=\textwidth]{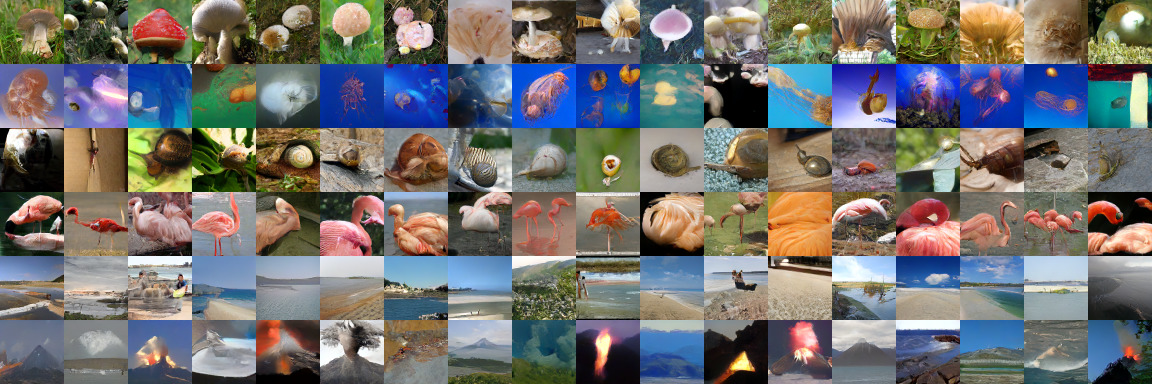}
        \caption{CT with two-step generation (FID=11.12)}
    \end{subfigure}
    \caption{Uncurated samples from ImageNet $64\times 64$. All corresponding samples use the same initial noise.}
    \label{fig:imagenet64_full}
\end{figure}

\begin{figure}
    \centering
    \begin{subfigure}[b]{\textwidth}
        \includegraphics[width=\textwidth]{figures/bedroom_edm_full.jpg}
        \caption{EDM (FID=3.57)}
    \end{subfigure}
    \begin{subfigure}[b]{\textwidth}
        \includegraphics[width=\textwidth]{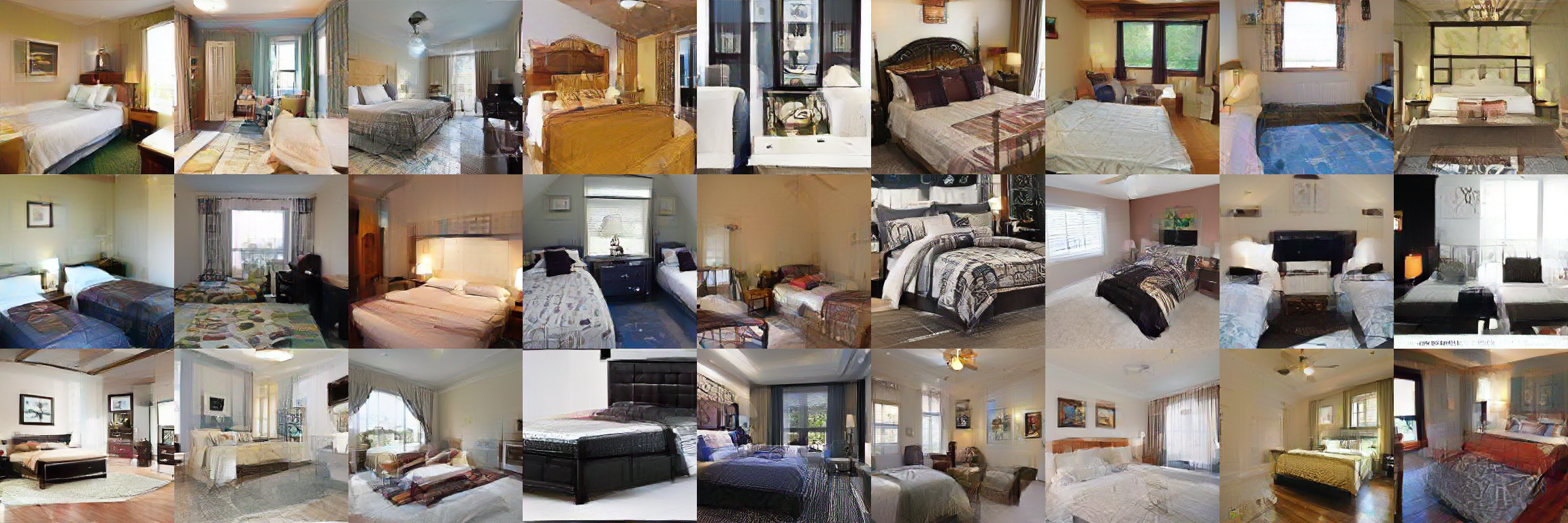}
        \caption{CT with single-step generation (FID=16.00)}
    \end{subfigure}
    \begin{subfigure}[b]{\textwidth}
        \includegraphics[width=\textwidth]{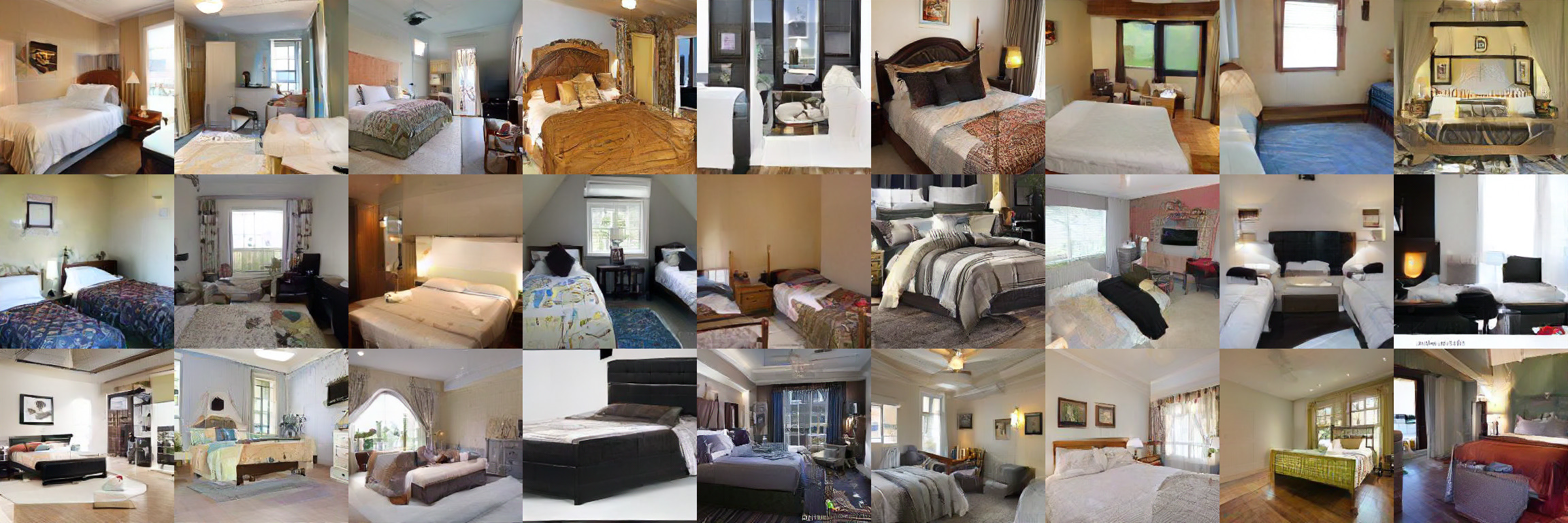}
        \caption{CT with two-step generation (FID=7.80)}
    \end{subfigure}
    \caption{Uncurated samples from LSUN Bedroom $256\times 256$. All corresponding samples use the same initial noise.}
    \label{fig:bedroom_full}
\end{figure}

\begin{figure}
    \centering
    \begin{subfigure}[b]{\textwidth}
        \includegraphics[width=\textwidth]{figures/cat_edm_full_new.jpg}
        \caption{EDM (FID=6.69)}
    \end{subfigure}
    \begin{subfigure}[b]{\textwidth}
        \includegraphics[width=\textwidth]{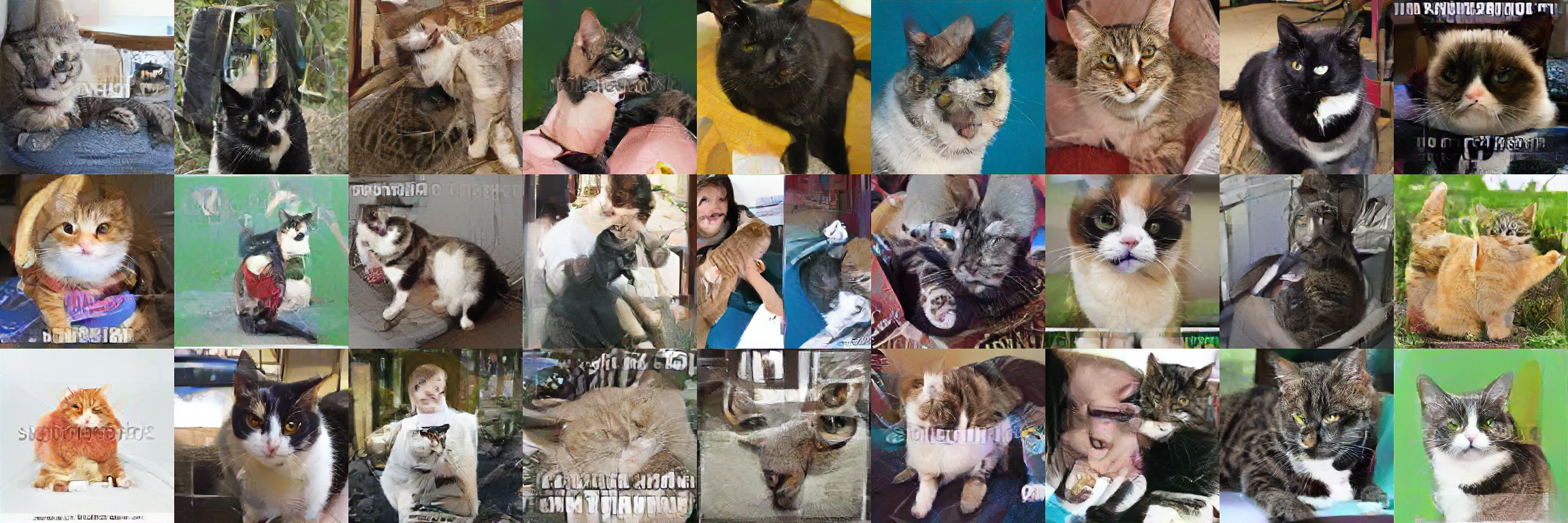}
        \caption{CT with single-step generation (FID=20.70)}
    \end{subfigure}
    \begin{subfigure}[b]{\textwidth}
        \includegraphics[width=\textwidth]{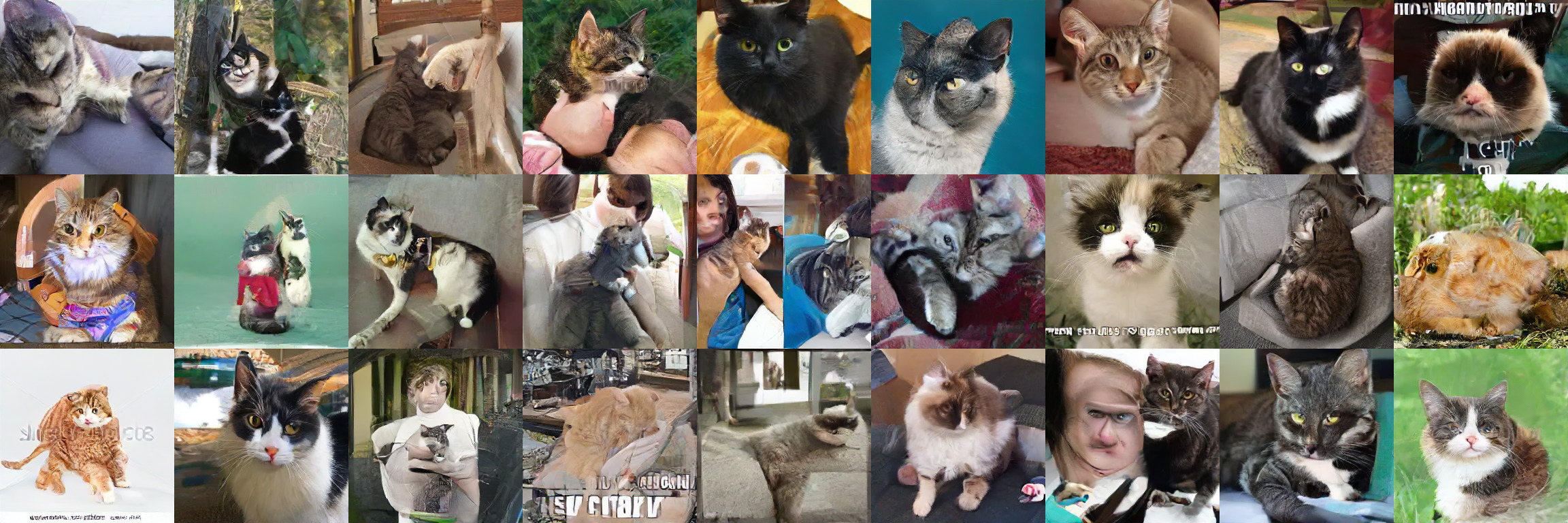}
        \caption{CT with two-step generation (FID=11.76)}
    \end{subfigure}
    \caption{Uncurated samples from LSUN Cat $256\times 256$. All corresponding samples use the same initial noise.}
    \label{fig:cat_full}
\end{figure}

\end{appendices}

\end{document}